\documentclass[runningheads]{llncs}

\usepackage{eccv}
\usepackage{eccvabbrv}

\usepackage{graphicx}
\usepackage{booktabs}
\usepackage{array}

\usepackage{algorithm}
\usepackage{algpseudocode}

\usepackage[square,numbers]{natbib}

\usepackage[accsupp]{axessibility}

\usepackage{subcaption}   

\usepackage[pagebackref,breaklinks,colorlinks,citecolor=eccvblue]{hyperref}

\begin{document}

\title{SAMoE-VLA: A Scene Adaptive Mixture-of-Experts Vision-Language-Action Model for Autonomous Driving} 

\titlerunning{SAMoE-VLA}

\author{Zihan You\inst{1,2}\and
 Hongwei Liu\inst{1,3}\and
Chenxu Dang\inst{1,4} \and
Zhe Wang\inst{1}\and \\
Sining Ang\inst{1,5}\and
Aoqi Wang\inst{1,6}\and
Yan Wang\inst{1}\thanks{Corresponding author.}}
\authorrunning{Zihan You et al.}

\institute{Institute for AI Industry Research (AIR), Tsinghua University
\and
School of Instrument Science and Engineering, Southeast University
 \and
Zhili College, Tsinghua University
 \and
School of Artificial Intelligence and Automation, Huazhong University of Science and Technology
 \and
Department of Automation, University of Science and Technology of China
 \and
Department of Automation, University of Science and Technology Beijing}

\maketitle

\begin{abstract}
Recent advances in Vision-Language-Action (VLA) models have shown promising capabilities in autonomous driving by leveraging the understanding and reasoning strengths of Large Language Models (LLMs).
However, our empirical analysis reveals that directly applying existing token-level MoE mechanisms—which are inherited from LLM architectures—to VLA models results in unstable performance and safety degradation in autonomous driving, highlighting a misalignment between token-based expert specialization and scene-level decision-making.
To address this, we propose SAMoE-VLA, a scene-adaptive Vision-Language-Action framework that conditions expert selection on structured scene representations instead of token embeddings. Our key idea is to derive the MoE routing signal from bird’s-eye-view (BEV) features that encapsulates traffic scene context, enabling scenario-dependent expert weighting and merging tailored to distinct driving conditions. 
Furthermore, to support temporally consistent reasoning across world-knowledge, perception, language, and action, we introduce a Conditional Cross-Modal Causal Attention mechanism that integrates world state, linguistic intent, and action history into a unified causal reasoning process. 
Extensive experiments on the nuScenes open loop planning dataset and LangAuto
closed-loop benchmark demonstrate that SAMoE-VLA achieves state-of-the-art performance, outperforming prior VLA-based and world-model-based approaches with fewer parameters. Our code will be released soon.
  \keywords{Mixture of Expert \and VLA \and Autonomous Driving}
\end{abstract}

\section{Introduction}
\label{sec:intro}
End-to-end autonomous driving, which aims to generate vehicle trajectories directly from raw sensor inputs, has achieved significant progress~\cite{jiang2023vad, hu2023planning, sun2025sparsedrive,liao2025diffusiondrive, xing2025goalflow}.
Supported by large-scale datasets and unified optimization objectives, these approaches produce driving behaviors that are both effective and smooth, achieving near–human-level performance in certain scenarios. However, when facing diverse urban layouts or unexpected interactions, these systems still lack the reasoning flexibility of human drivers~\cite{ chen2024end}.
Vision-Language Models (VLMs)~\cite{guo2025deepseek,bai2023qwen,touvron2023llama,chen2024internvl,team2023internlm,liu2023visual} and Vision-Language-Action (VLA)~\cite{kim2024openvla,intelligence2025pi_,zhou2025opendrivevla,zhou2025autovla} architectures extend this paradigm by introducing semantic reasoning and contextual understanding, enabling proactive decision-making. While most VLAs remain densely parameterized, the growing interest in conditional computation has motivated the adoption of Mixture-of-Experts (MoE) ,which has been proven highly effective in large language models~\cite{guo2025deepseek, shazeer2017outrageously}, to improve scalability and specialization while reducing model parameter size. 

\begin{figure*}[t] 
    \centering
    \includegraphics[width=0.75\textwidth]{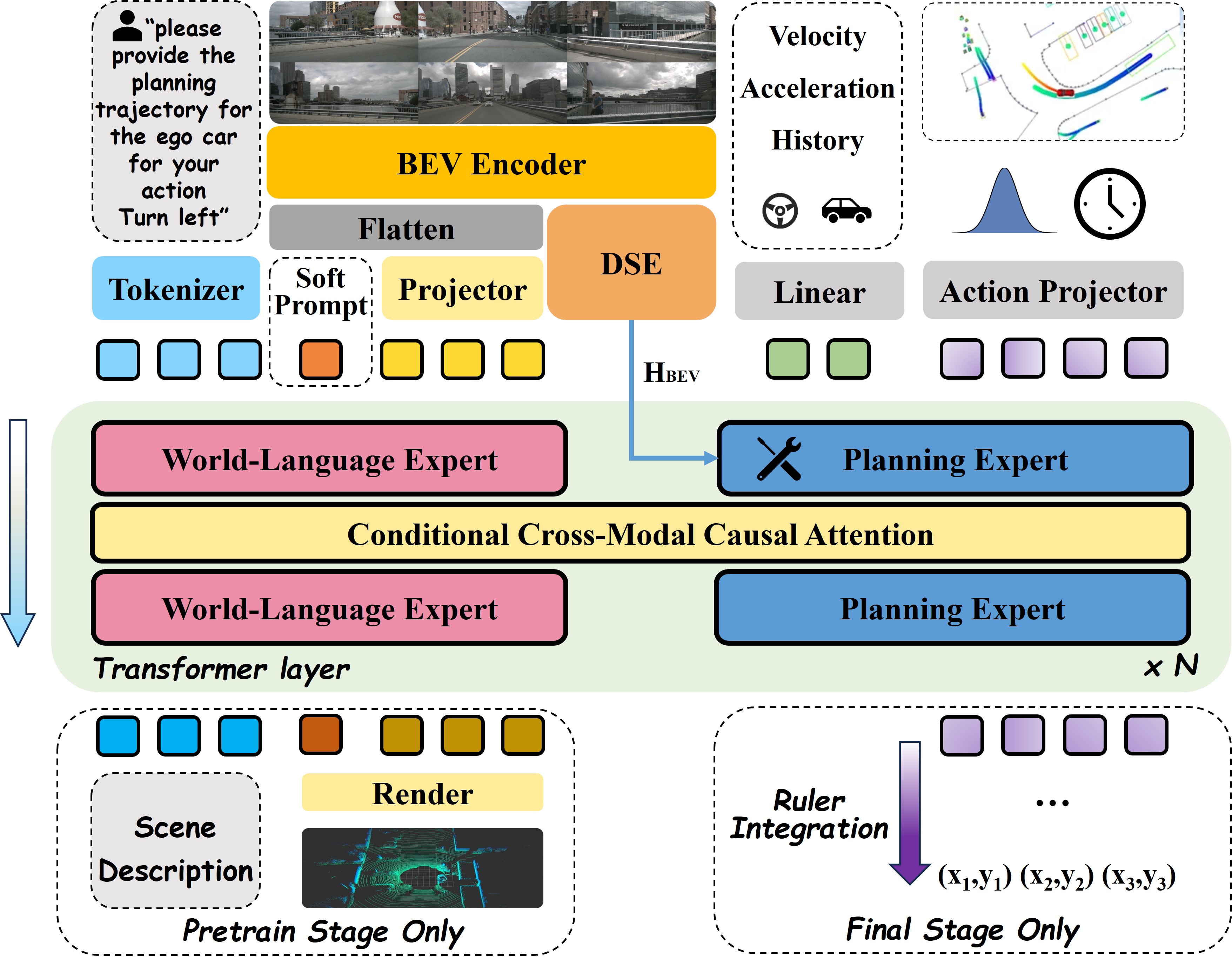} 
    \vspace{-3mm}
    \caption{Overview of our SAMoE-VLA. SAMoE-VLA employs two functional experts.
    \textbf{A World-Language Expert}: This module performs multimodal processing by integrating tokenized human instructions, Bird's-Eye-View (BEV) tokens and soft prompts for world embeddings.
    \textbf{A Planning Expert}: This expert utilizes a structure based on a scene adaptive Mixture-of-Experts (SAMoE) layers routed by the scene representation extracted from Deformable Scene Encoder and receives ego-state tokens and noisy action tokens as its input. Our model unifies these experts through Conditional Cross-Modal Causal Attention(CMCA).}
    \label{fig:pipeline}
\end{figure*}
In language modeling, token-level routing allows fine-grained expert specialization without violating structural constraints. In autonomous driving, however, decision-making is grounded in globally coupled scene semantics and temporally continuous world dynamics. 
Our experiments and empirical analysis in Fig.~\ref{fig:ladar} indicate that token-level routing and discrete expert selection, as used in soft and sparse MoE, can disrupt temporal causality and cross-modal coordination in driving planning. In particular, token-level sparse MoE increases the collision rate by 38.4\% compared to the dense baseline, resulting in inconsistent and unsafe trajectory generation in complex scenes. These findings suggest a granularity mismatch between token-level expert routing and scene-level embodied decision-making.
Moreover, previous explorations for autonomous driving~\cite{yang2025drivemoe,xu2025mose}  rely on manually defined router supervision, specialist annotations, and complex training regimes. While effective in controlled settings or benchmarks,  such designs couple expert behavior with predefined skill partitions, which may limit scalability and adaptability to diverse interaction patterns.

From this observation, we derive two design principles for MoE in  autonomous driving:
(i) expert routing should be conditioned on structured scene context rather than individual tokens;
(ii) cross-modal reasoning must preserve temporal causal consistency across world state, language intent, and action history.

Based on these principles, we propose SAMoE-VLA, a soft-weighted Mixture-of-Experts Vision-Language-Action model for scene-adaptive autonomous driving. Our Scene-Adaptive MoE(SA-MoE) performs differentiable expert fusion guided by bird’s-eye-view (BEV) scene representations that capture traffic geometry and interaction structure. This design enables coherent expert specialization at the scene level while maintaining smooth policy evolution. In addition, we introduce a Conditional Cross-Modal Causal Attention (CMCA) mechanism that unifies world, language, and planning representations under a temporally aligned causal mask, enabling consistent reasoning across modalities and time.

Experiments both on the nuScenes open loop planning dataset and LangAuto close loop benchmark show that SAMoE-VLA achieves state-of-the-art results while using  fewer parameters as shown in figure~\ref{fig:paramscla}, validating the benefits of BEV-based scene adaptive routing, soft-weight expert fusion, and causal world-language-action unification. 
We summarize our contributions as follows: 

\begin{itemize}
    \item We present SAMoE-VLA, a Mixture-of-Experts Vision-Language-Action framework that unifies the world, language, and planning spaces via a Conditional Cross-Modal Causal Attention mechanism.
    \item We introduce a BEV-guided Scene Adaptive MoE, routed by spatial traffic geometry captured from a lightweight Deformable Scene Encoder, enabling differentiable scene-aware expert fusion.
    \item Comprehensive experiments both on the nuScenes open loop planning dataset and LangAuto close loop benchmark demonstrate that SAMoE-VLA achieves state-of-the-art (SOTA) performance over existing VLAs and world model approaches.
\end{itemize}

\vspace{-6mm}
\section{Related Work}
\label{sec:related}
\vspace{-2mm}
\subsection{VLA for End-to-End Autonomous Driving}
\vspace{-1mm}
Recent progress in large vision-language models (VLMs) has boosted their use in end-to-end autonomous driving. Zhou et al.\cite{zhou2025opendrivevla} uses pre-trained VLMs for 3D-based driving actions with hierarchical alignment, leading in open-loop planning on nuScenes. Zhou et al.\cite{zhou2025autovla} adds action tokens to VLMs for policy learning, using adaptive reasoning and RL fine-tuning (GRPO) for better results. ORION~\citep{fu2025orion} includes a QT-Former for history, an LLM for reasoning, and a planner for actions. jiang et al.\cite{jiang2024senna} decouples VLM decisions (Senna-VLM) from E2E predictions (Senna-E2E). Li et al.\cite{li2025drive} and Li et al.\cite{li2025recogdrive} apply RL to VLMs for better reasoning, using rewards from trajectories and actions.
Advanced architectures like mixture-of-experts (MoE) handle variety. Yang et al.\cite{yang2025drivemoe} adds MoE to vision and actions for multi-view and rare cases. Instead of selected views input or relaying on predefined router labels, our model utilizes the BEV as input to provide global perspective, employs world modeling through the proposed cross attention mechansim, and softly weight and fuse the scene adaptive MoE without any pre-labeled router.
\vspace{-2mm}
\subsection{Mixture of Experts}
\vspace{-1mm}
Mixture-of-Experts (MoE)~\cite{cai2025survey} architectures enable efficient and scalable model capacity expansion by employing a gating function that adaptively routes or weights expert activations. Shazeer et al.\cite{shazeer2017outrageously} pioneered the introduction of the sparsely-gated Mixture-of-Experts layer, which significantly reduce resource consumption. Later extensions~\cite{zoph2022st,lepikhin2020gshard,riquelme2021scaling,lewis2021base,ma2018modeling}, such as Switch Transformer~\cite{fedus2022switch} and GLaM~\cite{du2022glam}, further refine this idea by simplifying routing, adding load-balancing losses and scaling to trillion-parameter regimes. 
Despite their success, sparse and Non-differentiable gating raises training instability~\cite{lewis2021base,rajbhandari2022deepspeed} and token dropping~\cite{antoniak2024mixture,fedus2022switch,zoph2022st}. SoftMOE~\cite{puigcerver2023sparse} illustrate soft assignment through mixing tokens to tackle those obstacles. Instead of merging tokens, SMEAR~\cite{muqeeth2023soft} and Lory~\cite{zhong2024lory} introduce expert merging to address the auto-regressive limitations often associated with SoftMOE. 
 In this work, we aim to propose a scalable and unrestricted Mixture-of-Experts (MoE) routing mechanism specifically tailored for autonomous driving.
\section{Method}
\vspace{-2mm}
In this section, we present the architecture and design details of our proposed SAMoE-VLA framework. We begin by describing the overall model architecture, followed by a detailed introduction to our Conditional Cross-Modal Causal Attention mechanism and soft weighted planning experts. Finally, we define our training objective and the specific training stages employed.
\vspace{-4mm}
\subsection{Method Architecture}
\vspace{-2mm}
Based on a pretrained large world vision language model~\cite{zhou2025hermes} and post-trained flow matching-based planning expert, our model unifies world space, language space and action space through Conditional Cross-Modal Causal Attention(CMCA) across a mixture of functional experts.  
SAMoE-VLA, as depicted in Figure \ref{fig:pipeline}, comprises two main specialized functional experts across the transformer backbone: (1) A world-language expert that processes language tokens from human instruction, BEV tokens generated from BEV encoder~\cite{li2024bevformer} and groups of soft prompts derived from BEV features and combined with learnable embeddings; (2) a planning expert structured with Mixture-of-Experts (MoE) layers that takes ego state tokens and noisy action tokens as inputs.
Instead of relying on a single feed-forward network (FFN) with limited representational capacity to capture semantic reasoning, 3D geometry, and control information simultaneously, the use of specialized experts prevents the formation of overly blended or averaged representations.

\textbf{World-Language Expert} takes Bird’s-Eye View (BEV) representation as input and provides powerful spatial semantic understanding, instruction following, and reasoning capabilities.
While previous works either restrict the input to particular camera viewpoints \cite{zhou2025autovla} or employ routers to select input views\cite{yang2025drivemoe}, which introduces security vulnerabilities, the use of BEV could provide a global perspective and avoid security challenges. This expert also accepts soft prompts as world tokens and enrich their representation to forecast future scenes 3D point clouds. The adoption of a World Vision-Language Model, in lieu of a common VLM, offers advantages by conferring the ability to predict future states and mitigating information losses typically associated with projection layer~\cite{li2025lost}. This capability is crucial for enabling the model to gain a comprehensive understanding of the generated driving behaviors and the subsequent consequences these actions may entail. Additional implementation details are provided in Appendix B.

\textbf{Planning Expert} structured with SA-MoE generates the planning trajectories via a flow matching mechanism, which further supervises the planning expert by requiring it to predict the velocity field that transports noisy intermediate actions toward ground-truth trajectories.  Further details are provided in Section~\ref{DSE}, Section~\ref{train_stages} and Appendix A.
\vspace{-4mm}
\subsection{Conditional Cross-Modal Causal Attention}
\vspace{-1mm}
Conditional Cross-Modal Causal Attention, also known as CMCA, is designed to enable temporally causal action generation while conditioning on heterogeneous contextual modalities, including world, language, and ego-state representations. CMCA treats conditioning tokens as a temporally static and globally visible context that does not participate in the autoregressive update dynamics of action tokens. This asymmetric information flow ensures that contextual representations serve as stable memory sources rather than evolving sequence elements during action generation.
Our conditional formulation decomposes the token sequence into two disjoint partitions: the \emph{conditioning context} (BEV tokens, world tokens, language tokens, ego state tokens) and the \emph{action tokens} to be generated.
Let $\mathbf{A} \in {0,1}^{B\times L\times L}$ denote the binary attention mask, where $\mathbf{A}_{b,i,j}=1$ indicates that the $i$-th token can attend to the $j$-th token in the $b$-th sequence.
We define the mask as
\begin{equation}
\mathbf{A}_{b,i,j} =
\begin{cases}
1, & j \in \mathcal{C}, \\
1, & i,j \in \mathcal{A} \text{ and } j \le i, \\
0, & \text{otherwise.}
\end{cases}
\end{equation}
where $\mathcal{C}$ indexes the conditioning tokens shared across all autoregressive steps.
This design allows every action token to attend to all conditioning tokens and its own historical actions while conditioning tokens neither attend to action tokens nor update through autoregressive decoding steps, thus realizing a temporally causal yet contextually grounded generation process.


To efficiently compute attention under structured visibility constraints, we integrate the masking operation directly into the similarity matrix prior to normalization. Specifically, the masked attention output is formulated as:
\vspace{-1mm}
\begin{equation}
\mathbf{O}
=
\operatorname{softmax}
\Big(
\mathbf{S}
+
(1 - \mathbf{A}) \cdot (-M)
\Big)
\, \mathbf{V},
\label{eq:masked_attention}
\end{equation}
\vspace{-1mm}
where $\mathbf{A}_{b,i,j} \in \{0,1\}$ denotes the structured visibility mask, and $M \rightarrow +\infty$ is a sufficiently large constant that effectively suppresses disallowed positions. Under this formulation, entries with $\mathbf{A}_{b,i,j}=1$ retain their original similarity scores, while entries with $\mathbf{A}_{b,i,j}=0$ are shifted toward negative infinity prior to the softmax operation, ensuring zero attention weight after normalization. $\mathbf{V}$ is represented as the concatenation results of each value states process from each domain expert. This causal alignment further ensures compatibility with the flow matching objective, which requires temporally coherent intermediate states to learn a smooth velocity field for trajectory refinement.

\vspace{-4mm}
\subsection{Scene Adaptive MoE with Deformable Scene Encoder }
\vspace{-2mm}
\label{DSE}
\begin{figure*}[t] 
    \centering
    \includegraphics[width=1.0\textwidth]{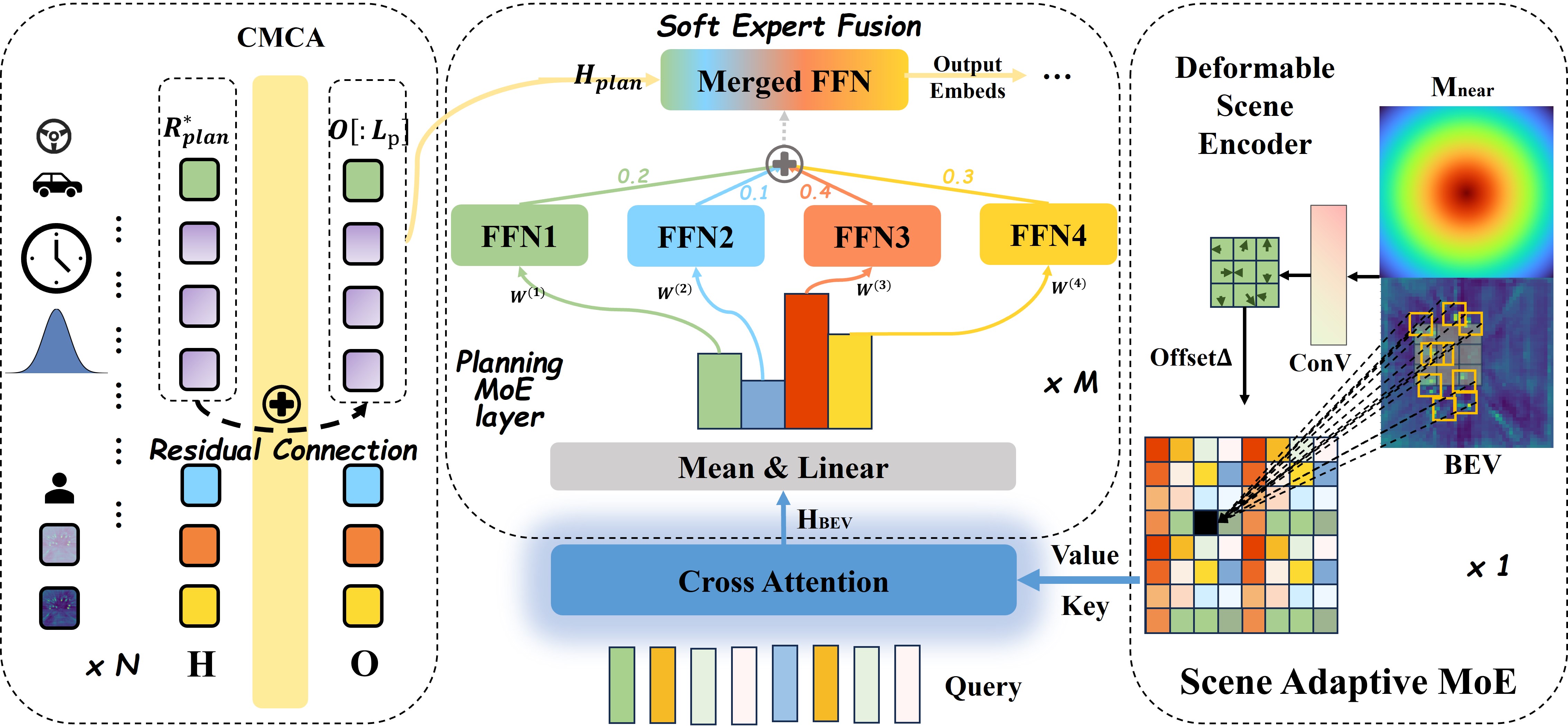} 
    \vspace{-1mm}
    \caption{Overview of our Scene Adaptive MoE guided by Deformable Scene Encoder. SA-MoE is the layer of our proposed planning expert shown in figure~\ref{fig:pipeline}. BEV hidden is calculated only once during inference, while expert weights need to be calculated in every layer. Each layer has its own Linear head in Eq. (8) so weights vary per layer.
}
    \label{fig:DSE}
    \vspace{-6mm}
\end{figure*}
To endow the planning expert with adaptive and specialized decision-making abilities across diverse driving situations, 
We propose a Planning Expert based on a Scene-Adaptive Mixture-of-Experts (MoE) architecture. 
Unlike conventional dense FFNs that average diverse behavioral patterns into a single representation, 
Our design decomposes the policy into multiple sub-experts, each specializing in a subspace of driving behaviors, and the planning expert's parameters during inference are dynamically weighted and fused based on different driving scenarios without any predefined labels.
The key contribution lies in a soft BEV-guided scene adaptive expert routing mechanism which is both fully differentiable and dynamic according to the current traffic context.

\textbf{Scene Adaptive Routing via Deformable Scene Encoding.} Given a BEV feature map 
$\mathbf{F}_{\text{BEV}} \in \mathbb{R}^{B \times H \times W \times C}$, 
we extract routing logits using a lightweight \textbf{D}eformable \textbf{S}cene \textbf{E}ncoder, 
termed \textbf{DSE}. 
This module augments BEV features with a distance-guided deformable convolution
to capture geometric and contextual cues of current traffic scenarios. Conventional convolutions apply a fixed kernel at each spatial location, assuming uniform information density across the feature map~\cite{dai2017deformable}. Such fixed sampling fails to capture the highly anisotropic spatial relevance present in driving scenes, so these
critical features near the ego vehicle are sampled with the same density as far-away background pixels. This leads to semantic dilution and inefficient feature utilization when routing MoE experts. To solve this problem, we demonstrate the deformable encoding mechanism as follows.

Given $\mathbf{F}_{\text{BEV}} \in \mathbb{R}^{B \times C \times H \times W}$, we generate spatial offsets $\mathbf{\Delta} \in \mathbb{R}^{B \times 2K^2 \times H \times W}$ 
through a learned predictor conditioned on both BEV features and a normalized distance map $\mathbf{M_{near}}$:

\begin{equation}
M_{\text{near}}(i,j) = 1 - 
\frac{\mathrm{dist}\!\left( (i,j), (c_x,c_y) \right)}
{\mathrm{dist}_{\max}} \in [0,1],
\end{equation}
where $(c_x,c_y)$ is the ego-center of the BEV and K is the size of the convolutional kernel used in the deformable convolution. $\mathbf{M_{near}}$ attains higher values near the ego location, thereby serving as a near-field attention prior.

The offset field for deformable convolution is predicted by:
\begin{equation}
\Delta = \mathcal{P}\left( [\mathbf{F}_{\text{BEV}}; M_{\text{near}}] \right),
\quad
\Delta \in \mathbb{R}^{B \times 2K^2 \times H_c \times W_c},
\end{equation}
where $\mathcal{P}(\cdot)$ is a small convolutional predictor initialized to zeros which ensures that the model begins with an identity convolution and gradually learns meaningful offset patterns during training. These learnable offsets allow the receptive field to dynamically adapt to scene geometry and local feature saliency,
enabling the network to focus on semantically informative regions.

The deformable convolution operation aggregates features under the learned spatial shifts:
\begin{equation}
\mathbf{S}_{\text{BEV}} = \text{Norm}(\text{Flatten}(\text{DeformConv}(\mathbf{F}_{\text{BEV}}, \mathbf{\Delta}))),
\end{equation}
capturing fine-grained geometry and semantic layouts across the driving scene.  
We then flatten and normalize the features to obtain a sequence representation 
$\mathbf{S}_{\text{BEV}} \in \mathbb{R}^{B \times N \times C}$ . 

To extract routing-aware embeddings, 
We introduce learnable query vectors 
$\mathbf{Q} \in \mathbb{R}^{1 \times T \times C}$ 
and apply multi-head cross attention:
\begin{equation}
\mathbf{H}_{\text{BEV}} = \text{MHA}(\mathbf{Q}, \mathbf{S}_{\text{BEV}}, \mathbf{S}_{\text{BEV}}),
\end{equation}
yielding the BEV-guided hidden states
$\mathbf{H}_{\text{BEV}} \in \mathbb{R}^{B \times T \times C}$, 
which summarize traffic semantics and spatial configurations relevant to driving decisions.
For the entire planning expert, $\mathbf{H}_{\text{BEV}}$  is computed only once through DSE outside the transformer layers, thus incurring almost no additional computation time.

\textbf{Soft Expert Weighted and Fused.} Previous works focus on LLM extending MoE by selecting tokens to the top-k expert or softly weighted the tokens to each experts\cite{puigcerver2023sparse}. While experiments in Figure~\ref{fig:ladar} suggest these mechanism are ill-suited for autonomous driving as they introduce the higher probability of collision. 
Instead of token-level specialization, we employ a soft weighted strategy that softly weights and fuses the experts. To be more specific, we calculate the weight of each expert routed by the signals encoded in DSE and merge these experts into a single FFN. Given $\mathbf{H}_{\text{BEV}}$ in each MOE layers, 
we compute routing logits as:
\begin{equation}
\mathbf{r} = \text{Linear}(\text{MeanPool}(\mathbf{H}_{\text{BEV}})) 
\in \mathbb{R}^{B \times E},
\end{equation}
where $E$ is the number of experts.  
Softmax normalization yields the expert weights:
\begin{equation}
\boldsymbol{\pi} = \text{softmax}(\mathbf{r}), 
\quad 
\pi_e = \frac{\exp(r_e)}{\sum_{e'} \exp(r_{e'})}.
\end{equation}
 Linear layer enables the expert weights in each MoE layer to be guided by not only current traffic scenario but also allowing flexible weighting based on the layer's position.
These weights determine the contribution of each expert to the final action embedding.

Each expert $e \in \{1, \dots, E\}$ 
comprises a lightweight feed-forward transformation parameterized by 
$\{ \mathbf{W}_1^{(e)}, \mathbf{W}_2^{(e)}, \mathbf{W}_3^{(e)} \}$, 
analogous to the MLPs in transformer blocks.  
For attention output $\mathbf{O} \in \mathbb{R}^{B \times L \times C}$ generated from CMCA as shown in formula~\ref{eq:masked_attention},  

The Hidden states of planning expert is computed as:

\begin{align}
\mathbf{H}_{\mathrm{plan}} &= \mathbf{O}{[:L_p]} + \mathbf{R_{\mathrm{plan}}^*}
\end{align}
where $L_p$ serves as the sum length of state tokens and action tokens for planning, $\mathbf{O}{[:L_p]}$ serves as the planning part of the attention output, $\mathbf{R_{\mathrm{plan}}^*}$ represents the residual connection,$\mathbf{H}_{\mathrm{plan}}$ represents the hidden states in planning expert.

We then calculate the weight of each expert and fuse these experts into a single FFN:
\begin{align}
\tilde{\mathbf{W}}_i &= \sum_{e=1}^{E} \pi_e\, \mathbf{W}_i^{(e)}, 
\quad i \in \{1,2,3\} 
\end{align}
The merged FFN $\tilde{\mathbf{W}}_i$ then replaces the original FFN layer and calculates the output embeds with the input of $\mathbf{H}_{\mathrm{plan}}$:
\begin{align}
\mathbf{H}_{\mathrm{plan}}' &= 
\sigma(\mathbf{H}_{\mathrm{plan}}\tilde{\mathbf{W}}_1)
\odot (\mathbf{H}_{\mathrm{plan}}\tilde{\mathbf{W}}_3)
\tilde{\mathbf{W}}_2
\end{align}
where $\sigma(\cdot)$ denotes the activation function SiLU
and $\odot$ represents element-wise multiplication. 
Further theoretical analysis and mathematical proofs, including representation capacity, temporal causality stability, convergence advantages, and gradient stability of SAMoE, is provided in Appendix E. Computational efficiency and deployment analysis is presented in Appendix I, with corresponding pseudocode given in Appendix F.
\vspace{-4mm}
\subsection{Training Stages and Objective }
\vspace{-1mm}

\label{train_stages}
We train SAMoE-VLA in two main stages with different training objectives. For the pretraining stage, the planning expert is frozen while the world-language expert is exclusively trained. This training is executed by masking all state and action tokens and proceeding through the sequential sub-steps outlined in \cite{zhou2025hermes}.  The overall training objective for the pretraining stage combines language modeling loss for unified scene understanding and the point cloud reconstruction loss for world prediction. 
The flow-matching loss \(\mathcal{L}_{\text{flow}}(\theta)\)used in final stage teaches the model to estimate velocity fields, guiding noisy action interpolations toward true action paths, all based on a multimodal setup \(\mathcal{C}\) that includes BEV tokens, language inputs, world details, and ego-state data. 
It supports creating smooth distributions of actions, boosting the model's skill in delivering accurate, situation-specific forecasts for step-by-step choices in changing settings.
The final training stage is partitioned into two distinct steps. In the first step, the model is trained without the Mixture-of-Experts (MoE) layer to ensure training stability. For the second step, the weights established during this initial phase are directly used to initialize the individual sub-expert weights within the soft-weighted MoE layer of the planning expert. We provide further training details and pseudo-code in Appendix D.1 and Appendix F.
\vspace{-2mm}

\vspace{-3mm}
\section{Experiments}
\vspace{-3mm}
\subsection{Datasets and Implementation Details}
\vspace{-1mm}

We evaluate SAMoE-VLA on challenging nuScenes~\cite{caesar2020nuscenes} dataset which is a large-scale, comprehensive collection comprising 1,000 diverse driving scenes, recorded across different cities, and featuring a total of 280,000 meticulously annotated frames. In addition to nuScenes, we conduct closed-loop evaluation on the LMDrive dataset and the LangAuto benchmark\cite{shao2024lmdrive} built on the CARLA simulator\cite{dosovitskiy2017carla}, which provide approximately 64K multimodal instruction–sensor–control clips collected across 8 towns under diverse weather and lighting conditions. LangAuto evaluates language-conditioned closed-loop driving by testing an agent’s ability to follow natural-language navigation and notice instructions across varying route lengths and adversarial settings.



The SAMoE-VLA architecture integrates OpenCLIP-ConvNext\cite{cherti2023reproducible} as the base image encoder, employing a CPFPN\cite{ma2023mrisnet} to generate multi-scale feature maps. Subsequently, BEVFormer is adopted as the Bird's-Eye-View (BEV) encoder, which processes a single input frame and encodes the resultant BEV features to a dimension of 256 and a spatial resolution of $\mathbf{200 \times 200}$. Further details about world-language expert align with \cite{zhou2025hermes}. The model is trained using 8 NVIDIA A800 GPUs. 
For the planning experts, the model parameters are initialized from the pre-trained InternVL2-2B.
Motivated by the design in \cite{fedus2022switch}, We replace the feed-forward layer with our proposed SA-MoE layer at every fourth Transformer layer. 
Action tokens are embedded through a linear projection, fused with sinusoidal time embeddings (with minimum period $4\times10^{-3}$ and maximum period $4.0$), and subsequently processed by a lightweight MLP to capture temporal dependencies.
During the final stages, we use full parameter fine-tuning instead of Lora with the learning rate of 1e-4. 
For flow matching, Gaussian noise $\boldsymbol{\varepsilon} \sim \mathcal{N}(0, \mathbf{I})$ and a time coefficient $\tau \sim \text{Beta}(1.5, 1.0)$ (clamped to $[0.001, 0.999]$) are sampled to form the noisy action representation. During inference, the ODE steps to produce the trajectories is 10. Additional implementation details about flow matching are provided in Appendix A.
\vspace{-4mm}
\subsection{Main Results}
\vspace{-1mm}

As shown in Table~\ref{tab:planning_comparison}, SAMoE-VLA achieves the lowest average L2 error (0.29 m) under our open-loop evaluation protocol, outperforming the previous VLA state of the art and the world model–based approach PreWorld by a $7\%$ relative reduction. While SAMoE-VLA is not consistently the best at the 1-second horizon, this behavior reflects its planning-oriented design, which prioritizes scene-level consistency and long-horizon feasibility over short-term motion fitting. Consequently, its advantages become increasingly evident at longer horizons. In particular, SAMoE-VLA achieves a 3-second L2 error of 0.35 m, which is 15\% lower than the best VLA baseline and 5\% lower than the strongest world model–based method, demonstrating its effectiveness in mitigating long-horizon error accumulation. Most VLA-based approaches outperform traditional end-to-end planners, while world model–based methods further improve long-horizon accuracy. Importantly, SAMoE-VLA maintains a favorable accuracy–safety trade-off, achieving the best average collision rate of 0.26\%, outperforming world model–based and traditional planning methods. Qualitative results are provided in Appendix~H.
\begin{table*}[t]
\centering
\footnotesize
\caption{Comparison of planning performance on the nuScenes dataset. L2 Error is measured in meters and Collision Rate in percentage. Lower is better for both metrics. SAMoE-VLA reports total parameter count.}
\setlength{\tabcolsep}{4pt}

\begin{tabular}{lcccc@{\hspace{8pt}}cccc}
\toprule

\textbf{Method} 
& \multicolumn{4}{c}{\textbf{L2 (m)}} 
& \multicolumn{4}{c}{\textbf{Collision (\%)}} \\

\cmidrule(lr){2-5} \cmidrule(lr){6-9}

& 1s & 2s & 3s & Avg & 1s & 2s & 3s & Avg \\

\midrule
\multicolumn{9}{c}{\textbf{Traditional End2End \& World Model-based}} \\
\midrule

Ego-MLP\cite{li2024ego} 
& 0.46 & 0.76 & 1.12 & 0.78 
& 0.21 & 0.35 & 0.58 & 0.38 \\

UniAD\cite{hu2023planning} 
& 0.20 & 0.42 & 0.75 & 0.46 
& 0.02 & 0.25 & 0.84 & 0.37 \\

VAD\cite{jiang2023vad} 
& 0.17 & 0.34 & 0.60 & 0.37 
& 0.04 & 0.27 & 0.67 & 0.33 \\

Drive-WM\cite{wang2024driving} 
& 0.43 & 0.77 & 1.20 & 0.80 
& 0.10 & 0.21 & 0.48 & 0.26 \\

Drive-occWorld\cite{yang2025driving} 
& 0.17 & 0.31 & 0.49 & 0.32 
& 0.02 & 0.24 & 0.62 & 0.29 \\

PreWorld\cite{li2025semi} 
& 0.22 & 0.30 & 0.40 & 0.31 
& 0.21 & 0.66 & 0.71 & 0.53 \\

\midrule
\multicolumn{9}{c}{\textbf{VLA \& VLM-based}} \\
\midrule

OpenEMMA (7B)\cite{xing2025openemma} 
& 1.45 & 3.21 & 3.76 & 2.81 
& -- & -- & -- & -- \\

ORION (7B)\cite{fu2025orion} 
& 0.17 & 0.31 & 0.55 & 0.34 
& 0.05 & 0.25 & 0.80 & 0.37 \\

Omnidrive (7B)\cite{wang2024omnidrive} 
& 0.14 & 0.29 & 0.55 & 0.33 
& 0.00 & 0.13 & 0.78 & 0.30 \\

EMMA\cite{hwang2024emma} 
& 0.14 & 0.29 & 0.54 & 0.32 
& -- & -- & -- & -- \\

\textbf{SAMoE-VLA (3.6B)} 
& 0.24 & \textbf{0.27} & \textbf{0.35} & \textbf{0.29} 
& 0.01 & 0.18 & 0.60 & \textbf{0.26} \\

\bottomrule
\end{tabular}

\label{tab:planning_comparison}

\vspace{-3mm}
\end{table*}

\begin{table}[ht]
\label{tab:planning_comparison_langauto}
\centering
\caption{Overall performance comparison on the close loop LangAuto benchmark under three evaluation settings:LangAuto, LangAuto-Short, and LangAuto-Tiny. Higher values indicate better performance.DS (Driving Score) reflects the overall driving quality by aggregating task completion and safety penalties; IS (Infraction Score) measures compliance by penalizing traffic violations; RC (Route Completion) quantifies the percentage of the assigned route successfully completed. SAMoE-VLA reports total parameter count.}
\small
\begin{tabular}{l|ccc|ccc|ccc}
\toprule
Method & \multicolumn{3}{c|}{LangAuto} & \multicolumn{3}{c|}{LangAuto-S} & \multicolumn{3}{c}{LangAuto-T} \\
\cmidrule(lr){2-4} \cmidrule(lr){5-7} \cmidrule(lr){8-10}
 & DS    & RC    & IS   & DS    & RC    & IS   & DS    & RC    & IS   \\
\midrule
LMdrive(7B)\cite{shao2024lmdrive}   & 36.2  & 46.5  & 0.81 & 50.6  & 60.0  & 0.84 & 66.5  & 77.9  & 0.85 \\
AD-H(7B)\cite{zhang2024ad}  & 44.0  & 53.2  & 0.83  & 56.1  & 68.0  & 0.78  & 77.5  & 85.1  & 0.91 \\
DSDrive(1B)\cite{liu2505distilling}  & 29.5  & 39.3  & 0.77  & 62.0  & 76.1  & 0.81  & 60.6  & 72.5  & 0.84 \\
BEVDriver(7B)\cite{winter2025bevdriver} & 48.9  & 59.7  & 0.82 & 66.7  & 77.8  & 0.87 & 70.2  & 81.3  & 0.87 \\

\textbf{SAMoE-VLA}(3.6B)    & \textbf{51.4} & \textbf{63.5}  & 0.81 & \textbf{69.5}  & \textbf{80.9}  & 0.86 & \textbf{79.5} & \textbf{86.4}  & \textbf{0.92} \\
\bottomrule
\end{tabular}
\vspace{-4mm}
\end{table}
Table 2 summarizes the closed-loop performance on LangAuto\cite{shao2024lmdrive} under three evaluation settings. On the full LangAuto benchmark, SAMoE-VLA achieves the best Driving Score (51.4) and Route Completion (63.5), outperforming all 7B baselines while maintaining competitive instruction alignment. The advantage persists in LangAuto-Short, where SAMoE-VLA again ranks first in DS (69.5) and RC (80.9), demonstrating stronger short-horizon planning stability. Under LangAuto-Tiny, it attains the highest Instruction Score (0.92) while achieve the best Driving Score of 79.5, indicating robust language grounding in simplified scenarios. Despite its smaller parameter scale, SAMoE-VLA consistently delivers superior or balanced performance across execution quality and instruction compliance.

\vspace{-4mm}
\subsection{Ablation Study}
\vspace{-2mm}

\label{ex:comparsion_between_Existing}
\textbf{Comparison to Existing MoE Mechanism.} To examine the impact of different existing MoE Mechanism, we adopt five distinct baseline architectures.
These baselines are systematically categorized into three groups based on their specialization approach: the non-MoE version, token-level specialization \cite{shazeer2017outrageously,muqeeth2023soft}, and expert-weighted specialization. As shown in figure~\ref{fig:ladar}, we find that deploying our soft-weighted SAMoE, which performs the best, could significantly decrease L2 error from 0.32 to 0.29 while nearly maintaining a moderate collision rate(0.25) compared with dense counterparts(0.25).  Although the sparse routing mechanism could decrease L2 error from 0.32 to 0.30, it increases 38.4\% of collision rate, thus raises a security challenge. Also, we observe that applying soft routing to action tokens severely degrades model performance, with the average L2 error increasing to 0.46 and the average collision rate rising to 0.57.

\begin{figure}[t]
    \centering
    \begin{minipage}[t]{0.48\textwidth}
        \centering
        \includegraphics[width=\textwidth]{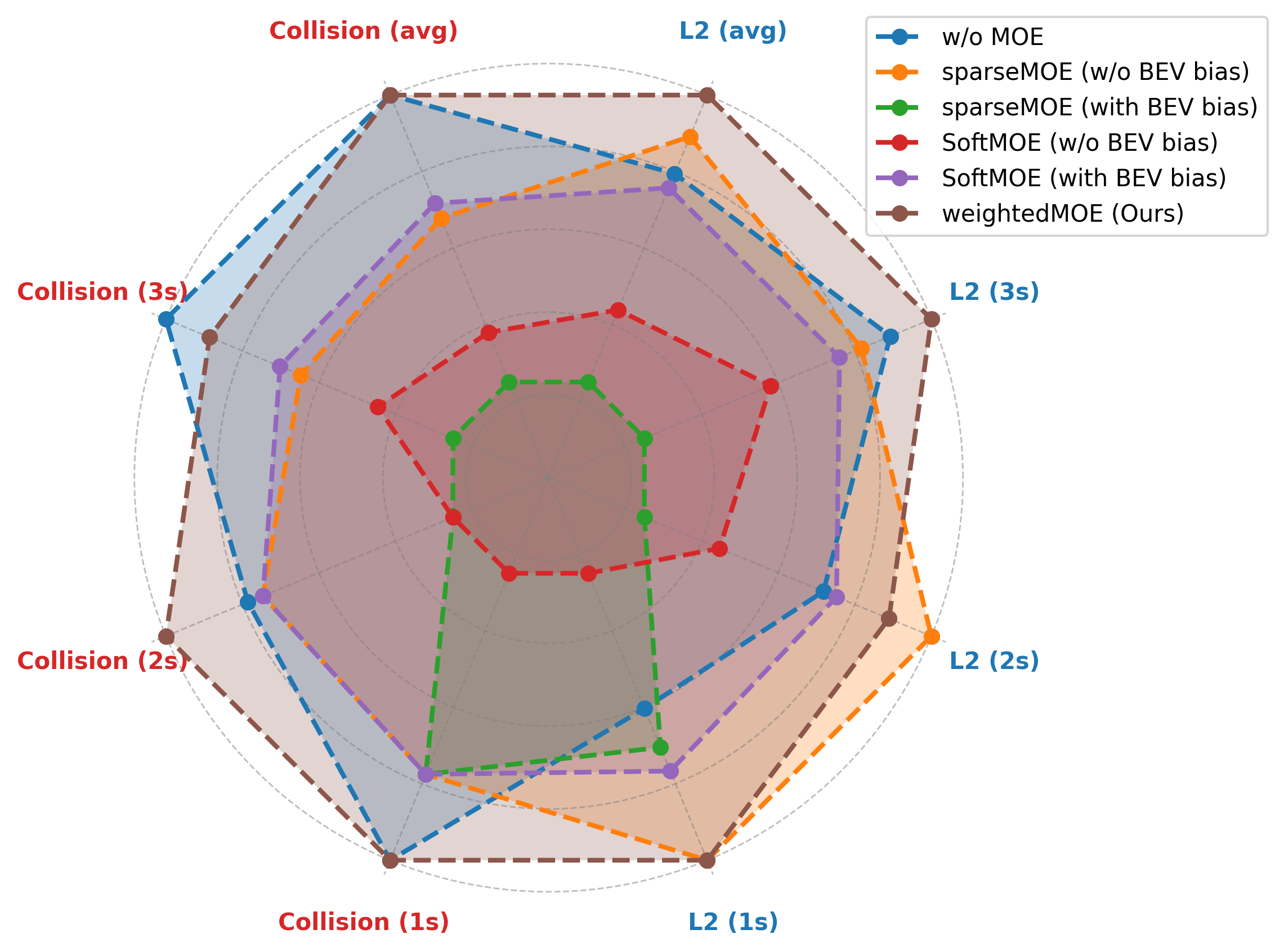}
        \vspace{-8mm}
        \caption{Radar chart of different MoE mechanism experiment results. Note that the distribution places smaller values near edges.}
        \label{fig:ladar}
    \vspace{-5mm}
    \end{minipage}
    \hfill
    \begin{minipage}[t]{0.50\textwidth}
        \centering
        \includegraphics[width=\textwidth]{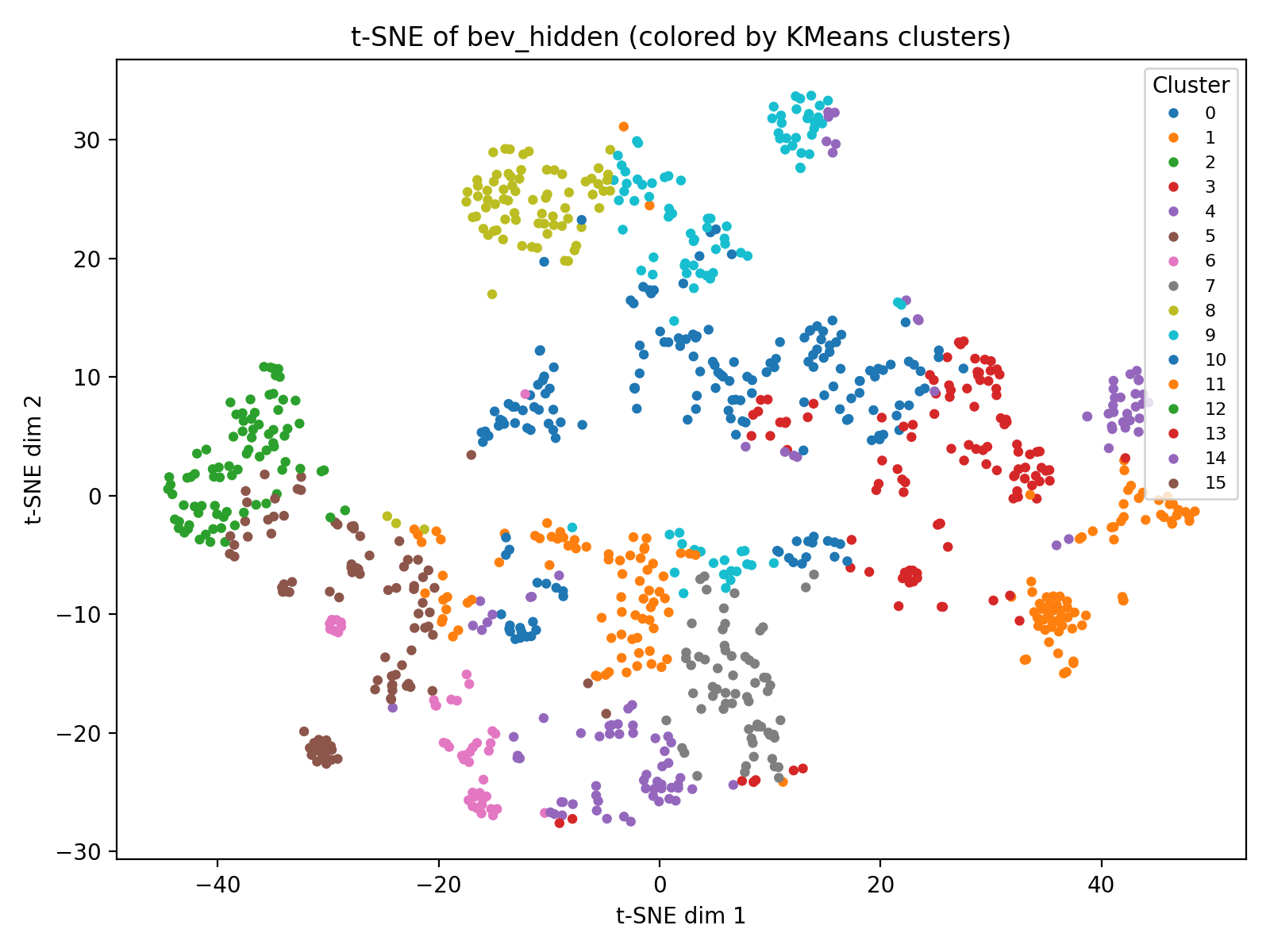}
        \vspace{-8mm}
        \caption{
        t-SNE visualization of mean-pooled BEV hidden representations, colored by KMeans clusters, showing distinct feature regimes and corresponding MoE routing preferences.}
        \label{fig:tnse}
        
    \vspace{-5mm}
    \end{minipage}
\end{figure}

Additionally, to analyze the impact of the scene adaptive for token-based MoE routing, we implement two baseline that add additional routing bias derived from DSE. We find that incorporating scene bias into sparse MoE leads to a drastic performance drop, with the average L2 error rising to 0.57 and the collision rate reaching 0.69. The data further indicate a notable performance trade-off: incorporating scene bias into the soft-MoE architecture allows the model to nearly maintain the L2 error performance, yet it results in a detrimental effect on safety, evidenced by a significant increase in the collision rate to 0.34. The aforementioned results indicate that token-level routing is ill-suited for planning tasks in autonomous driving. Details and numerical results comparing with existing MoE approaches are provided in Appendix C, with a computational cost results in Appendix I.

\textbf{Importance of scene adaptive routing Mechanism and DSE.} Further experimentation is conducted to investigate the importance of DSE and driving scene adaptive routing by integrate additional baseline. As illustrated in Table~\ref{tab:abl_route}, we first compare our scene adaptive routing strategy with an alternative prefix routing strategy. In prefix routing, expert merging is only guided by the first token of planning expert instead of BEV hidden vector captured by DSE. We find that the average L2 error for prefix routing is 0.31. While this demonstrates an improvement over Non-MoE version, its overall performance is inferior when compared directly to our proposed method by 6.8\%. Deformable convolution network(DCN) is also removed to validate its necessity. Compared with our proposed method, L2 error is 0.31, which is also 6.8\% lower.
We also performed an ablation study by removing the normalized distance map ($\mathbf{M_{near}}$), which yielded a comparable, albeit slightly inferior, performance with an L2 error of $0.30$ compared to our final model.
The results above indicates the importance of our proposed scene adaptive routing mechanism and deformable scene encoder.
\begin{table}[t]
\centering
\small
\setlength{\tabcolsep}{3.5pt}
\renewcommand{\arraystretch}{1.1}
\caption{Comparison of L2 distance and collision rate for different routers.}
\vspace{-2mm}
\begin{tabular}{lcccccccc}
\toprule
Method & \multicolumn{4}{c}{L2 $\downarrow$ (m)} & \multicolumn{4}{c}{Collision $\downarrow$ (\%)} \\
\cmidrule(lr){2-5} \cmidrule(lr){6-9}
 & 1s & 2s & 3s & Avg & 1s & 2s & 3s & Avg \\
\midrule
Prefix Routing        & 0.22 & 0.29 & 0.43 & 0.31 & 0.00 & 0.25 & 0.86 & 0.37 \\
w/o DCN           & 0.26 & 0.28 & 0.37 & 0.31 & 0.02 & 0.21 & 0.76 & 0.33 \\
w/o $\mathbf{M_{near}}$  & 0.25 & \textbf{0.26} & 0.38 & 0.30 & 0.01 & 0.20 & 0.66 & 0.29 \\
Ours                  & \textbf{0.24} & 0.27 & \textbf{0.35} & \textbf{0.29} 
                      & \textbf{0.01} & \textbf{0.18} & \textbf{0.60} & \textbf{0.26} \\
\bottomrule
\label{tab:abl_route}
\end{tabular}
\vspace{-8mm}
\end{table}

\textbf{Impact of number of experts.} We conduct experiments with different numbers of experts in our MoE architecture. As shown in Figure~\ref{fig:numexpert}, we find that 12 experts produce the lowest L2 error of 0.28, outperforming other results, while 4 experts achieve the lowest collision rate of 0.26.
The results indicate that employing either fewer or more experts than optimal fails to achieve notable performance improvements relative to the dense baseline model; only an appropriate number of experts can yield superior performance.
We observe that fewer experts enhance robustness against collisions.  4 experts version significantly reduced the collision rate (0.26) compared to the two-expert version and the average collision rate exhibited a subsequent increase with the further growth in the total number of experts. Conversely, L2 errors decreases steadily with increasing number of experts, but 16 experts configuration performs the average L2 error of 0.33.  
These results underscore the importance of expert scaling in MoE designs for planning tasks, where excessive specialization (e.g., 16 experts) may introduce instability, while moderate scaling (4--12 experts) optimizes multi-objective performance.


\textbf{Impact of Scene-Adaptive MoE on Challenging Scenarios.} To further validate the scene-adaptive capability, we conducted an ablation study across three representative driving scenarios, which is selected from nuScenes: complex intersections, narrow-turn roads, and close-range overtaking. As shown in Figure~\ref{fig:drivescene}, replacing the Scene adaptive MoE layer with a dense counterpart consistently leads to degraded planning accuracy and success rate across all settings. In complex intersections, the proposed soft weighted scene adaptive MoE reduces the average L2 distance from 0.33 m to 0.27 m and higher the success rate from 67.91\% to 77.41\%, indicating more stable trajectory prediction under high interaction density. The advantage becomes more pronounced in close-range overtaking, where the MoE version achieves a substantial reduction in average L2 (0.28 to 0.19 m) and 12\% of success rate improvement, reflecting enhanced adaptability to rapidly changing relative motion. These results confirm that the scene-aware routing mechanism effectively allocates expert capacity according to environmental complexity. This experiment also serves as an empirical study of scene-level generalization, showing that the proposed SAMoE-VLA maintains stable planning quality across structurally distinct scenarios, demonstrating robustness in complex layouts. Due to space limitations, supplementary selection detiails are deferred to Appendix G.

\begin{figure}[t]
    \centering
    \begin{minipage}[t]{0.48\textwidth}
        \centering
        \includegraphics[width=\textwidth]{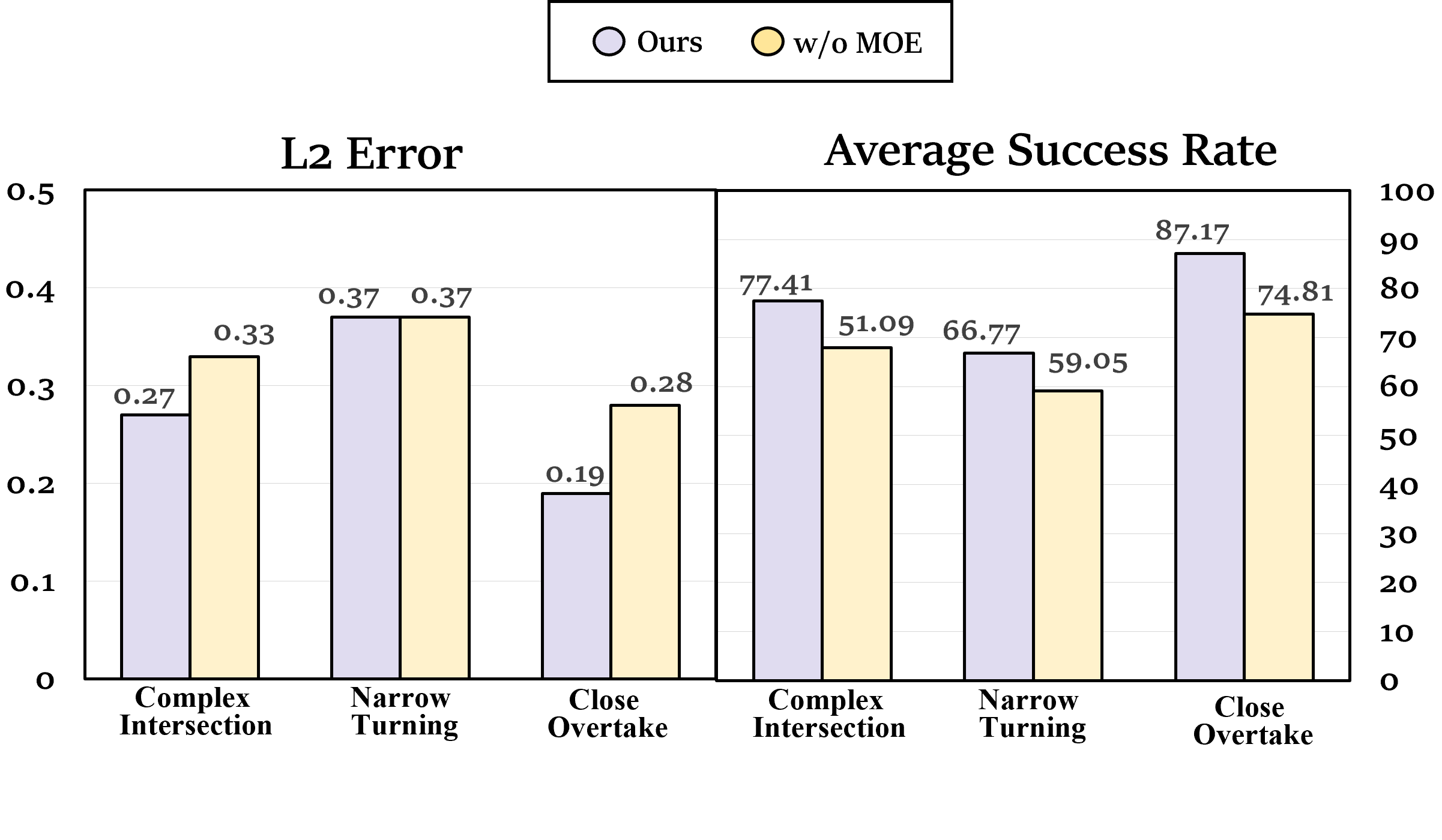}
        \vspace{-8mm}
        \caption{Comparison of average L2 distance and success rate with and without the scene adaptive soft weighted MoE layer.}
        \label{fig:drivescene}
    \vspace{-1mm}
    \end{minipage}
    \hfill
    \begin{minipage}[t]{0.50\textwidth}
        \centering
        \includegraphics[width=1.1\textwidth]{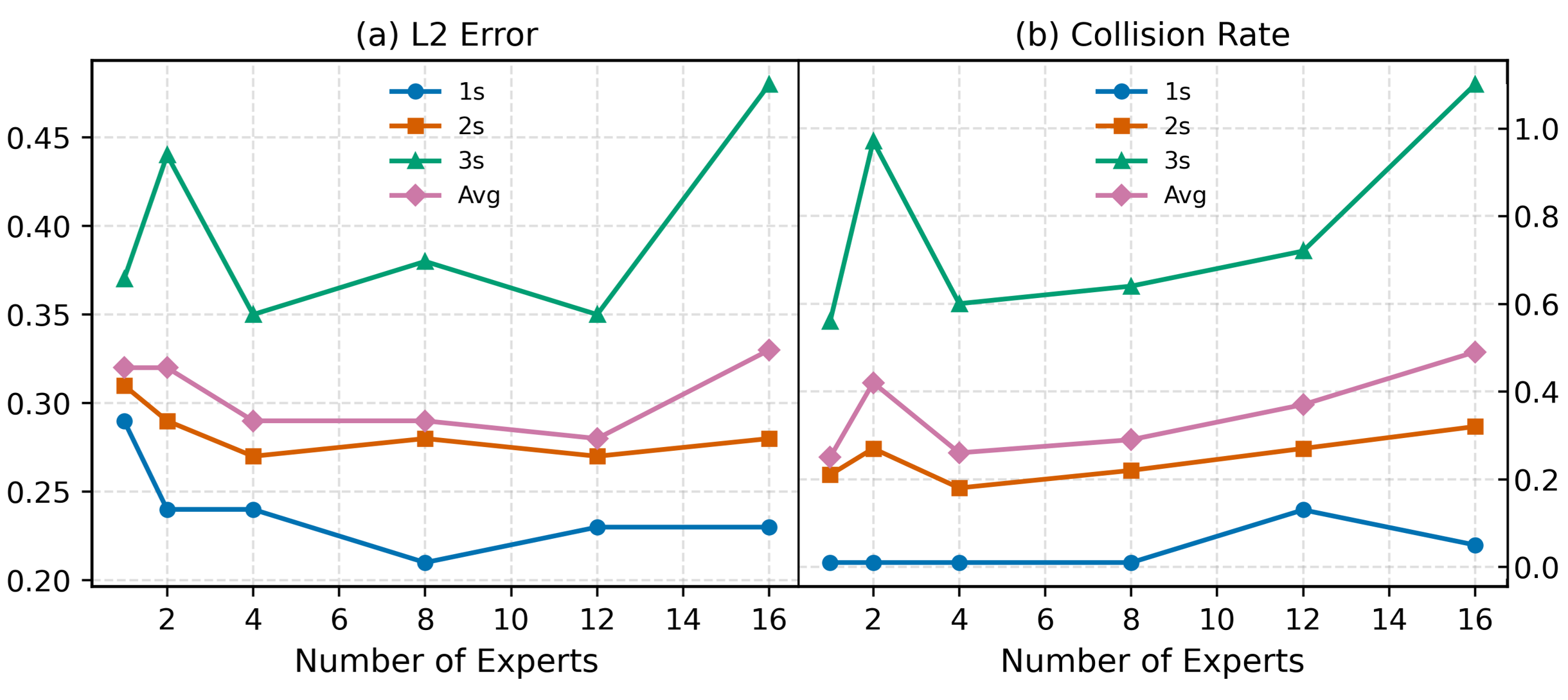}
        \vspace{-8mm}
        \caption{
        Performance comparison with varying numbers of experts. The left (a) reports the L2 error, while the right (b) illustrates the corresponding collision rates. }
        \label{fig:numexpert}
    \vspace{-1mm}
    \end{minipage}
\vspace{-7mm}
\end{figure}
\textbf{Importance of Conditional Cross-Modal Causal Attention.} We validate the importance of CMCA by introduce two additional experiments. 
Without employing a conditional autoregressive mechanism in which every action token could be seen by each other, our model will lead to an performance decline. The average L2 error exceed to 0.35, which indicates the success of our mask strategy. 
We also present a fully dense version where all the tokens (including BEV tokens and noisy action tokens) are processed in a single feed-forward layer in each transformer layer.
Experiments for the fully dense version suggest L2 error is 0.03 higher compared with our baseline, showing the importance of specialized experts tailored to each distinct spatial application.
To avoid interference, all the experiments have not deployed any SA-MoE layer.

Further experiment illustrate the importance of 3D-aware world modeling pretraining.
 Instead of training planning experts after future point cloud prediction, we simply finish the alignment steps and then train our planning expert. As shown in figure~\ref{fig:CMCA}, we find that the average L2 error drop to 0.33, further implcates the role in world modeling.

\textbf{Scene-Adaptive Route behavior Analysis via t-SNE.} To further interpret the scene adaptive route’s behavior and its ability to classify traffic scenes, we visualize and cluster the hidden representations extracted from the  Deformable Scene Encoder(DSE). Each sample-level feature vector is obtained by mean-pooling over the batch and temporal dimensions, followed by two-dimensional projection using t-SNE for local neighborhood visualization. 
Figure~\ref{fig:tnse} illustrates that samples form several loosely separated clusters, with dense intra-cluster regions and partially overlapping boundaries, suggesting the presence of distinct yet related BEV scene patterns. These findings demonstrate that the DSE encodes semantically meaningful driving and traffic patterns, which correspond to distinct expert routing behaviors and provide evidence that the model’s soft-weighted MoE adapts dynamically to diverse scene contexts.
\vspace{-2mm}
%

\vspace{-2mm}
\section{Conclusion}
\vspace{-2mm}
\begin{figure}[t]
    \centering
    \begin{minipage}[t]{0.64\textwidth}
        \centering
        \includegraphics[width=1.015\textwidth]{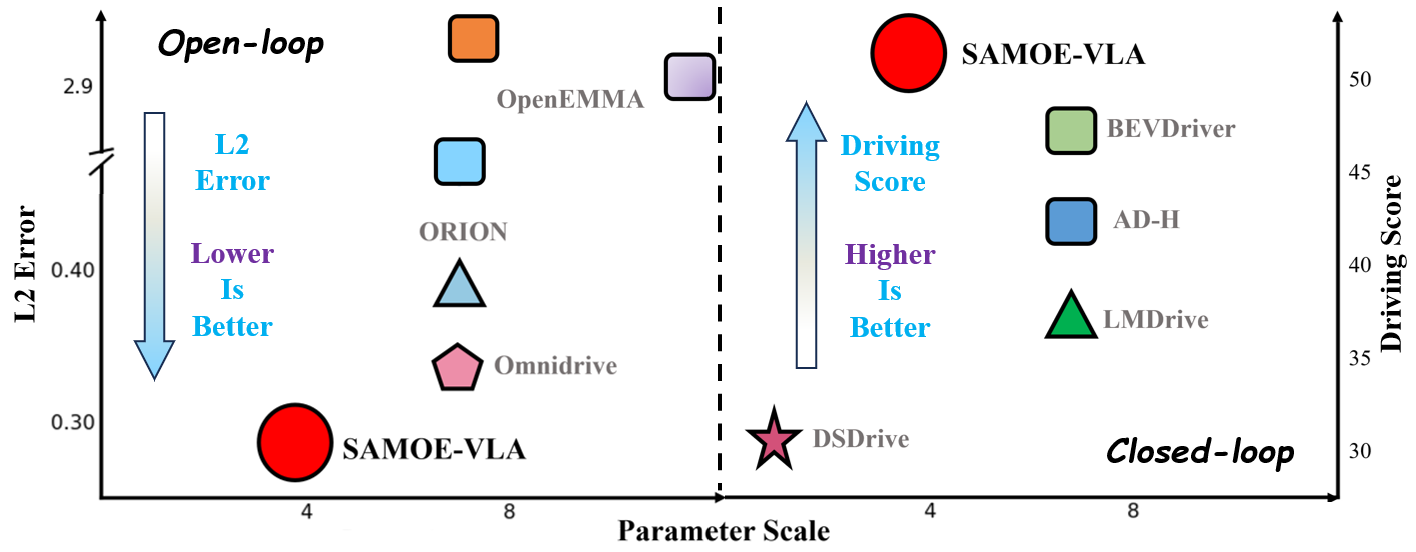}
        \vspace{-6mm}
        \caption{
        L2 error and driving score are compared across VLA models with different parameter scales. Lower L2 error and higher drive score indicate better performance. OpenEMMA includes three model sizes, while SAMoE-VLA reports total parameter count.
        }
        \label{fig:paramscla}
    \vspace{-3mm}
    \end{minipage}
    \hfill
    \begin{minipage}[t]{0.35\textwidth}
        \centering
        \includegraphics[width=\textwidth]{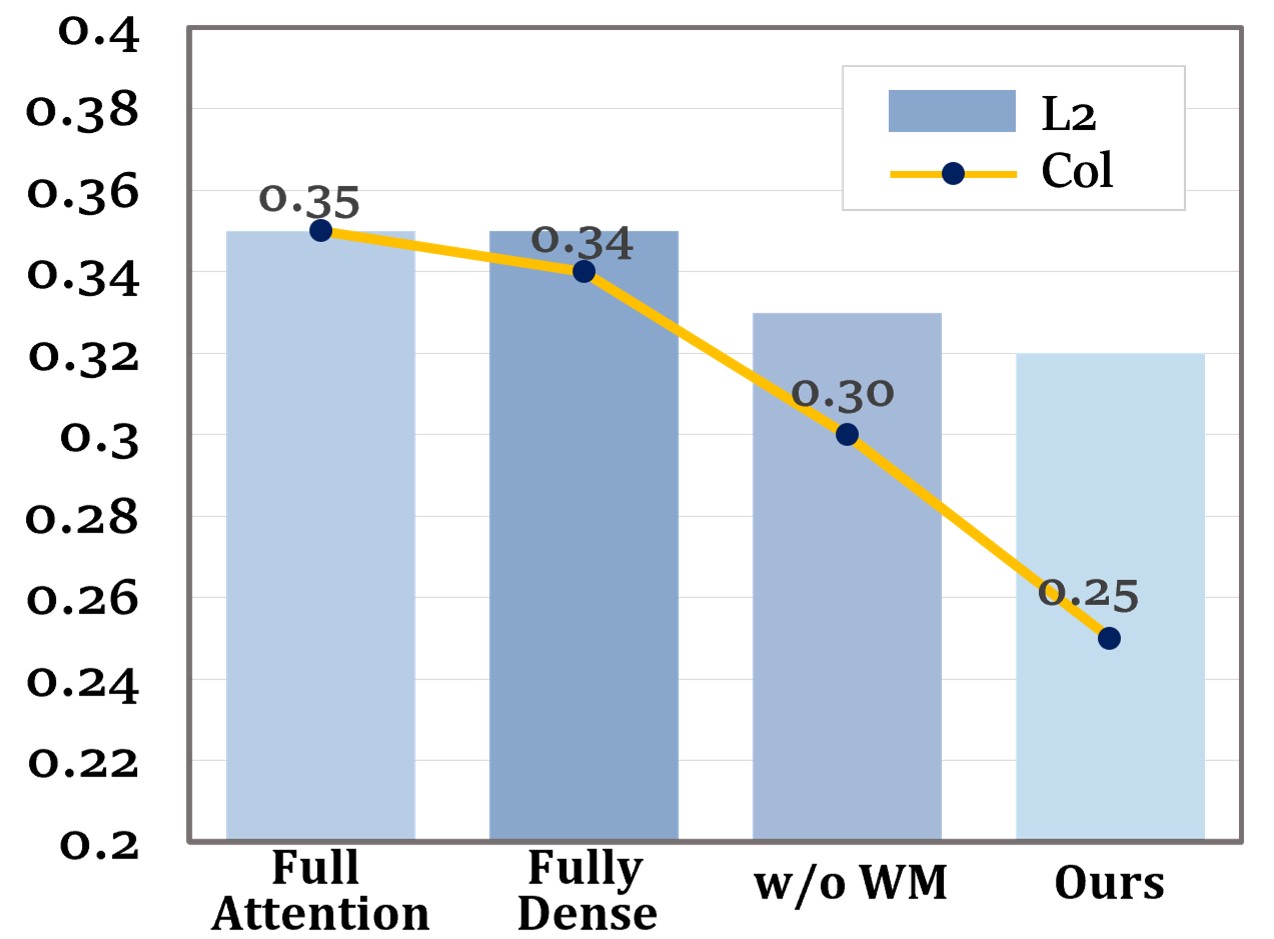}
        \vspace{-6mm}
        \caption{
       Comparison of L2 distance and collision rate for fully dense, full attention, without world modeling}
        \label{fig:CMCA}
    \end{minipage}
\vspace{-4mm}
\end{figure}

In this work, we introduce SAMoE-VLA, a scene-adaptive Mixture-of-Experts Vision-Language-Action model for Autonomous Driving.
To address the limitations of dense architectures and token-level routing in existing VLA and MoE-based approaches, we proposed two key mechanisms: Conditional Cross-Modal Causal Attention (CMCA), which unifies world, language, and action representations, and a BEV-guided Scene Adaptive MoE (SA-MoE), which employs deformable scene encoding to achieve differentiable and driving-scene-aware expert fusion. Comprehensive experiments on both open and close planning benchmark demonstrate that SAMoE-VLA achieves state-of-the-art performance, lower long-horizon prediction error, and superior robustness across complex driving scenarios, validating the effectiveness of soft expert weighting, scene adaptive routing, and world modeling based on CMCA.


\bibliographystyle{splncs04}
\bibliography{main}
\clearpage
\setcounter{page}{1}
\setcounter{section}{0}

\renewcommand{\thesection}{\Alph{section}}  
\begin{center}


{\Large \textbf{SAMoE-VLA: A Scene Adaptive Mixture-of-Experts Vision-Language-Action Model for Autonomous Driving}}
\vspace{5mm}

{\large {Supplementary Material}}

\vspace{5mm}
\end{center}
\section{The implementation of Flow-Matching}

\paragraph{Overview.}
As illustrate in~\ref{DSE}, we propose a conditional trajectory planning expert trained via flow matching. The system learns a time-conditioned velocity field to transport noise to ego-trajectory over a finite horizon.

\subsection{Perception and Conditioning}
Given multi-view images, a BEV encoder produces a tensor
\[
\mathbf{B}\in\mathbb{R}^{H\times W\times C_b}.
\]
Deformable Scene Encoder $\mathbf{DSE}:\mathbb{R}^{C_b}\!\to\!\mathbb{R}^{D}$ maps BEV feature to $D$-dimensional tokens 
\[
\mathbf{H}_{\mathrm{BEV}}=\mathbf{DSE}(\mathrm{reshape}(\mathbf{B}))\in\mathbb{R}^{(HW)\times D}.
\]
We also embed auxiliary contexts: ego-history $\mathbf{h}$, CAN bus $\mathbf{c}$, and low-level control features $\mathbf{l}$ via linear projections
\[
\mathbf{S}=\mathrm{concat}\!\big(\psi_{\mathrm{his}}(\mathbf{h}),\;\psi_{\mathrm{bus}}(\mathbf{c}),\;\psi_{\mathrm{ego}}(\mathbf{l})\big)\in\mathbb{R}^{L_s\times D}.
\]
The overall conditioning is $\mathcal{C}=\{\mathbf{z}, \mathbf{H}_{\mathrm{bev}},\mathbf{S}, \mathbf{L},\mathbf{W}\}$.
Where the $\mathbf{z}$ denotes BEV tokens,$\mathbf{L}$ denotes language instruction tokens , $\mathbf{W}$ denotes world tokens which are processed in world language expert.
\subsection{Flow Matching Formulation}
Let the planning horizon be $K$, ground-truth future actions be
\[
\mathbf{a}=(\mathbf{a}_1,\dots,\mathbf{a}_K)\in\mathbb{R}^{K\times d_a},\quad d_a=2.
\]
where $K$ in NuScenes is 6. 
We sample noise and time as
\[
\boldsymbol{\epsilon}\sim\mathcal{N}(\mathbf{0},\mathbf{I}),\qquad
t\sim \mathrm{Beta}(\alpha{=}1.5,\beta{=}1.0)\cdot 0.999+0.001.
\]
Define the convex interpolation and target velocity:
\[
\mathbf{x}_t = t\,\boldsymbol{\epsilon} + (1-t)\,\mathbf{a},\qquad
\mathbf{u}_t = \boldsymbol{\epsilon} - \mathbf{a}.
\]
We compute a sinusoidal time embedding $\gamma(t)\in\mathbb{R}^{D}$ over periods $[4\times 10^{-3},\,4]$ and construct action--time tokens by projecting actions and fusing time:
\[
\mathbf{E}_{\mathrm{act}}=\psi_{\mathrm{act}}(\mathbf{x}_t)\in\mathbb{R}^{K\times D},\quad
\mathbf{E}_{\mathrm{time}}=\gamma(t)\mathbf{1}_K\in\mathbb{R}^{K\times D},
\]
\[
\mathbf{E}_{\mathrm{suf}}=\mathrm{MLP}\big([\mathbf{E}_{\mathrm{act}}\Vert \mathbf{E}_{\mathrm{time}}]\big)\in\mathbb{R}^{K\times D}.
\]
The planning expert $f_\theta$ consumes the concatenated sequence
\[
\mathbf{X}=\big[\mathbf{S}\;\Vert\;\mathbf{E}_{\mathrm{suf}}\big]\in\mathbb{R}^{(L_s+K)\times D},
\]
and outputs hidden states whose last $K$ positions are projected by $\psi_{\mathrm{out}}:\mathbb{R}^{D}\!\to\!\mathbb{R}^{d_a}$ to produce the predicted velocity field $v_\theta$:
\[
v_\theta(\mathbf{x}_t, t, \mathcal{C})=\psi_{\mathrm{out}}\big(f_\theta(\mathbf{X})_{\mathrm{suffix}}\big)\in\mathbb{R}^{K\times d_a}.
\]

\paragraph{Training Objective.}
The flow-matching loss aligns $v_\theta$ with the target velocity:
\begin{equation}
\mathcal{L}_{\mathrm{FM}}
=\frac{1}{K}\sum_{k=1}^{K}\big\|v_\theta(\mathbf{x}_t, t, \mathcal{C})_k - \mathbf{u}_{t,k}\big\|_2^2.
\label{eq:fm-loss}
\end{equation}

\subsection{Inference via ODE Integration}
At test time, we discard ground-truth actions and solve the transport from noise to actions by explicit Euler:
\[
\mathbf{x}_1\sim\mathcal{N}(\mathbf{0},\mathbf{I}),\quad \Delta t=-\frac{1}{N},\quad t\leftarrow 1,
\]
\[
\text{repeat }N\text{ steps: }\quad
\mathbf{x}_{t+\Delta t}\leftarrow \mathbf{x}_t + \Delta t\cdot v_\theta(\mathbf{x}_t, t, \mathcal{C}),\quad t\leftarrow t+\Delta t.
\]
The final estimate $\hat{\mathbf{a}}=\mathbf{x}_0\in\mathbb{R}^{K\times d_a}$ is the planned trajectory.

\subsection{Implementation Details}
(i) BEV features are linearly projected to $D$-dimensional tokens and concatenated with state tokens to form $\mathcal{C}$. (ii) Time embedding uses sine-cosine features with periods in $[4\!\times\!10^{-3},4]$. (iii) Training samples $(\boldsymbol{\epsilon},t)$ per batch; the planner predicts $v_\theta$ over all $K$ steps, and the MSE in (\ref{eq:fm-loss}) is averaged across steps. (iv) For stability and efficiency, mixed precision and gradient checkpointing can be used.

\begin{algorithm}[t]
\caption{Training with Flow Matching}
\begin{algorithmic}[1]
\State Input: multi-view images, ego-history $\mathbf{h}$, CAN bus $\mathbf{c}$, control features $\mathbf{l}$, ground-truth $\mathbf{a}$.
\State Encode images to BEV $\mathbf{B}$; project to tokens $\mathbf{H}_{\mathrm{bev}}$; embed $\mathbf{S}$.
\State Sample $\boldsymbol{\epsilon}\!\sim\!\mathcal{N}(\mathbf{0},\mathbf{I})$, $t\!\sim\!\mathrm{Beta}(1.5,1.0)\cdot 0.999+0.001$.
\State Form $\mathbf{x}_t=t\boldsymbol{\epsilon}+(1-t)\mathbf{a}$,\; $\mathbf{u}_t=\boldsymbol{\epsilon}-\mathbf{a}$.
\State Build $\mathbf{E}_{\mathrm{suf}}$ with action--time fusion; sequence $\mathbf{X}=[\mathbf{S}\Vert\mathbf{E}_{\mathrm{suf}}]$.
\State Predict $v_\theta(\mathbf{x}_t,t,\mathcal{C})$ and minimize $\mathcal{L}_{\mathrm{FM}}$ in (\ref{eq:fm-loss}).
\end{algorithmic}
\end{algorithm}

\begin{algorithm}[t]
\caption{Inference by Euler Integration}
\begin{algorithmic}[1]
\State Input: multi-view images, ego-state/context $\mathcal{C}$; steps $N$.
\State Initialize $\mathbf{x}_1\!\sim\!\mathcal{N}(\mathbf{0},\mathbf{I})$, $\Delta t\!=\!-1/N$, $t\!=\!1$.
\For{$i=1$ to $N$}
  \State $\mathbf{x}_{t+\Delta t}\!\leftarrow\!\mathbf{x}_t+\Delta t\cdot v_\theta(\mathbf{x}_t,t,\mathcal{C})$;\; $t\!\leftarrow\!t+\Delta t$.
\EndFor
\State Output $\hat{\mathbf{a}}=\mathbf{x}_0$.
\end{algorithmic}
\end{algorithm}

\section{The implementation of World Modeling}
\label{para:world_modeling}

\paragraph{Overview}
Followed World language model HERMES\cite{zhou2025hermes}, We adopted a world language expert that predicts future 3D scene geometry by rendering LiDAR rays from Large Language Model (LLM)-conditioned BEV features.
Given BEV features via a BEV Encoder, compress them, and feed them into an LLM together with a compact set of learnable soft prompts.
The LLM returns a refined current BEV embedding and future-aware soft world tokens, which we use to synthesize per-frame future BEV embeddings.
We then lift each BEV into a unified voxel grid, and render depth along LiDAR rays to form predicted future point clouds.
The full system is implemented with memory-efficient mixed-precision and gradient checkpointing.
The expert is pretrained during first stage, and frozen in final stage.
\paragraph{Notation}
Let the BEV spatial shape be $(H, W)$ and channel dimension $C$.
We denote the unified voxel grid by $(Z, H, W)$ with vertical bins $Z$ and per-voxel size $\Delta = (\Delta_x,\Delta_y,\Delta_z)$ covering a 3D range $\mathrm{pc\_range}$.
We consider a frame set $\mathcal{T}=\{t, t\!+\!1, \ldots, t\!+\!K\}$ where $t$ is current and $K\!\ge\!1$ is the number of future steps.
For LiDAR rendering, a ray is a tuple $(\mathbf{o}_n,\mathbf{d}_n,r_n)$, with origin $\mathbf{o}_n\!\in\!\mathbb{R}^3$, unit direction $\mathbf{d}_n\!\in\!\mathbb{R}^3$, and ground-truth range $r_n\!\in\!\mathbb{R}_{+}$ when available.

 \subsection{BEV Feature Extraction}
From multi-view images, we compute the BEV embedding
\begin{align}
\mathbf{E}_t \in \mathbb{R}^{H_{\mathrm{bev}}\times W_{\mathrm{bev}}\times C}
\end{align}
using spatial cross attention(SCA) with learned queries and positional encodings:
\begin{align}
\mathbf{F}_{BEV} = \mathrm{SCA}(\text{img\_feats}, \text{BEV\_queries}, \text{pos})\\
\end{align}
For LLM conditioning, we downsample BEV channels with a lightweight module $\mathrm{DownSampleCross}$ :
\begin{align}
\mathbf{z},~\mathbf{E}^\mathrm{ae}_t = \mathrm{MLP}(\mathrm{DownSampleCross}(\mathbf{F}_{BEV})) .
\end{align}
Here $\mathbf{z}$ is the LLM input representation and $\mathbf{E}^\mathrm{ae}_t$ is the reconstruction target for self-regularization when enabled.

 \subsection{LLM-guided Future BEV Generation}
We compress $\mathbf{z}$ into $N_q$ learnable queries by channel-wise pooling and add temporal/control priors:
\begin{align}
\mathbf{Q} &= \mathrm{Pool}\big(\mathrm{permute}(\mathbf{z})\big)\in \mathbb{R}^{N_q\times C},\\
\mathbf{W}^{(k)} &= \mathbf{Q} + \mathbf{f}_{\mathrm{frame}}(k), \quad k\in\{1,\ldots,K\},
\end{align}
where $\mathbf{f}_{\mathrm{frame}}$ is learnable soft prompt embeddings,$\mathbf{W}^{(k)}$ is the world tokens.

We feed the LLM with the downsampled BEV tokens and world tokens:
\begin{align}
\big(\mathbf{E}^{\mathrm{llm}}_t, \{\mathbf{Q}^{(k)}\}_{k=1}^K, \ell_{\mathrm{chat}}\big) = \mathrm{LLM}\big(\mathbf{z}, \{\mathbf{W}^{(k)}\}_{k=1}^K\big),
\end{align}
where $\mathbf{E}^{\mathrm{llm}}_t$ is a refined current BEV embedding and $\mathbf{Q}^{(k)}$ are future-aware latent queries returned by the LLM; $\ell_{\mathrm{chat}}$ is language output.
Analogous to the HERMES, the language output is designed to articulate responses to user queries regarding the driving environment, specifically encompassing scene descriptions and answers to visually-derived questions.The language supervision and world supervision interact enhanced with each other and unifies the world understanding and future scene generation.
We then synthesize per-future-step BEV embeddings, followed by ego-attention refinement with $\mathbf{W}^{(k)}$:
\begin{align}
\widetilde{\mathbf{E}}^{\downarrow}_{t+k} &= \mathbf{E}^{\mathrm{llm}}_t + \mathbf{f}_{\mathrm{frame}}(k),\\
\mathbf{E}^{\downarrow}_{t+k} &= \mathrm{EgoAttn}\big(\widetilde{\mathbf{E}}^{\downarrow}_{t+k},~\mathbf{W}^{(k)}\big),  k=1,\ldots,K .
\end{align}
For completeness, the current-step BEV is given by $\mathbf{z}=\mathbf{E}^{\mathrm{llm}}_t$.

 \subsection{Lifting BEV to 3D Voxel Features}
Each $\mathbf{E}^{\downarrow}_{t+k}$ is lifted to a 3D voxel representation via a depth-aware BEV upsampler:
\begin{align}
\mathbf{V}_{t+k} &= \mathrm{BEVUpsample}\!\left(\mathbf{E}^{\downarrow}_{t+k}\right)
\in \mathbb{R}^{C\times Z \times H_{\mathrm{bev}}\times W_{\mathrm{bev}}},\\
\mathbf{U}_{t+k} &= \mathrm{Conv3D}\!\left(\mathbf{V}_{t+k}\right)
\in \mathbb{R}^{C_{\mathrm{out}}\times Z \times H_{\mathrm{bev}}\times W_{\mathrm{bev}}},
\end{align}
where $\mathrm{BEVUpsample}$ is a shallow 3D convolutions that reshapes the channel dimension into the vertical bins and upsamples from $Z/2$ to $Z$.
$\mathbf{U}_{t+k}$ is the unified 3D feature volume used for volumetric rendering.


\subsection{LiDAR Ray Sampling and SDF-based Rendering}

To recover fine-grained 3D geometry for each future frame $t+k$, we follow the LiDAR acquisition process and construct a set of rays from the target point cloud $P_{t+k}$. For each point $\mathbf{x}_n \in P_{t+k}$, we define a ray originating from the sensor center and passing through the point. Formally, we sample
\subsection{LiDAR Ray Sampling and SDF-based Rendering}

We construct LiDAR rays for each future frame $t+k$ from the target point cloud $P_{t+k}$.  
Each point $\mathbf{x}_n \in P_{t+k}$ defines one ray:

\begin{align}
\mathcal{R}_{t+k} = \{(\mathbf{o}_n, \mathbf{d}_n, r_n)\}_{n=1}^{N}. 
\end{align}

Here, $\mathbf{o}_n$ is the ray origin in the LiDAR coordinate frame:

\begin{align}
\mathbf{o}_n = \mathbf{0}. 
\end{align}

This sets the origin of every ray to the sensor center, consistent with the LiDAR emission model.

The ray direction $\mathbf{d}_n$ is computed by normalizing the vector from the origin to the observed point:

\begin{align}
\mathbf{d}_n = 
\frac{\mathbf{x}_n - \mathbf{o}_n}
     {\|\mathbf{x}_n - \mathbf{o}_n\|_2}. 
\end{align}

This yields a unit-length direction pointing from the LiDAR toward $\mathbf{x}_n$.

The raw LiDAR range $r_n$ is the Euclidean distance to the point:

\begin{align}
r_n = \|\mathbf{x}_n - \mathbf{o}_n\|_2. 
\end{align}

We apply range filtering $r_{\min} < r_n < r_{\max}$ and optionally mask the ego-vehicle region to remove invalid measurements.

\vspace{3pt}
\noindent\textbf{Ray-based SDF Rendering.}
Given the unified volumetric representation $\mathbf{U}_{t+k}$, we discretize each ray into $n$ sampled points:
\begin{align}
\mathbf{p}_i = \mathbf{o}_n + d_i \mathbf{d}_n,
\qquad i = 1,\dots,n,
\end{align}
where $d_i$ denotes the sampled distance along the ray. For each $\mathbf{p}_i$, we retrieve its feature embedding $\mathbf{f}_i$ from $\mathbf{U}_{t+k}$ using trilinear interpolation.

A shallow MLP $\phi_{\text{SDF}}$ predicts the signed distance value:
\begin{align}
s_i = \phi_{\text{SDF}}(\mathbf{p}_i, \mathbf{f}_i).
\end{align}
Following volumetric SDF rendering, the opacity at the $i$-th sample is computed as
\begin{align}
\alpha_i = \max\left(\frac{\sigma_t(s_i) - \sigma_t(s_{i+1})}{\sigma_t(s_i)},\, 0\right),
\sigma_t(x) = (1 + e^{-t x})^{-1},
\end{align}
where $\sigma_t(\cdot)$ is a learnable sigmoid modulation with parameter $t$.

The accumulated transmittance $T_i$ and the unbiased, occlusion-aware weight $w_i$ follow:
\begin{align}
T_i &= \prod_{j=1}^{i-1} (1 - \alpha_j), \\
w_i &= T_i\, \alpha_i.
\end{align}

The rendered depth for ray $(\mathbf{o}_n,\mathbf{d}_n)$ is then obtained via weighted integration:
\begin{align}
\widehat{r}_n^{(k)}
    = \sum_{i=1}^{n} w_i\, d_i.
\end{align}
Finally, the 3D point reconstructed by the renderer is
\begin{align}
\widehat{\mathbf{x}}_n^{(k)}
    = \mathbf{o}_n + \widehat{r}_n^{(k)} \mathbf{d}_n.
\end{align}

A global scale factor $s$ is applied consistently to all ray directions, sampled distances, and predicted depths to align the renderer’s internal metric space with real-world LiDAR geometry.

 \subsection{The pretrained Stage Training Objectives}
During the pretraining stage, we supervise rendered depth and point clouds using per-ray regression and set-level geometry consistency.
Let $\mathcal{X}^{(k)}=\{\mathbf{x}_n^{(k)}\}_{n=1}^{N}$ and $\widehat{\mathcal{X}}^{(k)}=\{\widehat{\mathbf{x}}_n^{(k)}\}_{n=1}^{N}$ be the ground-truth and predicted point sets:
\begin{equation}
\begin{aligned}
\mathcal{L}_{\mathrm{depth}} &= \frac{1}{KN}\sum_{k=1}^{K}\sum_{n=1}^{N}\big|\,r_n - \widehat{r}_n^{(k)}\,\big|,\\[6pt]
\mathcal{L}_{\mathrm{chamfer}} &= \frac{1}{K} \sum_{k=1}^{K} \Bigg[
  \frac{1}{|\widehat{\mathcal{X}}^{(k)}|} \sum_{\widehat{\mathbf{x}} \in \widehat{\mathcal{X}}^{(k)}} \min_{\mathbf{x} \in \mathcal{X}^{(k)}} \|\widehat{\mathbf{x}} - \mathbf{x}\|_2^2 \\[6pt]
&\quad + \frac{1}{|\mathcal{X}^{(k)}|} \sum_{\mathbf{x} \in \mathcal{X}^{(k)}} \min_{\widehat{\mathbf{x}} \in \widehat{\mathcal{X}}^{(k)}} \|\mathbf{x} - \widehat{\mathbf{x}}\|_2^2 \Bigg].
\end{aligned}
\end{equation}
The renderer also returns auxiliary losses $\mathcal{L}_{\mathrm{render}}$ (e.g., density/color regularizers).
When conversational pretraining is enabled, an LLM supervision $\mathcal{L}_{\mathrm{chat}}$ is added.
The full objective is
\begin{align}
\mathcal{L} = \lambda_{\mathrm{render}}\,\mathcal{L}_{\mathrm{render}}
+ \lambda_{\mathrm{depth}}\,\mathcal{L}_{\mathrm{depth}}\\
+ \lambda_{\mathrm{ch}}\,\mathcal{L}_{\mathrm{chamfer}}
+ \lambda_{\mathrm{chat}}\,\mathcal{L}_{\mathrm{chat}} .
\end{align}

 \subsection{Implementation Details}
We use half-precision for BEV tensors entering the LLM and enable gradient checkpointing for the LLM, BEV upsampler, and rendering convolutions to reduce memory.
The ego-attention in $\mathrm{DownSampleCross}$ is activated only when $N_q>0$, allowing the LLM's future-aware queries to modulate per-future BEV embeddings.
All per-frame volumes share the same renderer $f_\theta$ and voxelization parameters to ensure temporal consistency.

\begin{figure*}[t] 
    \centering
    \includegraphics[width=0.8\textwidth]{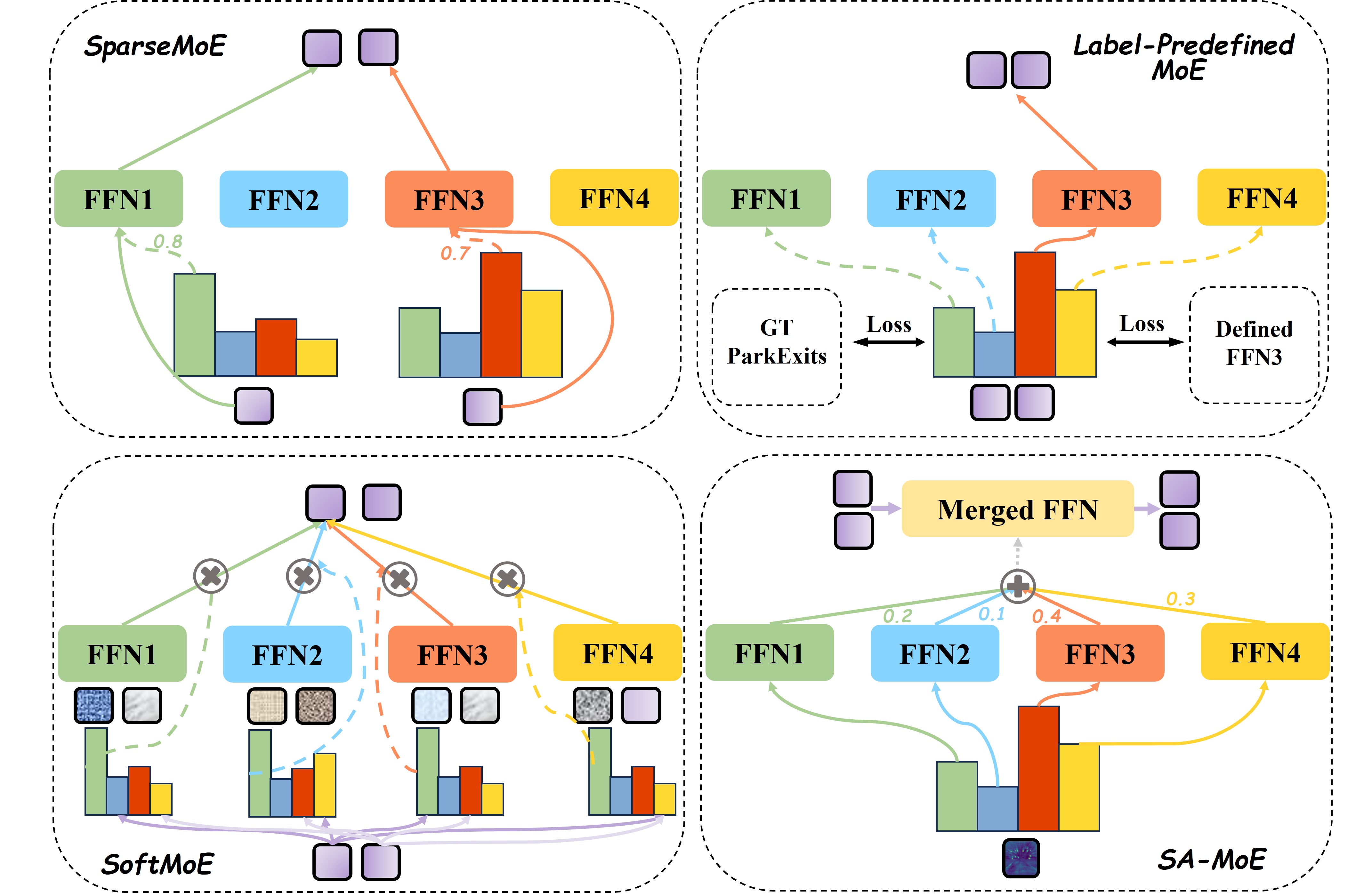} 
    \vspace{-2mm}
    \caption{Comparison between different MoE
}
    \label{fig:DSE}
    \vspace{-4mm}
\end{figure*}
\section{Comparison between Different MoE}

\paragraph{Preliminaries.}
Let $x_{b,t}\in\mathbb{R}^{d}$ be the $t$-th token in the $b$-th sample, and let each expert FFN with parameters $\{W_1^{(e)}\!\in\!\mathbb{R}^{d\times m},\,W_3^{(e)}\!\in\!\mathbb{R}^{d\times m},\,W_2^{(e)}\!\in\!\mathbb{R}^{m\times d}\}$.
Define the nominal FFN cost $C_{\text{ffn}}\approx (2dm+md)$ FLOPs per token.

\subsection{SparseMoE (Top-$k$ Token Routing)}
Sparse MoE routes each token independently to a small subset of experts.
Let $p_{b,t}\in\Delta^{E-1}$ be the gate over experts and $\mathcal{T}_k(b,t)$ the top-$k$ indices.
The layer output is
\[
y_{b,t}^{\text{sparse}}=\!\!\sum_{e\in\mathcal{T}_k(b,t)}\!\! p_{b,t,e}\, f^{(e)}(x_{b,t}).
\]
Routing is token-level, irregular, and requires dispatch/gather.
Complexity is $\mathcal{O}(BLk\,C_{\text{ffn}})$ per layer (sequence length $L$), with additional routing overhead and potential load imbalance.
Auxiliary balancing losses are commonly required.

\subsection{SoftMoE (Slot Aggregation Without Top-$k$)}
Soft Mixture-of-Experts replaces hard top-$k$ routing with continuous token-to-expert assignment. 
Each expert contains learnable slots $S\in\mathbb{R}^{E\times S\times d}$, and token–slot affinities are computed by
\begin{align}
    l_{b,t,e,s}=\langle x_{b,t},\, S_{e,s}\rangle .
\end{align}
A softmax over $(e,s)$ dispatches each token fractionally to all expert slots, which are then aggregated before passing through the expert FFN. 
This avoids load imbalance and exercises all experts every step, but incurs full expert computation, resulting in
\begin{align}
    \mathcal{O}(BES\,C_{\text{ffn}}),
\end{align}
and increased memory due to slot embeddings.

\subsection{SA-MoE}
We introduce a sample-level, BEV-conditioned, \emph{weight-merging} MoE that avoids token dispatch and executes a single FFN while preserving expert specialization.
SA-MoE runs a single FFN per token: $\mathcal{O}(BL\,C_{\text{ffn}})$, plus a per-sample merge cost
$\mathcal{O}\big(E(dm+md)\big)$ to form $\hat{W}_i^{(b)}$ (independent of $L$ and small for moderate $E$).
Compared with SparseMoE, SA-MoE removes token dispatch, padding/gather, and imbalance.
Compared with SoftMoE, it avoids the all-expert forward and slot memory.

\subsection{Qualitative Comparison}
\begin{itemize}
\item \textbf{Routing signal.}
SparseMoE routes by token content; SoftMoE routes by token-to-slot similarity; SA-MoE routes by BEV-derived \emph{scene} cues.
\item \textbf{Routing granularity.}
SparseMoE: token-level; SoftMoE: token-to-slot; SA-MoE: sample-level.
\item \textbf{Mixture locus.}
Sparse/Soft: mix \emph{outputs} in activation space; SA-MoE: mix \emph{parameters} to synthesize a sample-specific FFN.
\item \textbf{Efficiency.}
SparseMoE: $\propto k$ experts per token + dispatch overhead; SoftMoE: all experts each step; SA-MoE: one FFN per token + cheap merges, highly GPU-friendly.
\item \textbf{Specialization and stability.}
SparseMoE encourages sharp specialization but needs balancing.
SoftMoE is stable but compute-heavy.
SA-MoE inherits stability (softmax over experts) and yields coherent sample-level specialization via BEV cues, while keeping cost close to dense FFN.
\end{itemize}

\begin{table*}[t]
\centering
\small
\setlength{\tabcolsep}{4.2pt}  
\renewcommand{\arraystretch}{1.15}
\caption{Comparison of L2 distance and collision rate with different MoE designs.}
\begin{tabular}{lcccccccc}
\toprule
Method & \multicolumn{4}{c}{L2 $\downarrow$ (m)} & \multicolumn{4}{c}{Collision $\downarrow$ (\%)} \\
\cmidrule(lr){2-5} \cmidrule(lr){6-9}
 & 1s & 2s & 3s & Avg & 1s & 2s & 3s & Avg \\
\midrule
w/o MOE                              & 0.29 & 0.31 & 0.37 & 0.32 & 0.01 & 0.21 & 0.56 & 0.25 \\
sparseMOE (w/o bev bias)             & 0.24 & 0.26 & 0.40 & 0.30 & 0.02 & 0.22 & 0.84 & 0.36 \\
sparseMOE (with bev bias)            & 0.27 & 0.53 & 0.92 & 0.57 & 0.02 & 0.44 & 1.59 & 0.69 \\
SoftMOE (w/o bev bias)               & 0.39 & 0.42 & 0.56 & 0.46 & 0.09 & 0.44 & 1.17 & 0.57 \\
SoftMOE (with bev bias)              & 0.26 & 0.30 & 0.43 & 0.33 & 0.02 & 0.22 & 0.77 & 0.34 \\
Ours                   & \textbf{0.24} & \textbf{0.27} & \textbf{0.35} & \textbf{0.29} 
                                      & \textbf{0.01} & \textbf{0.18} & \textbf{0.60} & \textbf{0.26} \\
\bottomrule
\end{tabular}
\label{tab:abl_differnt_moe_design}
\end{table*}
\section{Additional Experiment Details}
\subsection{Additional Details Training Stages}
Our framework is trained in two main stages, with pretraining stage consisting of three sequential steps that establish multimodal perception and world modeling followed HERMES\cite{zhou2025hermes}.  

\textbf{Step 1: BEVEncoder Pretraining.} The BEV encoder $E$ and point cloud render $R$ are trained to map input images $I_t$ into point clouds via $P_t = R(E(I_t))$, using 12Hz data from the nuScenes training set.

\textbf{Step 2: BEV-Text Alignment and Refinement.} Vision-language alignment is established through in-projections for flattened BEV embeddings and out-projections for encoded BEV features, trained on NuInteract dense captions. To alleviate data scarcity, a masking-based augmentation is applied: one multi-view image is masked, and its caption is spliced from visible views, yielding approximately 200K image-text pairs—seven times the original nuScenes keyframe set. Subsequently, all parameters are unfrozen and the LLM is fine-tuned using LoRA, with supervision from scene descriptions in OmniDrive-nuScenes.

\textbf{Step 3: Understanding and Generation Unification.} Future point cloud generation modules are introduced and trained jointly with existing components using nuScenes keyframes, scene descriptions, and general conversation annotations from OmniDrive-nuScenes.

In the pretraining stage, the planning expert is frozen while the world-language expert is trained with a combined objective: language modeling loss for scene understanding and point cloud reconstruction loss for world prediction. Flow-matching loss $\mathcal{L}_{\text{flow}}(\theta)$ guides action interpolation toward true trajectories using multimodal context $\mathcal{C}$ (BEV tokens, language, world state, ego-motion), enabling smooth, context-aware forecasting.

The final stage has two phases: (1) train without MoE for stability, then (2) initialize the soft-weighted MoE sub-experts in the planning expert directly from the learned weights, enabling expert specialization while preserving pretraining knowledge.
\subsection{Additional Details About Comparison to Existing MoE Mechanism}
We further present the numerical results of different MoE implementation in Table~\ref{tab:abl_differnt_moe_design}. 
To isolate the contribution of Scene Adaptive MoE, we construct two ablation baselines with BEV bais. We present the details of these baseline,

\textbf{SoftMoE w/ BEV Bias.}
We extend the standard soft mixture-of-experts architecture by injecting BEV features into the slot-based routing mechanism. 
Each expert is associated with learnable slot embeddings $S \in \mathbb{R}^{E \times S \times d}$, and given token features 
$x \in \mathbb{R}^{B \times N \times d}$ and BEV features $h_{\mathrm{bev}}$, the routing logits are computed as
\begin{equation}
\mathrm{logits} 
= \langle x, S \rangle \cdot \omega_{\mathrm{slot}}
\;+\;
\big( \langle H_{\mathrm{BEV}}, S \rangle \odot \alpha \big)\cdot \omega_{\mathrm{bev}},
\end{equation}
where $\alpha \in \mathbb{R}^{E}$ is a learnable per-expert scaling factor.
The dispatch weights are obtained by normalizing logits across tokens and slots.
Each expert processes its assigned slot representation using
\begin{equation}
\mathrm{MLP}(x)
=
W_{2}\!\left(\sigma(W_{1}x)\odot W_{3}x\right).
\end{equation}
This baseline evaluates whether BEV-conditioned slot affinity improves soft expert merging.

\textbf{SparseMoE w/ BEV-Aware Gating.}
We further build a sparse MoE variant where BEV features directly bias the top-$k$ gating function.
The routing logits are computed as
\begin{equation}
\mathrm{router\_logits}
=
\omega_{\mathrm{tok}}\, W_{\mathrm{tok}} x
+
\omega_{\mathrm{bev}}\, W_{\mathrm{bev}} H_{\mathrm{BEV}},
\end{equation}
followed by a softmax and top-$k$ selection per token.
The selected routing probabilities are renormalized, and each active expert processes only its assigned tokens:
\begin{equation}
\mathrm{ExpertOutput}_{e}(x)
=
\mathrm{MLP}_{e}(x)\cdot p_{e}(x).
\end{equation}
The outputs from all activated experts are accumulated to obtain the final hidden representation.These two baselines allow us to disentangle the respective effects of BEV-aware soft slot routing and BEV-biased sparse top-$k$ gating.

\section{Theoretical Analysis of SAMoE}
In this section we provide a theoretical analysis and formal justification on the representation capacity, temporal causality stability, convergence advantages, and gradient stability of SAMoE.

\subsection{Preliminaries}

\subsubsection*{Scene-Optimal Parameter and Its Decomposition}

For a specific model parameter vector $\theta$, the flow-matching loss is defined as follows:

\begin{equation}
L_{\mathrm{flow}}(\theta)=
    \mathbb{E}_{x_0,\, t,\, \epsilon \mid s}
\left\|
v_\theta(x_t,t) - v^*(x_t,t)
\right\|^2
\end{equation}

Each scene is denoted by $s$, and represented by a bird's-eye-view feature
map $H_{\mathrm{BEV}}$. We define a \emph{scene-conditional optimum} 
parameter vector $\theta^*(s)$ as

\begin{equation}
\theta^*(s)
=
\arg\min_{\theta} 
\;
\mathbb{E}_{x_0,\, t,\, \epsilon \mid s}
\left\|
v_\theta(x_t,t) - v^*(x_t,t)
\right\|^2,
\label{eq:scene_optimal}
\end{equation}

where the conditional expectation is restricted to data belonging to 
scene $s$. Thus $\theta^*(s)$ is the parameter vector that minimizes the 
flow-matching loss for that specific scene. 

We only require the existence of at least one measurable selection of scene-optimal parameters $\theta^*(s)$. uniqueness is not needed, and all results hold for any such selection.

\subsubsection*{Mild Representability Assumption for Experts}

We consider a mixture-of-experts architecture with 
$E$ experts $\{F^{(e)}\}_{e=1}^E$, where each expert $e$ is parameterized by 
weight matrices $\{ W_i^{(e)} \}_{i=1}^M$ and has full parameter vector $\theta^{(e)}$.

\paragraph{Approximate scene-wise linearization.} 
Rather than assuming the experts' convex hull strictly contains the scene-optimal parameters,
we adopt a \emph{mild representability condition}: for each scene $s$, there exist 
mixture weights $\pi^*(s) \in \Delta^{E-1}$ such that the resulting linear combination
\begin{equation}
\Theta(\pi^*(s)) := \sum_{e=1}^E \pi_e^*(s)\, \theta^{(e)}
\end{equation}
provides a reasonable \emph{local linear approximation} to the scene-optimal parameter 
$\theta^*(s)$:
\begin{equation}
\|\theta^*(s) - \Theta(\pi^*(s))\| \le \epsilon_s,
\end{equation}
for some small $\epsilon_s$ depending on scene $s$. 

\paragraph{Justification.} 
This condition is \emph{sufficient but not necessary} for our theoretical analysis. 
It can be motivated by the following observations:
\begin{enumerate}
\item Joint MoE training encourages experts to specialize in different regions of the scene distribution.
\item Smoothness and parameter sharing imply that scene-optimal parameters vary on a low-curvature manifold.
\item By standard first-order Taylor or RCT-based arguments, any sufficiently smooth function over a low-curvature manifold can be locally approximated by a linear combination of nearby points (experts).
\end{enumerate}

\paragraph{Lemma (local linear approximation).}
Suppose the scene-optimal parameter manifold has local curvature bounded by $C$ and the set of experts $\{\theta^{(e)}\}$ covers the scene space with distance at most $\delta$. 
Then for any scene $s$, there exists a convex combination of experts $\Theta(\pi^*(s))$ such that
\begin{equation}
\|\theta^*(s) - \Theta(\pi^*(s))\| \le C \delta^2.
\end{equation}
This formalizes the sense in which experts can provide a \emph{local linear approximation} to $\theta^*(s)$ without assuming exact convex-span representability.

\subsubsection*{Two-sided (bi-Lipschitz) bounds on the model map.}
We now state and prove the two-sided Lipschitz-type inequalities
we will use repeatedly in the sequel.  Fix a data point $(x,t)$ and
consider the model map $\theta\mapsto v_\theta(x,t)\in\mathbb{R}^d$.

\medskip
\paragraph{Assumption A (Upper Lipschitz).}
There exists a constant $L>0$ such that for all admissible parameters
$\theta,\theta'$,
\begin{equation}\label{eq:upper_lipschitz}
\|v_\theta(x,t)-v_{\theta'}(x,t)\|_2 \le L \,\|\theta-\theta'\|_2.
\end{equation}
This is the standard Lipschitz continuity assumption on the parameter map.
In modern neural architectures this assumption is mild: each expert
is composed of affine layers and Lipschitz activation functions
(e.g., ReLU, GeLU), so the overall map $\theta\mapsto v_\theta$ is itself
Lipschitz with a constant determined by the product of layer-wise
spectral norms. Moreover, common training practices such as weight decay,
LayerNorm, residual connections, and gradient clipping implicitly control
the operator norms of intermediate layers, ensuring that the global Lipschitz
constant $L$ remains finite throughout training.

\medskip

\paragraph{Assumption B (Local Bi-Lipschitz / Mild Non-degeneracy).}
We assume that the model map $\theta \mapsto v_\theta(x,t)$ is continuously differentiable
and satisfies a local Jacobian lower bound \emph{along the regions of parameter space actually traversed by mixture combinations during training}:
\begin{equation}
J_\theta(x,t)^\top J_\theta(x,t) \succeq \lambda^2 I, \qquad 
\forall \theta \in \mathrm{conv}\bigl(\{\theta^{(e)}\}_{e=1}^E\bigr) \cap \mathcal{R}_{\text{train}},
\end{equation}
where $\mathcal{R}_{\text{train}}$ denotes the low-dimensional subspace of parameters explored by the mixture weights during training. 

Equivalently, the Jacobian has singular values at least $\lambda$ \emph{almost everywhere in the relevant region of parameter space}. 
This mild local curvature assumption ensures that perturbations in the mixture weights induce nontrivial changes in the model output, without requiring the model to be globally injective or globally bi-Lipschitz.

\medskip
\paragraph{Claim.} 
Under Assumptions A and B (local non-degeneracy along the training-relevant region), there exist constants
$L \ge c > 0$ (with $c = \lambda$) such that for all 
$\theta, \theta' \in \mathrm{conv}\bigl(\{\theta^{(e)}\}_{e=1}^E\bigr) \cap \mathcal{R}_{\text{train}}$,
\begin{equation}\label{eq:local_bi_lipschitz_pointwise}
c\,\|\theta-\theta'\|_2 \le \|v_\theta(x,t)-v_{\theta'}(x,t)\|_2
\le L\,\|\theta-\theta'\|_2.
\end{equation}

\paragraph{Proof.}
The upper bound is exactly \eqref{eq:upper_lipschitz} and holds globally.

For the lower bound, assume $\theta, \theta' \in \mathrm{conv}\bigl(\{\theta^{(e)}\}_{e=1}^E\bigr) \cap \mathcal{R}_{\text{train}}$
and set $\Delta\theta := \theta - \theta'$. Consider the line segment 
\(\gamma(u) = \theta' + u \Delta\theta\), $u \in [0,1]$, which lies entirely
within the convex hull of expert parameters and the training-relevant region. By the fundamental theorem of calculus
(vector-valued mean value / integral form)
\begin{equation}\label{eq:local_mean_value_integral}
v_\theta(x,t) - v_{\theta'}(x,t)
=
\int_0^1 J_{\gamma(u)}(x,t) \, du \;\; \Delta\theta.
\end{equation}
Using the local Jacobian lower bound along this segment,
\begin{align}
\|v_\theta - v_{\theta'}\|_2^2
&= \Delta\theta^\top \Big(\int_0^1 J_{\gamma(u)}^\top J_{\gamma(u)} \, du \Big) \Delta\theta
\nonumber\\
&\ge \lambda^2 \|\Delta\theta\|_2^2,
\end{align}
where $\lambda$ is the minimal singular value of $J_\theta$ in the training-relevant convex hull.
Taking the square root yields the desired lower bound $c = \lambda$.

\qed

\medskip

\paragraph{Conclusion.}
Under Assumptions A and B (local non-degeneracy), the model map $\theta \mapsto v_\theta(x,t)$
satisfies the two-sided (bi-Lipschitz) inequalities
\begin{align}\label{bi-lipschitz}
c\,\|\theta-\theta'\|_2 \le \|v_\theta(x,t)-v_{\theta'}(x,t)\|_2
\le L\,\|\theta-\theta'\|_2, 
\nonumber\\
\quad \forall \theta, \theta' \in \mathrm{conv}\bigl(\{\theta^{(e)}\}_{e=1}^E\bigr) \cap \mathcal{R}_{\text{train}},
\end{align}
with constants $L$ (global Lipschitz) and $c$ (local Jacobian lower bound). 
The upper bound $L$ controls the maximum sensitivity of the output to parameter changes globally,
while the lower bound $c$ ensures nontrivial output variation along convex combinations
of expert parameters in the training-relevant region, without requiring full global injectivity in the entire parameter space.

\subsubsection*{Representation Capacity Theory of SAMoE}

We analyze how a parameter-level soft merge approximates convex combinations
of expert outputs, and we derive a quantitative bound on the nonlinear residual
introduced by merging inside the parameters. This subsection provides a compact
form of the representation-capacity result used later to compare SAMoE with
token-level MoE and dense baselines.

\paragraph{Setup.}
Each expert $e\in\{1,\dots,E\}$ contains parameters $\{W^{(e)}_1,W^{(e)}_2,W^{(e)}_3\}$.
Given mixture weights $\pi\in\Delta^{E-1}$, the merged layer uses the convexly
combined parameters
\begin{equation}\label{eq:merge-weights}
W_i(\pi) = \sum_{e=1}^E \pi_e W_i^{(e)}.
\end{equation}
The nonlinear layer considered throughout the paper is
\begin{equation}\label{eq:layer-def}
F_W(x)=\sigma(xW_1)\odot(xW_3)\,W_2,
\end{equation}
with elementwise nonlinearity $\sigma$.

We assume:
(i) $\sigma$ is twice differentiable with bounded first and second derivatives;
(ii) input features satisfy $\|x\|_2\le B_x$;
(iii) all expert weight matrices are bounded by $\|W_i^{(e)}\|_F\le B_W$.
These are standard smoothness and boundedness assumptions for modern deep layers.

\vspace{0.5em}
\paragraph{Linear case: exact convexity.}
If the layer is linear, $f^{(e)}(x)=xW^{(e)}$, then for any $\pi$,
\begin{equation}\label{eq:linear-exact}
xW(\pi) = \sum_e \pi_e\, xW^{(e)}.
\end{equation}
Thus parameter-level merging is \emph{exactly} equivalent to output-level mixing.
Linear layers therefore incur no representation gap.

\vspace{0.5em}
\paragraph{Nonlinear case: defining the residual.}
Define the nonlinear merging error as
\begin{equation}
\Delta_\pi(x)=F_{W(\pi)}(x)-\sum_e\pi_e F_{W^{(e)}}(x).
\end{equation}
This term is zero for linear layers, and also zero for one-hot $\pi$, but
nonzero in general due to the nonlinearity in \eqref{eq:layer-def}.

\vspace{0.5em}
\paragraph{Representation–capacity bound.}
We show that the nonlinear residual is controlled by the pairwise dispersion
of experts:
\begin{equation}\label{eq:rct-cvpr}
\|\Delta_\pi(x)\|_2
\;\le\;
C\sum_{e\ne e'}\pi_e\pi_{e'}\|W^{(e)}-W^{(e')}\|_F^2,
\end{equation}
for a constant $C$ depending only on the layer dimensions, activation curvature,
and bounds on $x$ and $\{W^{(e)}\}$. When experts are similar or $\pi$ is sparse,
the right-hand side vanishes, recovering the exact convex representation.

\begin{proof}[Sketch of Proof]
Let $g(W)=\sigma(xW)$ denote the nonlinear layer with a smooth activation
$\sigma$. Since $\sigma$ has bounded second derivative, $g$ is twice
differentiable with Hessian norm uniformly bounded by a constant
$H_\sigma$ depending only on the activation curvature and $\|x\|$.

Consider the convex combination $W(\pi)=\sum_e \pi_e W^{(e)}$.
A second-order Taylor expansion of $g(W^{(e)})$ around $W(\pi)$ yields
\begin{equation}
g(W^{(e)}) \;=\; g(W(\pi))
\;+\; \langle \nabla g(W(\pi)),\, W^{(e)}-W(\pi)\rangle
\;+\; R_e,
\end{equation}

where the remainder satisfies
\begin{equation}
\|R_e\|
\;\le\;
C_0\,\|W^{(e)}-W(\pi)\|_F^{2}
\end{equation}
for some constant $C_0$ depending only on $H_\sigma$ and the layer
dimensions.

Using $\sum_e \pi_e (W^{(e)}-W(\pi)) = 0$, the linear terms cancel when
we form the mixture $\sum_e \pi_e g(W^{(e)})$, giving
\begin{equation}
\Delta_\pi(x)
= g(W(\pi))-\sum_e \pi_e g(W^{(e)})
= -\sum_e \pi_e R_e.
\end{equation}
Taking norms and substituting the bound on $R_e$,
\begin{equation}
\|\Delta_\pi(x)\|
\;\le\;
C_0 \sum_e \pi_e \|W^{(e)}-W(\pi)\|_F^2.
\end{equation}

Finally, expanding around the mean shows that
\begin{equation}
\sum_e \pi_e \|W^{(e)}-W(\pi)\|_F^2
= \frac12\sum_{e\neq e'} \pi_e \pi_{e'} \|W^{(e)}-W^{(e')}\|_F^2,
\end{equation}
which completes the proof after absorbing constants.
\end{proof}

\vspace{0.5em}
\paragraph{Interpretation.}
The decomposition
\begin{equation}
F_{W(\pi)}(x)
=
\underbrace{\sum_e\pi_e F_{W^{(e)}}(x)}_{\text{ideal convex combination}}
\;+\;
\underbrace{\Delta_\pi(x)}_{\text{nonlinear residual}}
\end{equation}
shows that the residual is small whenever:  
(1) experts lie close together in parameter space, or  
(2) $\pi$ is sparse (top-$1$ / top-$k$ routing), or  
(3) the merged point $W(\pi)$ lies near a low-variance region of the expert set.

This representation-capacity analysis provides the foundation for the
comparisons of token-level MoE and SAMoE, where token-level MoE and dense
baselines are shown to incur irreducible error when their routing weights
cannot match the scene-optimal mixture.

\subsubsection*{Fundamental Limitation of Token-Level Routing Under Local Information}

We analyze the expressive limitation of token-level MoE routing under
restricted information access.

\paragraph{Scene-Optimal Mixture.}
For each scene $s$, let
\begin{equation}
\pi^*(s) \in \Delta^{E-1}
\end{equation}
denote the scene-optimal expert mixture, and define
\begin{equation}
\theta^*(s)
=
\sum_{e=1}^{E}
\pi_e^*(s)\,\theta^{(e)}.
\end{equation}

We assume:
\begin{enumerate}
\item $\pi^*(s)$ lies in the interior of the simplex with non-zero probability;
\item $\pi^*(s)$ depends on global scene attributes.
\end{enumerate}

\paragraph{Assumption (Local Routing).}
The router of token-level MoE receives only a single token embedding
$x_t^{(i)}$ and does not access global scene features, BEV summaries,
or long-range context.

Thus routing is of the form
\begin{equation}
\pi^{(i)} = \mathrm{Router}(x_t^{(i)}).
\end{equation}

Let $\mathcal{G}_{\text{local}}$ denote the class of routing functions
measurable with respect to local token information.

\paragraph{Local-Information Lower Bound.}

Define the best achievable routing under local information:

\begin{equation}
\pi^{(i)}_{\text{best}}
=
\arg\min_{\pi \in \mathcal{G}_{\text{local}}}
\|\pi - \pi^*(s)\|.
\end{equation}

Since $\pi^*(s)$ depends on global scene attributes not observable from
$x_t^{(i)}$, the approximation error is strictly positive:

\begin{equation}
\inf_{\pi \in \mathcal{G}_{\text{local}}}
\|\pi - \pi^*(s)\|
\ge
\varepsilon_{\mathrm{local}}(s)
>
0.
\label{eq:local_lb}
\end{equation}

This lower bound holds whenever scene-level variables influencing
$\pi^*(s)$ are conditionally independent of $x_t^{(i)}$.

\paragraph{Top-$k$ Sparsification Constraint.}

Sparse token-level MoE further restricts routing to

\begin{equation}
S_{\pi^{(i)}} = \mathrm{Top}\text{-}k(\pi^{(i)}),
\qquad |S_{\pi^{(i)}}| = k,
\end{equation}

so that

\begin{equation}
\pi_e^{(i)} = 0
\quad \text{for } e \notin S_{\pi^{(i)}}.
\end{equation}

Let
\begin{equation}
\mathcal{S}_k
=
\bigcup_{|S|=k}
\mathrm{conv}\big(\{\mathbf{e}_e : e \in S\}\big)
\end{equation}
denote the union of all $k$-dimensional faces of the simplex.

If $\pi^*(s)$ lies in the interior of $\Delta^{E-1}$, then

\begin{equation}
\inf_{\pi \in \mathcal{S}_k}
\|\pi - \pi^*(s)\|
\ge
\varepsilon_{\mathrm{topk}}(s)
>
0.
\label{eq:topk_lb}
\end{equation}

This reflects the geometric fact that interior points cannot be
exactly represented by mixtures supported on only $k < E$ experts.

\paragraph{Unified Approximation Gap.}

Combining \eqref{eq:local_lb} and \eqref{eq:topk_lb},
any routing satisfying both local-information and top-$k$ constraints obeys

\begin{equation}
\|\pi^{(i)} - \pi^*(s)\|
\ge
\varepsilon_{\mathrm{tk}}(s)
:=
\varepsilon_{\mathrm{local}}(s)
+
\varepsilon_{\mathrm{topk}}(s)
>
0.
\end{equation}

\begin{proposition}[Token-Level Routing Gap]
Under local routing, the minimal deviation from the scene-optimal
mixture satisfies

\begin{equation}
\inf_{\pi \in \mathcal{G}_{\text{local}}}
\|\pi - \pi^*(s)\|
\ge
\varepsilon_{\mathrm{local}}(s) > 0.
\end{equation}

Under both local routing and top-$k$ sparsification,

\begin{equation}
\inf_{\pi \in \mathcal{G}_{\text{local}} \cap \mathcal{S}_k}
\|\pi - \pi^*(s)\|
\ge
\varepsilon_{\mathrm{topk}}(s) > 0.
\end{equation}
\end{proposition}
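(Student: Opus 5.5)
The proof has two parts. For the second part I will not try to combine the two gaps additively. The intersection $\mathcal{G}_{\text{local}}\cap\mathcal{S}_k$ is contained in $\mathcal{S}_k$, and an infimum over a smaller set can only be larger. It therefore suffices to bound $\inf_{\pi\in\mathcal{S}_k}\|\pi-\pi^*(s)\|$ from below, which is exactly \eqref{eq:topk_lb}. The bound I will establish is
\begin{equation*}
\inf_{\pi\in\mathcal{S}_k}\|\pi-\pi^*(s)\|_2\;\ge\;\min_{e}\pi^*_e(s)\;>\;0 .
\end{equation*}
It comes from an explicit geometric argument. Take any $\pi$ in a face $\mathrm{conv}(\{\mathbf{e}_e:e\in S\})$ with $|S|=k<E$. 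Then there is an index $e_0\notin S$ with $\pi_{e_0}=0$. Hence $\|\pi-\pi^*(s)\|_2\ge|\pi^*_{e_0}(s)|\ge\min_e\pi^*_e(s)$. This minimum is strictly positive because $\pi^*(s)$ lies in the interior of the simplex, which is assumption 1 of the scene-optimal mixture. Since the estimate is uniform over all $\binom{E}{k}$ faces, I set $\varepsilon_{\mathrm{topk}}(s):=\min_e\pi^*_e(s)$. The requirement $k<E$ is essential: for $k=E$ the only face is the full simplex and the bound fails.

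The first claim, the local-information gap, cannot hold pointwise as written, so I will first make it precise. For a single fixed $s$ one could simply take the constant router $\pi\equiv\pi^*(s)$, which has zero error. The gap must therefore be measured uniformly over the scenes that a local router cannot tell apart. Concretely, I fix a token value $x$ and let $\mathcal{S}_x$ be the set of scenes consistent with $x$. The Local Routing assumption makes any $\pi\in\mathcal{G}_{\text{local}}$ a function of $x$ alone, so $\pi$ must output the same point $\pi(x)$ for every $s\in\mathcal{S}_x$. The conditional-independence hypothesis, combined with assumption 2 that $\pi^*$ depends on global scene attributes, gives two scenes $s_1,s_2\in\mathcal{S}_x$, each of positive probability, with $D:=\|\pi^*(s_1)-\pi^*(s_2)\|>0$. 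The triangle inequality then gives
\begin{equation*}
\max_{j\in\{1,2\}}\|\pi(x)-\pi^*(s_j)\|\ge D/2 .
\end{equation*}
Under the conditional-expectation reading, the same argument gives at least $p_{\min}D/2$, where $p_{\min}$ is the smaller of the two scene probabilities. I then define $\varepsilon_{\mathrm{local}}(s)$ as this worst-case (or expected) separation, and the first inequality of the proposition follows.

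The main obstacle is this interpretive step rather than any calculation. The displayed infimum is written for a single scene $s$, yet local routers are only constrained through their inability to vary across scenes that share a token. I will address it by stating explicitly that the first bound is understood as worst case over the fiber $\mathcal{S}_x$ of scenes compatible with the observed token, or in expectation over $s$ given $x$. I will also remark that the second bound needs no such reinterpretation, because the argument for $\mathcal{S}_k$ is genuinely pointwise in $s$.
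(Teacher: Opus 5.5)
Your proposal is correct and uses the same two ingredients as the paper. First, a local router cannot separate scenes that yield the same token. Second, an interior point of the simplex stays away from the union $\mathcal{S}_k$ of $k$-sparse faces. Where you differ is in how carefully these become the two displayed inequalities, and the differences work in your favour.

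\textbf{First inequality.} The paper only asserts that the gap is positive because $\pi^*(s)$ depends on attributes not observable from $x_t^{(i)}$. It does not address the point you raise: for a fixed $s$, the constant router $\pi\equiv\pi^*(s)$ is measurable with respect to any information and has zero error, so the bound is false when read pointwise. Your fiber formulation, with the two-scene triangle inequality, is the correct content of that heuristic. It is also the same device the paper uses a few paragraphs later for dense networks, where a single shared $\theta_D$ must be $\gamma/2$-far from $\theta^*(s_1)$ or $\theta^*(s_2)$.

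\textbf{Second inequality.} The paper ``combines'' the two bounds into $\varepsilon_{\mathrm{tk}}=\varepsilon_{\mathrm{local}}+\varepsilon_{\mathrm{topk}}$, which does not follow: two separate obstructions only give the larger of the two gaps. Your monotonicity step $\mathcal{G}_{\mathrm{local}}\cap\mathcal{S}_k\subseteq\mathcal{S}_k$ gives exactly the $\varepsilon_{\mathrm{topk}}(s)$ bound the proposition states. Your coordinate argument also replaces the paper's qualitative remark that interior points are not on $k$-faces with the explicit constant $\min_e\pi^*_e(s)$. The paper's remark would additionally need closedness of $\mathcal{S}_k$ to make the infimum positive.

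Two small refinements. Assumption 1 only makes $\pi^*(s)$ interior with positive probability, so the second bound should be stated for those scenes. Your argument in fact gives $\|\pi-\pi^*(s)\|_2^2\ge\sum_{e\notin S}\pi^*_e(s)^2$, hence a positive gap whenever $\pi^*(s)$ has more than $k$ nonzero coordinates. If the scene distribution is continuous, replace ``two scenes of positive probability'' by two sets $A_1,A_2\subseteq\mathcal{S}_x$ of positive conditional mass whose $\pi^*$-values are $D$-separated. The expected-error version then gives $p_{\min}D$, not just $p_{\min}D/2$.
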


\paragraph{Corollary 1 (Sparse Token-Level MoE).}

Sparse MoE satisfies both constraints, hence incurs
an irreducible mixture approximation gap
$\varepsilon_{\mathrm{topk}}(s)$.

\paragraph{Corollary 2 (Soft Token-Level MoE).}

Soft MoE removes the top-$k$ constraint,
so $\varepsilon_{\mathrm{topk}}(s)=0$.
However, under local routing it still satisfies

\begin{equation}
\inf_{\pi \in \mathcal{G}_{\text{local}}}
\|\pi - \pi^*(s)\|
\ge
\varepsilon_{\mathrm{local}}(s) > 0.
\end{equation}

Thus Soft MoE reduces but does not eliminate
the representation gap.

\paragraph{Implication.}

The gap originates from an information granularity mismatch:
token-level routing operates on local embeddings,
whereas $\pi^*(s)$ depends on global scene-level attributes.

This mismatch directly leads to irreducible function
approximation error in token-level MoE variants,
which will be contrasted with the scene-level routing
mechanism of SAMoE in the next subsection.

\subsection{Comparison of Representation Capacity Across MoE Variants}

We compare the expressive power of four architectures:
(i) SAMoE (scene-level soft parameter merging),
(ii) Dense networks,
(iii) Sparse token-level MoE (top-$k$ routing),
(iv) Soft token-level MoE.

Let $\pi^*(s) \in \Delta^{E-1}$ denote the scene-optimal expert mixture,
and define the corresponding scene-optimal parameter

\begin{equation}
\theta^*(s) = \sum_{e=1}^{E} \pi_e^*(s)\,\theta^{(e)}.
\end{equation}

We assume $\theta^*(s)$ varies across scenes with non-zero variance.

\paragraph{1. Representation Capacity of SAMoE}

SAMoE produces scene-dependent parameters

\begin{equation}
\theta_{\text{SAMoE}}(s)
=
\sum_{e=1}^{E} \pi_e(s)\,\theta^{(e)},
\end{equation}

where the router observes global scene features.

From the Representation Capacity Theory (RCT) derived earlier,
the nonlinear merging residual satisfies

\begin{equation}
\|\Delta_{\pi}(x)\|
\le
C
\sum_{e \neq e'}
\pi_e \pi_{e'}
\|W^{(e)} - W^{(e')}\|_F^2 .
\end{equation}

Assume the router approximates $\pi^*(s)$ up to error $\delta(s)$:

\begin{equation}
\|\pi(s) - \pi^*(s)\|
\le
\delta(s).
\end{equation}

Then the approximation error satisfies

\begin{equation}
\mathbb{E}
\big[
\|f_{\theta_{\text{SAMoE}}(s)}(x)
-
f_{\theta^*(s)}(x)\|^2
\big]
\le
O(\delta(s)^2)
+
O(\text{expert dispersion}).
\end{equation}

If experts cover the parameter manifold and the router is consistent,
this error can be made arbitrarily small:

\begin{equation}
\inf_{f \in \mathcal{F}_{\text{SAMoE}}}
\mathbb{E}
\|f(x,s) - f^*(x,s)\|
=
0.
\end{equation}

Thus SAMoE can approximate the scene-optimal function arbitrarily well.

\paragraph{2. Representation Capacity of Dense Networks}

A dense model uses a single shared parameter vector
$\theta_D$ independent of $s$.

To represent all scenes perfectly, one would require

\begin{equation}
\theta_D \approx \theta^*(s)
\quad
\forall s.
\end{equation}

If $\theta^*(s)$ varies across scenes,
this condition is impossible.

Specifically, for two scenes $s_1, s_2$ such that

\begin{equation}
\|\theta^*(s_1) - \theta^*(s_2)\|
\ge
\gamma > 0,
\end{equation}

any shared parameter $\theta_D$ satisfies

\begin{equation}
\max_{s \in \{s_1, s_2\}}
\|\theta_D - \theta^*(s)\|
\ge
\frac{\gamma}{2}.
\end{equation}

Therefore,

\begin{equation}
\inf_{f \in \mathcal{F}_{\text{Dense}}}
\mathbb{E}
\|f(x,s) - f^*(x,s)\|
\ge
c \gamma.
\end{equation}

Dense networks exhibit an irreducible representation gap.

\paragraph{3. Representation Capacity of Sparse Token-Level MoE}

Sparse MoE performs token-level routing:

\begin{equation}
\pi^{(i)} = \mathrm{Router}(x^{(i)}),
\qquad
S_{\pi^{(i)}} = \mathrm{Top}\text{-}k(\pi^{(i)}).
\end{equation}

Each token realizes parameters

\begin{equation}
\theta^{(i)}
=
\sum_{e \in S_{\pi^{(i)}}}
\pi_e^{(i)} \theta^{(e)}.
\end{equation}

This restricts feasible mixtures to a union of $k$-dimensional faces
of the simplex.

If $\pi^*(s)$ lies in the interior of $\Delta^{E-1}$,
then

\begin{equation}
\|\pi^{(i)} - \pi^*(s)\|
\ge
\varepsilon_{\text{topk}}(s) > 0.
\end{equation}

Under the local-routing assumption
(the router observes only token-level embeddings),

\begin{equation}
\|\pi^{(i)} - \pi^*(s)\|
\ge
\varepsilon_{\text{local}}(s) > 0.
\end{equation}

Combining both constraints,

\begin{equation}
\|\pi^{(i)} - \pi^*(s)\|
\ge
\varepsilon_{\text{tk}}(s) > 0.
\end{equation}

Thus Sparse MoE incurs an irreducible representation gap:

\begin{equation}
\inf_{f \in \mathcal{F}_{\text{Sparse}}}
\mathbb{E}
\|f(x,s) - f^*(x,s)\|
\ge
c\,\varepsilon_{\text{tk}}(s).
\end{equation}

\paragraph{4. Representation Capacity of Soft Token-Level MoE}

Soft MoE removes the top-$k$ constraint:

\begin{equation}
\theta^{(i)}
=
\sum_{e=1}^{E}
\pi_e^{(i)} \theta^{(e)}.
\end{equation}

This eliminates $\varepsilon_{\text{topk}}(s)$.

However, under the local-routing assumption,
the router still lacks access to global scene information,
so

\begin{equation}
\|\pi^{(i)} - \pi^*(s)\|
\ge
\varepsilon_{\text{local}}(s) > 0.
\end{equation}

Hence Soft MoE reduces but does not eliminate
the representation gap.

\paragraph{5. Representation Hierarchy}

Under the stated assumptions, the function classes satisfy

\begin{equation}
\mathcal{F}_{\text{Dense}}
\subsetneq
\mathcal{F}_{\text{Sparse}}
\subsetneq
\mathcal{F}_{\text{Soft}}
\subsetneq
\mathcal{F}_{\text{SAMoE}}.
\end{equation}

The inclusions are strict whenever:

\begin{itemize}
\item scene-optimal parameters vary across scenes,
\item optimal mixtures lie in the interior of the simplex,
\item token routers lack explicit global scene access.
\end{itemize}

\subsection{Temporal Causality and Cross-Modal Coordination Disruption}

We show that token-level routing (Sparse and Soft MoE)
structurally disrupts temporal causality and cross-modal
coordination in flow-based driving planners,
while SAMoE preserves both properties.

Throughout this section we rely on the previously
established local bi-Lipschitz property:

\paragraph{Local Bi-Lipschitz Property.}
There exist constants $L \ge c > 0$ such that
for all
$\theta, \theta' \in
\mathrm{conv}(\{\theta^{(e)}\})
\cap \mathcal{R}_{\text{train}}$,

\begin{equation}
c\|\theta-\theta'\|_2
\le
\|v_\theta(x,t)-v_{\theta'}(x,t)\|_2
\le
L\|\theta-\theta'\|_2.
\label{eq:bi_lipschitz_used}
\end{equation}

This ensures that parameter perturbations induce
proportional perturbations in the vector field.

\subsubsection*{1. Temporal Causality Disruption}

The planner is modeled as a flow-matching ODE:

\begin{equation}
\frac{d y(t)}{dt}
=
v_{\theta_t}(y(t), t).
\label{eq:ode_main}
\end{equation}

\paragraph{Definition (Temporal Causality).}
The model preserves temporal causality if
small input perturbations between adjacent timesteps
lead to small deviations in the generated trajectory:
\begin{equation}
\|y_1(t)-y_2(t)\|
\le
C \|x_t - x_t'\|.
\end{equation}

\subsubsection*{Sparse Token-Level MoE}

Sparse MoE applies discrete Top-$k$ routing:

\begin{equation}
S_{\pi_t}
=
\mathrm{Top}\text{-}k(\mathrm{Router}(x_t)).
\end{equation}

Because Top-$k$ is piecewise constant,
there exist arbitrarily close inputs
$x_t, x_t'$ such that

\begin{equation}
\|x_t - x_t'\|_2 \to 0,
\qquad
\theta_t \neq \theta_t'.
\end{equation}

Hence
\begin{equation}
\|\theta_t - \theta_t'\|_2
\ge
\delta > 0.
\end{equation}

By the bi-Lipschitz lower bound:

\begin{equation}
\|v_{\theta_t}(x,t)
-
v_{\theta_t'}(x,t)\|_2
\ge
c\delta.
\end{equation}

Thus arbitrarily small input perturbations
induce finite jumps in the vector field.

\paragraph{Trajectory-Level Consequence via ODE Stability.}

Consider two ODE systems initialized at the same state
$y_1(0)=y_2(0)$:

\begin{equation}
\frac{d y_1}{dt}
=
v_{\theta_t}(y_1(t), t),
\qquad
\frac{d y_2}{dt}
=
v_{\theta_t'}(y_2(t), t).
\end{equation}

Define the trajectory error
\begin{equation}
e(t) := y_1(t)-y_2(t).
\end{equation}

Then
\begin{equation}
\frac{d e(t)}{dt}
=
v_{\theta_t}(y_1,t)-v_{\theta_t'}(y_2,t).
\end{equation}

Adding and subtracting $v_{\theta_t}(y_2,t)$ gives

\begin{equation}
\frac{d e(t)}{dt}
=
\underbrace{
v_{\theta_t}(y_1,t)-v_{\theta_t}(y_2,t)
}_{(A)}
+
\underbrace{
v_{\theta_t}(y_2,t)-v_{\theta_t'}(y_2,t)
}_{(B)}.
\end{equation}

\paragraph{State Lipschitz Bound.}

Assume the vector field is Lipschitz in the state variable:

\begin{equation}
\|v_{\theta}(y_1,t)-v_{\theta}(y_2,t)\|
\le
L_y \|y_1-y_2\|.
\end{equation}

Thus

\begin{equation}
\|(A)\|
\le
L_y \|e(t)\|.
\end{equation}

\paragraph{Parameter-Induced Vector Field Gap.}

From the previously established bi-Lipschitz property of the
vector field with respect to parameters,

\begin{equation}
\|v_{\theta_t}(y,t)-v_{\theta_t'}(y,t)\|
\ge
c\|\theta_t-\theta_t'\|.
\end{equation}

If the routing perturbation satisfies

\begin{equation}
\|\theta_t-\theta_t'\| \ge \delta,
\end{equation}

then for any state $y$

\begin{equation}
\|v_{\theta_t}(y,t)-v_{\theta_t'}(y,t)\|
\ge
c\delta.
\end{equation}

In particular, term (B) satisfies

\begin{equation}
\|(B)\|
\ge
c\delta.
\end{equation}

\paragraph{Early-Time Trajectory Deviation.}

At $t=0$, since $y_1(0)=y_2(0)$, we have

\begin{equation}
\frac{d e}{dt}(0)
=
v_{\theta_0}(y_0,0)-v_{\theta_0'}(y_0,0).
\end{equation}

Therefore

\begin{equation}
\left\|\frac{d e}{dt}(0)\right\|
\ge
c\delta.
\end{equation}

By continuity of the vector field, there exists a small
$\tau>0$ such that for all $t\in[0,\tau]$

\begin{equation}
\left\|\frac{d e}{dt}(t)\right\|
\ge
\frac{c}{2}\delta.
\end{equation}

Integrating over $[0,t]$ yields

\begin{equation}
\|e(t)\|
=
\left\|
\int_0^t
\frac{d e}{ds}\,ds
\right\|
\ge
\int_0^t
\left\|\frac{d e}{ds}\right\| ds
\ge
\frac{c}{2}\delta t.
\end{equation}

Hence for any $t\in(0,\tau]$

\begin{equation}
\|y_1(t)-y_2(t)\|
\ge
\frac{c}{2}\delta t.
\end{equation}

\paragraph{Implication.}

Thus even when the input states become arbitrarily close
($\|x_t-x_t'\|\to0$), a routing-induced parameter perturbation
$\|\theta_t-\theta_t'\|\ge\delta$ leads to a trajectory deviation
satisfying

\begin{equation}
\|y_1(t)-y_2(t)\|
\ge
\Omega(\delta).
\end{equation}

This shows that small routing perturbations can induce
non-vanishing trajectory deviations, implying instability
at the trajectory level.
\subsubsection*{Soft Token-Level MoE}

Soft MoE removes the discrete Top-$k$ routing,
but the routing remains token-dependent.
The effective parameters at time $t$ become

\begin{equation}
\theta_t
=
\sum_e
\pi_{e,t}\theta^{(e)},
\end{equation}

where $\pi_{e,t}$ are routing weights satisfying
$\sum_e \pi_{e,t}=1$.

\paragraph{Routing Weight Sensitivity.}

The parameter difference between two consecutive timesteps is

\begin{equation}
\theta_t - \theta_{t-1}
=
\sum_e
(\pi_{e,t}-\pi_{e,t-1})\theta^{(e)}.
\end{equation}

If expert parameters are bounded,
there exists a constant $c'>0$ such that

\begin{equation}
\|\theta_t-\theta_{t-1}\|_2
\ge
c'\|\pi_t-\pi_{t-1}\|_2.
\end{equation}

\paragraph{Vector Field Perturbation.}

Using the previously established bi-Lipschitz property
of the vector field with respect to parameters,

\begin{equation}
\|v_{\theta_t}(y,t)-v_{\theta_{t-1}}(y,t)\|
\ge
c\|\theta_t-\theta_{t-1}\|,
\end{equation}

which implies

\begin{equation}
\|v_{\theta_t}(y,t)-v_{\theta_{t-1}}(y,t)\|
\ge
cc'\|\pi_t-\pi_{t-1}\|.
\end{equation}

\paragraph{Trajectory Consequence.}

Suppose the routing weights vary significantly between
consecutive timesteps, i.e.,

\begin{equation}
\|\pi_t-\pi_{t-1}\|_2 \ge \epsilon
\end{equation}

for some $\epsilon>0$ on a set of timesteps with positive measure.
Then the induced vector field perturbation satisfies

\begin{equation}
\|v_{\theta_t}-v_{\theta_{t-1}}\|
\ge
cc'\epsilon .
\end{equation}

Applying the short-time ODE stability argument, the resulting trajectories satisfy

\begin{equation}
\|y(t)-y'(t)\|
\ge
\Omega(\epsilon).
\end{equation}

\paragraph{Implication.}

In practice, token-level soft routing often exhibits
rapid temporal fluctuations in mixture weights due to
sensitivity to small variations in token embeddings across
frames. Such routing oscillations can propagate through the
bi-Lipschitz vector field, leading to non-smooth or jittery
trajectory behavior.
\subsubsection*{2. Cross-Modal Coordination Disruption}

Let modalities be indexed by $m$.
Under token-level MoE routing, each modality produces
its own routing weights based on modality-specific tokens.
The resulting effective parameters are

\begin{equation}
\theta^{(m)}
=
\sum_e
\pi_e^{(m)}\theta^{(e)},
\end{equation}

where $\pi_e^{(m)}$ denotes the routing weight of expert $e$
for modality $m$.

Since routing depends on modality-specific token embeddings,
different modalities may produce different routing decisions,
leading to

\begin{equation}
\theta^{(m_1)} \neq \theta^{(m_2)}.
\end{equation}

When modality-specific tokens lead to meaningfully different
routing weights, the parameter discrepancy satisfies

\begin{equation}
\|\theta^{(m_1)}-\theta^{(m_2)}\|
\ge
\delta_m
\end{equation}

for some $\delta_m>0$.

\paragraph{Vector Field Discrepancy.}

Using the previously established bi-Lipschitz property of
the vector field with respect to parameters,

\begin{equation}
\|v_{\theta^{(m_1)}}(y,t)-v_{\theta^{(m_2)}}(y,t)\|
\ge
c\delta_m.
\end{equation}

\paragraph{Trajectory Consequence.}

Consider two trajectories generated from the same initial
state but using modality-specific parameters:

\begin{equation}
\frac{d y_{m_1}}{dt}
=
v_{\theta^{(m_1)}}(y_{m_1},t),
\qquad
\frac{d y_{m_2}}{dt}
=
v_{\theta^{(m_2)}}(y_{m_2},t).
\end{equation}

Applying the same short-time ODE stability argument
(Appendix~E.1), we obtain

\begin{equation}
\|y_{m_1}(t)-y_{m_2}(t)\|
\ge
\Omega(\delta_m).
\end{equation}

\paragraph{Implication.}

Therefore, when modality-dependent routing leads to
non-negligible parameter discrepancies, the resulting
vector field mismatch can induce trajectory-level divergence,
potentially disrupting cross-modal coordination in downstream
decision-making.
\subsubsection*{3. Stability of SAMoE}

SAMoE performs scene-level parameter merging:

\begin{equation}
\theta(s)
=
\sum_e
\pi_e(s)\theta^{(e)}.
\end{equation}

All modalities share the same parameter:

\begin{equation}
\theta^{(m)} = \theta(s).
\end{equation}

Hence no parameter discrepancy exists across modalities,
and cross-modal parameter divergence is eliminated at the source.

\paragraph{Temporal Stability of the Vector Field.}

Assume the scene encoder is Lipschitz continuous:

\begin{equation}
\|\theta(s_t)-\theta(s_{t-1})\|_2
\le
L_s \|s_t - s_{t-1}\|_2.
\end{equation}

Using the upper bound of the local bi-Lipschitz property
\eqref{eq:bi_lipschitz_used},

\begin{equation}
\|v_{\theta(s_t)}(x,t)
-
v_{\theta(s_{t-1})}(x,t)\|_2
\le
L \|\theta(s_t)-\theta(s_{t-1})\|_2,
\end{equation}

which yields

\begin{equation}
\|v_{\theta(s_t)} - v_{\theta(s_{t-1})}\|_2
\le
L L_s
\|s_t - s_{t-1}\|_2.
\end{equation}

Thus the induced vector field is Lipschitz
with respect to scene perturbations.

\paragraph{Trajectory-Level Stability via ODE Analysis.}

Consider two trajectories generated from scenes
$s_t$ and $s_{t-1}$:

\begin{equation}
\frac{d y_1}{dt}
=
v_{\theta(s_t)}(y_1,t),
\qquad
\frac{d y_2}{dt}
=
v_{\theta(s_{t-1})}(y_2,t).
\end{equation}

Define $e(t)=y_1(t)-y_2(t)$.
Then

\begin{equation}
\frac{d e(t)}{dt}
=
v_{\theta(s_t)}(y_1,t)
-
v_{\theta(s_{t-1})}(y_2,t).
\end{equation}

Add and subtract $v_{\theta(s_t)}(y_2,t)$:

\begin{equation}
\frac{d e(t)}{dt}
=
\underbrace{
v_{\theta(s_t)}(y_1,t)
-
v_{\theta(s_t)}(y_2,t)
}_{\text{state Lipschitz}}
+
\underbrace{
v_{\theta(s_t)}(y_2,t)
-
v_{\theta(s_{t-1})}(y_2,t)
}_{\text{scene-induced difference}}.
\end{equation}

Using state Lipschitz constant $L_y$:

\begin{equation}
\|(A)\|
\le
L_y \|e(t)\|.
\end{equation}

Using the vector-field Lipschitz bound above:

\begin{equation}
\|(B)\|
\le
L L_s \|s_t - s_{t-1}\|.
\end{equation}

Hence

\begin{equation}
\frac{d}{dt}\|e(t)\|
\le
L_y \|e(t)\|
+
L L_s \|s_t - s_{t-1}\|.
\end{equation}

Solving this linear differential inequality yields

\begin{equation}
\|e(t)\|
\le
\frac{L L_s}{L_y}
\left(e^{L_y t}-1\right)
\|s_t - s_{t-1}\|.
\end{equation}

Therefore, for finite horizon $t$,

\begin{equation}
\|y_1(t)-y_2(t)\|
\le
\tilde C
\|s_t - s_{t-1}\|,
\end{equation}

for some constant $\tilde C > 0$.

\paragraph{Conclusion.}

Unlike token-level MoE, SAMoE induces
a scene-level Lipschitz parameterization.
The resulting vector field is Lipschitz continuous,
and ODE stability guarantees
trajectory continuity.

Hence SAMoE preserves temporal causality
and avoids trajectory-level instability.
\begin{theorem}[Trajectory-Level Structural Disruption]
Under local routing, token-level MoE
induces parameter discontinuities (Sparse)
or oscillations (Soft).
By the local bi-Lipschitz mapping and ODE stability,
these parameter perturbations propagate
into finite trajectory deviations,
violating temporal causality
and cross-modal coordination.

In contrast, SAMoE maintains
continuous scene-level parameterization,
thereby preserving trajectory-level stability.
\end{theorem}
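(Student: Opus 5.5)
The theorem is a synthesis of the three preceding parts of this subsection, so I would prove it by assembling them in order rather than by a new argument. The only inputs are the local bi-Lipschitz property \eqref{eq:bi_lipschitz_used}, the state-Lipschitz bound with constant $L_y$ on $v_\theta(\cdot,t)$, and the two-term decomposition of $\dot e(t)$ used above: term (A) is a state difference at fixed parameter, and term (B) is a parameter difference at fixed state. The proof has three steps, one for each clause of the statement.

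\textbf{Step 1 (parameter-level perturbation).} I would establish the parameter-level claim separately for each router type.

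For Sparse MoE, the map $x\mapsto \mathrm{Top}\text{-}k(\mathrm{Router}(x))$ is piecewise constant. Take $x_t,x_t'$ on opposite sides of a switching boundary; this is possible whenever two routing logits tie on a set that the data distribution actually reaches. The realized parameters then differ by at least
\[
\delta=\min_{S\neq S'}\|\theta_S-\theta_{S'}\|>0,
\]
even though $\|x_t-x_t'\|\to 0$.

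For Soft MoE there is no discontinuity. Instead I would make an explicit hypothesis that the routing weights vary by $\|\pi_t-\pi_{t-1}\|\ge\epsilon$ on a set of positive measure. I would then transfer this to parameters via
\[
\|\theta_t-\theta_{t-1}\|\ge c'\|\pi_t-\pi_{t-1}\|.
\]
This inequality needs $c'$ to be the smallest singular value of the expert matrix $[\theta^{(1)},\dots,\theta^{(E)}]$ restricted to zero-sum directions. It is positive only when the experts are affinely independent, so I would state that assumption explicitly.

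\textbf{Step 2 (propagation to the vector field and the trajectory).} I would apply the lower bound in \eqref{eq:bi_lipschitz_used} to get $\|(B)\|\ge c\delta$ (respectively $\ge cc'\epsilon$). Combined with $y_1(0)=y_2(0)$ and continuity of $\dot e$, this gives $\|\dot e\|\ge \tfrac c2\delta$ on a short interval $[0,\tau]$.

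This is the step I expect to be the main obstacle. From $\|\dot e\|\ge \tfrac c2\delta$ one cannot conclude $\|e(t)\|\ge\int_0^t\|\dot e\|$, since the triangle inequality runs the other way. To repair it, I would fix the unit vector $u=\dot e(0)/\|\dot e(0)\|$ and use continuity to get $\langle \dot e(s),u\rangle\ge \tfrac c2\delta$ on $[0,\tau]$. Integrating the scalar quantity then gives
\[
\|e(t)\|\ge\langle e(t),u\rangle\ge \tfrac c2\delta t .
\]
The interval length $\tau$ must be chosen uniformly in terms of $L_y$, $L$, and a bound on $\partial_t v$, so that the deviation is genuinely $\Omega(\delta)$ and does not vanish.

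Running the same argument with $\theta^{(m_1)},\theta^{(m_2)}$ in place of $\theta_t,\theta_t'$ gives the cross-modal claim, provided routing differs across modalities by at least $\delta_m$.

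\textbf{Step 3 (the positive statement for SAMoE).} I would argue directly from the architecture and the upper bound.

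First, cross-modal consistency is structural: every modality uses the same merged parameter $\theta(s)$, so the cross-modal discrepancy is identically zero.

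Second, I would assume the DSE-plus-softmax map $s\mapsto\pi(s)$ is Lipschitz. Composed with the linear merge, this gives
\[
\|\theta(s_t)-\theta(s_{t-1})\|\le L_s\|s_t-s_{t-1}\|,
\]
and the upper bound in \eqref{eq:bi_lipschitz_used} then yields $\|(B)\|\le LL_s\|s_t-s_{t-1}\|$.

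Third, Grönwall applied to $\tfrac{d}{dt}\|e\|\le L_y\|e\|+LL_s\|s_t-s_{t-1}\|$ gives
\[
\|e(t)\|\le \tfrac{LL_s}{L_y}\bigl(e^{L_yt}-1\bigr)\|s_t-s_{t-1}\|,
\]
which is trajectory continuity in the sense of the Definition of temporal causality.

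Contrasting Steps 1–2 with Step 3 gives the theorem. I would state it under the explicit standing hypotheses used along the way: the reachable Top-$k$ switching boundary, the Soft-routing oscillation $\epsilon$, affine independence of the experts for $c'$, and Lipschitz continuity of the scene encoder.
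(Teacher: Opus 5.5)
Your proposal follows the paper's argument essentially step for step. For the Sparse, Soft and cross-modal cases, routing-induced parameter gaps are pushed through the bi-Lipschitz lower bound on term (B) and a short-time ODE estimate; for SAMoE, a shared $\theta(s)$, a Lipschitz scene encoder, the upper constant $L$ and Gr\"onwall give stability.

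Your two repairs are correct and genuinely needed:
\begin{itemize}
\item The paper's step $\|\int_0^t \dot e\,ds\|\ge\int_0^t\|\dot e\|\,ds$ is the triangle inequality run backwards, and your projection onto the fixed direction $u$ fixes it.
\item Boundedness of the experts only gives an upper bound on $\|\theta_t-\theta_{t-1}\|$, so $c'>0$ really does require affine independence.
\end{itemize}

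One small overstatement: you cannot get a genuine $\Omega(\delta)$ rate from $L_y$, $L$ and bounds on $\|v\|$ and $\partial_t v$ alone. Those constants only bound the drift of term (B) along the trajectory by $O(s)$, not $O(\|\theta_t-\theta_t'\|\,s)$. So the admissible $\tau$ is only of order $\delta$, and the deviation you obtain is of order $\delta^2$. That is still finite and non-vanishing as $\|x_t-x_t'\|\to 0$, which is all the theorem claims. An $\Omega(\delta)$ bound would need a mixed bound making $(y,t)\mapsto v_\theta(y,t)-v_{\theta'}(y,t)$ Lipschitz with constant proportional to $\|\theta-\theta'\|$.
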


\subsection{Convergence Advantage of SAMoE}

We analyze optimization convergence under stochastic gradient
descent (SGD). Let the training objective be

\begin{equation}
\min_\theta \mathcal L(\theta),
\end{equation}

where $\mathcal L$ is possibly non-convex.

\paragraph{Assumptions.}

(A1) $\mathcal L$ is $L$-smooth:
\begin{equation}
\|\nabla \mathcal L(\theta_1) - \nabla \mathcal L(\theta_2)\|
\le
L\|\theta_1-\theta_2\|.
\end{equation}

(A2) The stochastic gradient $g_t$ satisfies
\begin{equation}
\mathbb E[g_t|\theta_t]
=
\nabla \mathcal L(\theta_t),
\end{equation}
\begin{equation}
\mathbb E\|g_t-\nabla \mathcal L(\theta_t)\|^2
\le
\sigma^2.
\end{equation}

\paragraph{SGD Descent Inequality.}

SGD update:
\begin{equation}
\theta_{t+1}=\theta_t-\eta g_t.
\end{equation}

By $L$-smoothness,

\begin{equation}
\mathcal L(\theta_{t+1})
\le
\mathcal L(\theta_t)
-
\eta \langle \nabla \mathcal L(\theta_t), g_t\rangle
+
\frac{L\eta^2}{2}\|g_t\|^2.
\end{equation}

Taking expectation and using the variance bound yields

\begin{equation}
\mathbb E[\mathcal L(\theta_{t+1})]
\le
\mathcal L(\theta_t)
-
\frac{\eta}{2}
\|\nabla \mathcal L(\theta_t)\|^2
+
\frac{L\eta^2}{2}\sigma^2,
\end{equation}

for $\eta \le 1/L$.

Summing over $t=0,\dots,T-1$ gives

\begin{equation}
\frac{1}{T}
\sum_{t=0}^{T-1}
\mathbb E\|\nabla \mathcal L(\theta_t)\|^2
\le
\frac{2(\mathcal L(\theta_0)-\mathcal L^*)}{\eta T}
+
L\eta\sigma^2.
\label{eq:sgd_bound}
\end{equation}

Choosing $\eta = O(1/\sqrt{T})$ yields

\begin{equation}
\frac{1}{T}
\sum_{t=0}^{T-1}
\mathbb E\|\nabla \mathcal L(\theta_t)\|^2
=
O\!\left(\frac{1}{\sqrt{T}}\right)
+
O\!\left(\frac{\sigma^2}{\sqrt{T}}\right).
\end{equation}

Therefore, the convergence behavior depends critically on
the stochastic gradient variance $\sigma^2$.

\paragraph{Routing-Induced Gradient Variance.}

In MoE models, the effective parameter depends on the input:

\begin{equation}
\theta(x)
=
\sum_e \pi_e(x)\theta^{(e)}.
\end{equation}

Consequently the gradient variance decomposes as

\begin{equation}
\sigma^2
=
\sigma^2_{\text{data}}
+
\sigma^2_{\text{routing}},
\end{equation}

where $\sigma^2_{\text{routing}}$ arises from variation of
the effective parameter $\theta(x)$ within a mini-batch.

We quantify this term via parameter dispersion

\begin{equation}
\sigma^2_{\text{routing}}
\propto
\mathbb E
\|\theta(x)-\bar\theta\|^2,
\end{equation}

where $\bar\theta$ is the batch-mean parameter.

\begin{theorem}[Variance Reduction of Scene-Level Routing]
Under Assumptions (A1)-(A2), SGD satisfies the bound
\eqref{eq:sgd_bound}. Moreover, routing mechanisms with
lower parameter dispersion
$\mathbb E\|\theta(x)-\bar\theta\|^2$
lead to smaller gradient variance
$\sigma^2_{\text{routing}}$ and thus a tighter SGD bound.

In practice, the routing variance typically follows the
qualitative ordering

\begin{equation}
\sigma^2_{\text{Sparse}}
\;\gtrsim\;
\sigma^2_{\text{Soft}}
\;\gtrsim\;
\sigma^2_{\text{SAMoE}} .
\end{equation}

Therefore scene-level routing as used in SAMoE reduces
routing-induced gradient variance and improves optimization
stability.
\end{theorem}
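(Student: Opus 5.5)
The plan is to assemble the statement from two ingredients: the nonconvex SGD bound already derived in \eqref{eq:sgd_bound}, and a monotone link between parameter dispersion and gradient variance. The first part of the theorem is immediate. Under (A1)--(A2), the descent inequality holds for $\eta\le 1/L$. Taking total expectation, telescoping over $t=0,\dots,T-1$, and using $\mathcal L(\theta_T)\ge\mathcal L^*$ gives \eqref{eq:sgd_bound} verbatim.

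For the variance-reduction claim, I would write the per-sample gradient with respect to expert parameters via the chain rule through the merge $\theta(x)=\sum_e\pi_e(x)\theta^{(e)}$. This gives $g(x)=\nabla_\theta\ell(x;\theta(x))$ (times the routing Jacobian). I would then split
\[
g(x)-\nabla\mathcal L=\bigl[\nabla_\theta\ell(x;\theta(x))-\nabla_\theta\ell(x;\bar\theta)\bigr]+\bigl[\nabla_\theta\ell(x;\bar\theta)-\nabla\mathcal L\bigr].
\]
The second bracket has second moment $\sigma^2_{\text{data}}$, the variance at a common parameter. For the first bracket I would assume per-sample $L$-smoothness (the pointwise version of (A1)), which bounds it by $L\|\theta(x)-\bar\theta\|$. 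Applying $(a+b)^2\le 2a^2+2b^2$ then yields
\[
\sigma^2\le 2\sigma^2_{\text{data}}+2L^2\,\mathbb E\|\theta(x)-\bar\theta\|^2 .
\]
This makes $\sigma^2_{\text{routing}}\propto\mathbb E\|\theta(x)-\bar\theta\|^2$ precise as an upper bound. Substituting into \eqref{eq:sgd_bound} shows that the right-hand side is monotone nondecreasing in the dispersion, which gives the ``tighter bound'' claim.

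For the qualitative ordering, I would bound the dispersion of each routing scheme.
\begin{itemize}
\item \textbf{SAMoE.} $\theta$ depends only on the scene $s$, so within a sample all tokens share one parameter and the token-level dispersion term vanishes. The remaining across-sample dispersion is controlled by $L_s^2\,\mathrm{Var}(s)$ through the scene-encoder Lipschitz assumption.
\item \textbf{Soft MoE.} Dispersion is $\mathbb E\|\sum_e(\pi_e(x)-\bar\pi_e)\theta^{(e)}\|^2\le B^2\,\mathbb E\|\pi(x)-\bar\pi\|^2$, which includes token-to-token fluctuations.
\item \textbf{Sparse MoE.} Top-$k$ projects $\pi$ onto faces of the simplex. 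Vertices are extreme points, so the variance of a distribution on $\mathcal S_k$ with a given mean is typically at least that of the smooth interior weights it sparsifies.
\end{itemize}

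The main obstacle is this ordering. It is not a theorem without distributional assumptions, because a token router could in principle be nearly constant. So I would state it only under an explicit hypothesis: token-level routers have within-sample fluctuation $\mathbb E\|\pi(x)-\pi(s)\|^2\ge\kappa>0$, in the spirit of the $\varepsilon_{\text{local}}(s)$ gap of the Token-Level Routing Gap proposition. I would present ``$\gtrsim$'' as a consequence of that hypothesis via the law of total variance:
\[
\mathbb E\|\theta(x)-\bar\theta\|^2=\mathbb E\|\theta(x)-\theta(s)\|^2+\mathbb E\|\theta(s)-\bar\theta\|^2 .
\]
SAMoE retains only the second term.
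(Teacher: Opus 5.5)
Your first two ingredients are the paper's. The telescoped descent inequality gives the SGD bound. Step~1 of the paper's sketch bounds $\sigma^2_{\text{routing}}$ by $G^2\,\mathbb E\|\theta(x)-\bar\theta\|^2$ through a Lipschitz gradient, as you do. Your Young-inequality split is more careful than the paper's asserted decomposition $\sigma^2=\sigma^2_{\text{data}}+\sigma^2_{\text{routing}}$. One fix there: center the second bracket at $\mathbb E_x\nabla_\theta\ell(x;\bar\theta)$ rather than at $\nabla\mathcal L$, using that the mean minimizes $\mathbb E\|g-c\|^2$, so that it really is the variance at a common parameter. Your three bullets mirror the paper's Steps~2--4, and the paper itself never goes beyond ``typically smaller'' for the ordering.

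The gap is in the step you add to turn that ordering into a conditional statement. Your identity is the law of total variance only when $\theta(s)$ is the conditional mean $m(s)=\mathbb E[\theta_{\text{tok}}(x)\mid s]$ of the \emph{token} router. SAMoE's $\theta_{\text{SA}}(s)$ comes from a different router, so ``SAMoE retains only the second term'' silently sets $\theta_{\text{SA}}=m$. Without that coupling, your $\kappa$-hypothesis controls only the token router's within-scene term and says nothing about the two routers' between-scene terms. A token router with $m(s)$ nearly constant and within-scene jitter $\kappa$ has dispersion of order $\kappa$. A SAMoE router that tracks a scene-varying $\pi^*(s)$ can have far larger between-scene dispersion, and tracking such a $\pi^*(s)$ is exactly what the paper's representation-capacity argument asks of it. You need one of two fixes. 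One is the coupling $\pi_{\text{SA}}(s):=\mathbb E[\pi_{\text{tok}}(x)\mid s]$; then, writing $D$ for each router's dispersion $\mathbb E\|\theta-\bar\theta\|^2$, $D_{\text{tok}}=D_{\text{SA}}+\mathbb E\|\theta_{\text{tok}}(x)-\theta_{\text{SA}}(s)\|^2\ge D_{\text{SA}}$ is immediate. The other is an explicit hypothesis bounding SAMoE's between-scene spread by the token router's.

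Two smaller points. First, $\kappa$ is stated in $\pi$-space, and transferring a \emph{lower} bound to $\theta=\sum_e\pi_e\theta^{(e)}$ requires affinely independent experts: $\|\sum_e\delta_e\theta^{(e)}\|\ge c'\|\delta\|$ whenever $\sum_e\delta_e=0$. The upper bound with $B^2$ transfers for free. This condition fails at this paper's initialization, where every expert is a copy of the dense FFN, so a $\pi$-space $\kappa$ yields no $\theta$-space separation there. Second, the hypothesis does not address Sparse versus Soft at all. Your extreme-point intuition is valid only for mean-matched routers: for simplex-valued $\pi$ one has $\mathbb E[\pi\pi^\top]\preceq\mathrm{diag}(\mathbb E\pi)$, with equality iff $\pi$ is vertex-valued. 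Deterministic top-$k$, however, changes the mean and can even remove dispersion, e.g.\ top-$1$ when $\pi(x)$ fluctuates without changing its argmax. So that leg stays heuristic, as in the paper.
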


\paragraph{Proof Sketch.}

\textbf{Step 1: Variance Decomposition.}

Let the stochastic gradient be

\begin{equation}
g(x)
=
\nabla_\theta \ell(\theta(x); x).
\end{equation}

Assume the gradient is Lipschitz in $\theta$:

\begin{equation}
\|
\nabla_\theta \ell(\theta_1;x)
-
\nabla_\theta \ell(\theta_2;x)
\|
\le
G
\|\theta_1-\theta_2\|.
\end{equation}

Then routing variance satisfies

\begin{equation}
\sigma^2_{\text{routing}}
\le
G^2
\mathbb E
\|\theta(x)-\bar\theta\|^2 .
\end{equation}

Thus gradient variance is controlled by parameter dispersion.

\textbf{Step 2: Sparse MoE.}

For Top-$k$ routing,

\begin{equation}
\theta(x)
=
\sum_{e\in S_{\pi(x)}} \theta^{(e)} .
\end{equation}

Because routing decisions are discrete, the mapping
$x \mapsto \theta(x)$ is piecewise constant.
Small input perturbations can therefore cause abrupt
changes in expert selection.

In heterogeneous mini-batches spanning multiple routing
regions, such discontinuities may produce large variation
in effective parameters $\theta(x)$ across samples,
leading to increased routing variance.

\textbf{Step 3: Soft MoE.}

Soft MoE replaces discrete routing with continuous
mixture weights:

\begin{equation}
\theta(x)
=
\sum_e
\pi_e(x)\theta^{(e)} .
\end{equation}

The mapping $x\mapsto\theta(x)$ is therefore continuous,
removing jump discontinuities in parameter assignments.
Although mixture weights may still vary across tokens,
parameter dispersion is typically smaller than in sparse
Top-$k$ routing.

\textbf{Step 4: SAMoE.}

SAMoE performs routing at the scene level:

\begin{equation}
\theta(s)
=
\sum_e \pi_e(s)\theta^{(e)} .
\end{equation}

All tokens within the same scene share the same parameter.
Assume the scene encoder is Lipschitz:

\begin{equation}
\|\theta(s_i)-\theta(s_j)\|
\le
L_s
\|s_i-s_j\|.
\end{equation}

In driving scenarios, scene embeddings are typically
computed from aggregated multi-modal features (e.g.,
BEV representations or pooled token features).
Such aggregated representations exhibit substantially
lower variance across tokens within the same frame
than individual token embeddings.

Consequently parameter dispersion satisfies

\begin{equation}
\mathbb E
\|\theta(s)-\bar\theta\|^2
\le
L_s^2
\mathrm{Var}(s),
\end{equation}

which is typically smaller than token-level dispersion.


\textbf{Conclusion.}

Scene-level routing reduces intra-batch parameter
variation, leading to smaller routing-induced gradient
variance. Substituting this into the SGD bound
\eqref{eq:sgd_bound} yields a tighter optimization bound,
which empirically translates to more stable and often
faster convergence during training.

\subsection{Gradient Stability of SAMoE.}

We now analyze the intrinsic gradient stability of SAMoE.
The goal is to show that the stochastic gradient varies
in a controlled and Lipschitz manner with respect to scene variation.

\textbf{Setting.}

In SAMoE, the effective parameter is scene-dependent:
\begin{equation}
\theta(s)
=
\sum_e \pi_e(s)\theta^{(e)},
\end{equation}
where $s$ denotes the scene representation,
and all tokens within the same scene share the same $\theta(s)$.

Let the per-sample loss be $\ell(\theta(s); x)$.

\textbf{Assumptions.}

(A1) The loss gradient is Lipschitz in the parameter:
\begin{equation}
\|
\nabla_\theta \ell(\theta_1; x)
-
\nabla_\theta \ell(\theta_2; x)
\|
\le
G
\|\theta_1-\theta_2\|.
\end{equation}

(A2) The scene-conditioned parameter mapping is Lipschitz:
\begin{equation}
\|\theta(s_1)-\theta(s_2)\|
\le
L_s
\|s_1-s_2\|.
\end{equation}

\textbf{Proposition (Gradient Stability of SAMoE).}

Under (A1)-(A2), the gradient field of SAMoE satisfies
\begin{equation}
\|
\nabla_\theta \ell(\theta(s_1); x)
-
\nabla_\theta \ell(\theta(s_2); x)
\|
\le
G L_s
\|s_1-s_2\|.
\label{eq:samoe_grad_stability}
\end{equation}

\textit{Proof.}

By Assumption (A1),
\begin{equation}
\|
\nabla_\theta \ell(\theta(s_1); x)
-
\nabla_\theta \ell(\theta(s_2); x)
\|
\le
G
\|\theta(s_1)-\theta(s_2)\|.
\end{equation}

Applying Assumption (A2),
\begin{equation}
\|\theta(s_1)-\theta(s_2)\|
\le
L_s
\|s_1-s_2\|.
\end{equation}

Combining the two inequalities yields
\begin{equation}
\|
\nabla_\theta \ell(\theta(s_1); x)
-
\nabla_\theta \ell(\theta(s_2); x)
\|
\le
G L_s
\|s_1-s_2\|.
\end{equation}
\hfill $\square$

\textbf{Implications.}

Equation~\eqref{eq:samoe_grad_stability}
shows that the gradient field of SAMoE
is Lipschitz continuous with constant $G L_s$
with respect to scene variation.

This implies:

1. Small perturbations in scene representation
lead to proportionally small gradient changes.

2. The gradient does not exhibit abrupt jumps.

3. The stochastic optimization trajectory
evolves under a smooth vector field.

Consequently, SAMoE enjoys intrinsically stable
gradient dynamics during training.

\section{Algorithm}
\begin{algorithm}[htbp]
\caption{Stage I(step3) — Pretraining of World--Language Expert}
\label{alg:stage1_formula}
\begin{algorithmic}[1]

\Require Dataset $\mathcal{D}$; BEV encoder $\mathcal{E}_{\mathrm{BEV}}$; world--language transformer $\mathcal{T}$; frozen planning expert.

\For{each batch $(\mathbf{F}_{\mathrm{BEV}}, w_{1:T})\sim\mathcal{D}$}
    \State $\mathbf{z} = \mathcal{E}_{\mathrm{BEV}}(\mathbf{F}_{\mathrm{BEV}})$
    \State Mask ego-state \& action tokens: 
        \[
        \mathbf{x} = [\mathbf{z},\, \text{[MASK]}_{\text{state+action}},\, w_{1:T}]
        \]
    \State Autoregressive LM:
        \[
        p_\theta(w_t \mid w_{<t}, \mathbf{z})
        = \mathcal{T}(w_{<t}, \mathbf{z})[t]
        \]
    \State Future scene prediction:
        \[
        \hat d_{f,r} = \mathcal{T}_{\text{depth}}(\mathbf{z})[f,r]
        \]
    \State Pretraining loss:
        \[
        \mathcal{L}_{\text{LM}} 
        = -\sum_{t=1}^{T}\log p_\theta(w_t\mid w_{<t},\mathbf{z})
        \]
        \[
        \mathcal{L}_{\text{depth}}
        = 10\sum_{f=1}^{F}\lambda_f \frac{1}{R_f}
        \sum_{r=1}^{R_f} | d_{f,r} - \hat d_{f,r} |
        \]
        \[
        \mathcal{L}_{\text{pre}}
        =
        \mathcal{L}_{\text{LM}}
        +
        \mathcal{L}_{\text{depth}}
        \]
    \State $\theta \leftarrow \theta - \eta\nabla_{\theta}\mathcal{L}_{\text{pre}}$
\EndFor

\end{algorithmic}
\end{algorithm}

\begin{algorithm}[htbp]
\caption{DSE-based Routing Preparation}
\label{alg:dse_routing}
\begin{algorithmic}[1]

\Require BEV feature maps $\mathbf{F}_{\mathrm{BEV}}$

\For{each BEV feature map $\mathbf{F}_{\mathrm{BEV}}$}
    \State Construct near-field distance map $M_{\mathrm{near}}$.
    \State Predict deformable offsets
    \[
        \Delta = \mathcal{P}\!\left([\mathbf{F}_{\mathrm{BEV}}; M_{\mathrm{near}}]\right).
    \]
    \State Compute deformable scene features
    \[
        \mathbf{S}_{\mathrm{BEV}}
        = \mathrm{Norm}\!\left(
            \mathrm{Flatten}\!\left(
                \mathrm{DeformConv}(\mathbf{F}_{\mathrm{BEV}},\Delta)
            \right)
        \right).
    \]
    \State Extract routing-aware hidden states
    \[
        \mathbf{H}_{\mathrm{BEV}}
        = \mathrm{MHA}(\mathbf{Q}, \mathbf{S}_{\mathrm{BEV}}, \mathbf{S}_{\mathrm{BEV}}).
    \]
\EndFor

\end{algorithmic}
\end{algorithm}

\begin{algorithm}[htbp]
\caption{Final Training Stage of SAMoE-VLA}
\label{alg:training_pipeline}
\begin{algorithmic}[1]

\Require Pretrained world VLM; BEV encoder; MoE planning expert; dataset $\mathcal{D}$.

\For{each sample $(\mathcal{I}_n,l,S,A_t)$}

    \State Compute BEV features 
    \[
        \mathbf{F}_{\mathrm{BEV}}
        = \mathrm{SCA}(\text{img\_feats}, \text{BEV\_queries}, \text{pos}).
    \]

    \State Run DSE routing (Alg.~\ref{alg:dse_routing}) to get $\mathbf{H}_{\mathrm{BEV}}$.

    \State Tokenize language instructions $\mathbf{L}$.
    \State Encode BEV tokens  
    \[
        \mathbf{z}=\mathrm{MLP}(\mathrm{DownSampleCross}(\mathbf{F}_{\mathrm{BEV}})).
    \]

    \State World tokens: 
    \[
        \mathbf{W}^{(k)}
        =\mathrm{Pool}(\mathbf{z})+\mathbf{f}_{\mathrm{frame}}(k),\ k=1,\ldots,K.
    \]

    \State Sample $\varepsilon\!\sim\!\mathcal{N}(0,I)$, $\tau\!\sim\!U(0,1)$ and set  
    \[
        x_\tau=(1-\tau)A_t+\tau\varepsilon.
    \]

    \For{each layer in SAMoE-VLA}

        \State $\mathbf{V}_{\mathrm{WL}}=\mathrm{Norm}\big(WQKV_{\mathrm{WL}}(\mathbf{z},\mathbf{L},\mathbf{W})\big)$
        \State $\mathbf{V}_{\mathrm{P}}=\mathrm{Norm}\big(WQKV_{\mathrm{P}}(\mathbf{S},x_\tau)\big)$

        \State Concatenate values: 
        $\mathbf{V}=\mathrm{Concat}(\mathbf{V}_{\mathrm{WL}},\mathbf{V}_{\mathrm{P}})$

        \State CMCA attention: 
        $\mathbf{O}=\mathrm{softmax}(\mathbf{S}')\,\mathbf{V}$

        \State Apply WL FFN:
        \[
            \mathrm{FFN}_{\mathrm{WL}}(\mathbf{O}_{[:L_{\mathrm{WL}}]}+\mathbf{R}_{\mathrm{WL}}^{*})
        \]

        \If{layer is MoE and step is 2}
            \State Routing weights:
            \[
                \pi=\mathrm{softmax}\big(
                \mathrm{Linear}(\mathrm{MeanPool}(\mathbf{H}_{\mathrm{BEV}}))
                \big)
            \]
            \State Expert fusion:
            \[
                \tilde{\mathbf{W}}_i=\sum_e \pi_e\,\mathbf{W}_i^{(e)},\quad i=1,2,3
            \]
            $\mathrm{FFN}(\mathrm{P}) = \tilde{\mathbf{W}}$
        \EndIf
        \State Planning FFN:
        \[
            \mathrm{FFN}_{\mathrm{P}}(\mathbf{O}_{[:L_{\mathrm{P}}]}+\mathbf{R}_{\mathrm{P}}^{*})
        \]

    \EndFor

    \State Predict flow velocity $v_\theta(x_\tau,\mathcal{C},\tau)$

    \State Flow-matching loss:
    \[
        \mathcal{L}_{\mathrm{flow}}
        =\|v_\theta(x_\tau,\mathcal{C},\tau)-(A_t-\varepsilon)\|^2
    \]

    \State Update planning expert.

\EndFor

\end{algorithmic}
\end{algorithm}

\section{Challenging Scenarios: Selection from \textsc{nuScenes}}
\label{sec:challenging-scenarios}

To evaluate the behavior and safety of token-level Mixture-of-Experts (MoE) mechanisms in embodied driving, we construct three focused subsets from the \textsc{nuScenes} corpus that emphasize scene-level difficulties underrepresented in the full dataset. The selection is quantitative and metric-driven: for each sample we compute per-sample statistics (agent count, yaw rate, and minimum inter-agent distance) over the full split (N = 34,149) and apply simple, interpretable thresholds to isolate challenging examples.

\subsection{Selection criteria}
The three subsets target distinct failure modes relevant to scene-level decision-making:
\begin{itemize}
  \item \textbf{Complex intersection (high agent density).} Samples with average agent count $>40$ to capture scenes with many traffic participants and high interaction complexity.
  \item \textbf{Narrow turning (high maneuvering).} Samples with average yaw rate $>0.05\ \mathrm{rad/s}$ to capture aggressive or complex turning trajectories.
  \item \textbf{Close overtake (close proximity).} Samples with average minimum inter-agent distance $<8\ \mathrm{m}$ to capture scenarios requiring tight spacing, braking, or precise lateral control.
\end{itemize}

\subsection{Subset statistics}
Table~\ref{tab:subset_stats} summarizes the resulting subsets and the full \textsc{nuScenes} statistics used for thresholding.

\begin{table}[ht]
  \centering
  \caption{Summary statistics for the selected challenging subsets and the full \textsc{nuScenes} split. ``Samples'' denotes number of selected test cases.}
  \label{tab:subset_stats}
  \begin{tabular}{lrrrr}
    \toprule
    Subset & Samples & Avg Agent Count & Avg Yaw Rate (rad/s) & Avg Min Distance (m) \\
    \midrule
    List 1 (high-density)      & 38    & 52.105263 & 0.076436 & 6.322976 \\
    List 2 (high-yaw)          & 27    & 46.259259 & 0.191537 & 12.770937 \\
    List 3 (close-prox.)       & 42    & 33.404762 & 0.014230 & 5.857343 \\
    \textsc{nuScenes} (full)   & 34,149 & 34.149960 & 0.037195 & 8.143113 \\
    \bottomrule
  \end{tabular}
\end{table}

These subsets are deliberately rare in the full corpus: List~1 comprises $38/34149 \approx 0.1113\%$, List~2 comprises $27/34149 \approx 0.0791\%$, and List~3 comprises $42/34149 \approx 0.1230\%$. Relative to the dataset averages, the subsets deviate markedly in the intended directions: List~1 contains $\approx 1.53\times$ the mean agent count, List~2 exhibits a yaw rate of $\approx 5.15\times$ the dataset mean, and List~3’s minimum inter-agent distance is $\approx 0.72\times$ the dataset mean (a reduction of $\approx 2.29\ \mathrm{m}$). These deltas confirm that the chosen thresholds successfully isolate denser, more dynamic, and closer-proximity interactions respectively.

\subsection{Evaluation protocol}
For each subset we generate closed-loop trajectories and report the following metrics:
\begin{itemize}
  \item \textbf{L2 error:} Euclidean distance between predicted and ground-truth trajectories (m).
  \item \textbf{Collision rate:} Fraction of trajectories incurring any collision during the evaluation rollout.
  \item \textbf{Success rate:} Fraction of trajectories satisfying both L2 error $<0.4\ \mathrm{m}$ and zero collisions.
\end{itemize}
These metrics quantify tracking fidelity, safety, and task completion under challenging scene-level constraints.

\subsection{Motivation and expected value}
Typical \textsc{nuScenes} driving frames are dominated by relatively straightforward motion; therefore, aggregate evaluations can mask failure modes that emerge only under strong scene-level interactions. The targeted subsets act as a discriminative benchmark to reveal whether token-level MoE routing—originally developed for tokenized language modeling—can robustly support scene-level, safety-critical decision-making in autonomous driving. Consequently, improvements observed on these subsets provide stronger evidence that a method meaningfully enhances performance in complex, safety-relevant scenarios beyond gains on the dataset average.

\section{More Qualitative Results}
This section presents additional qualitative driving scenarios to further evaluate the scene-adaptive capability of our MoE-enabled model. These cases span diverse and safety-critical environments, including signalized intersections, narrow urban lanes, multi-lane roadways, and highly interactive scenes involving both vehicles and pedestrians. 

\textbf{Scene-adaptive behavior across diverse maneuvers.}  
As illustrated in Fig.~\ref{fig:case1}--\ref{fig:case6}, our model demonstrates consistent controllability in a wide range of maneuvers, such as waiting at a red-light queue (Fig.~\ref{fig:case1}), making safe left and right turns through narrow or constrained lanes (Fig.~\ref{fig:case2}, Fig.~\ref{fig:case3}, Fig.~\ref{fig:case6}), approaching complex multi-lane intersections (Fig.~\ref{fig:case4}), and performing lane-change maneuvers near blocked traffic (Fig.~\ref{fig:case5}). In all these settings, the predicted trajectories closely follow the ground-truth motion, avoiding collisions and adapting to the dynamic constraints of each scene.

\textbf{Effectiveness of MoE in complex interactive scenarios.}
We further compare the MoE-enabled model with the non-MoE baseline in challenging scenes exhibiting strong multi-agent interactions and structural constraints. As shown in Fig.~\ref{fig:case7}, the MoE model successfully reasons about lane geometry and traffic flows, producing a valid straight-driving trajectory, whereas the non-MoE model deviates into an invalid region. Similarly, in dense right-turn scenarios with close-proximity vehicles (Fig.~\ref{fig:case9}), the MoE model maintains accurate predictions over time, while the non-MoE baseline accumulates substantial errors. These results highlight the model’s superior ability to integrate scene understanding and trajectory prediction, especially under complex and highly interactive conditions.

\begin{figure*}[t]
    \centering
    \includegraphics[width=\textwidth]{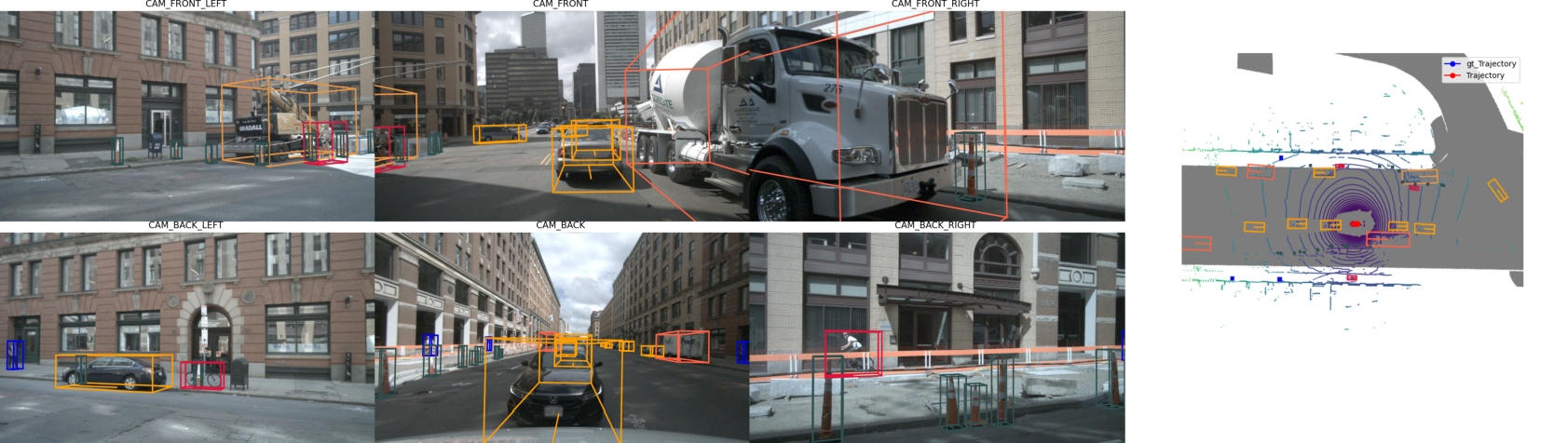}
    \caption{At a signalized urban intersection, the ego vehicle is positioned in the left lane, where traffic is stationary while waiting for the red light. A concrete mixer truck occupies the adjacent right lane and is unable to change lanes. In this scenario, our model correctly keeps the ego vehicle stationary, closely matching the ground-truth trajectory.}
    \label{fig:case1}
\end{figure*}

\begin{figure*}[t]
    \centering
    \includegraphics[width=\textwidth]{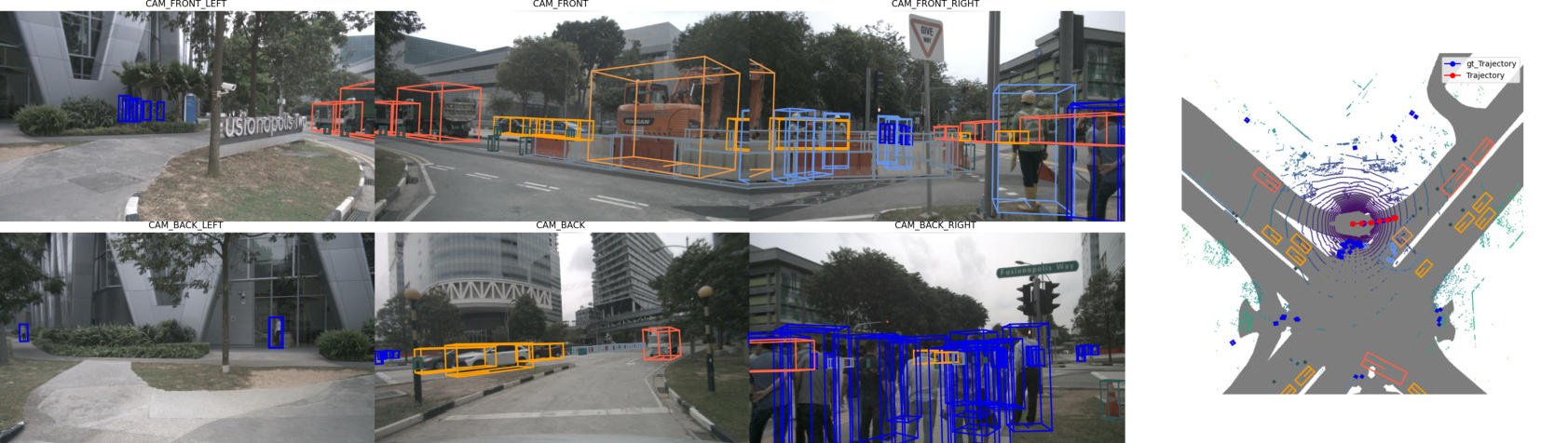}
    \caption{This case locates at an urban four-way intersection. The ego vehicle approaches a left-turn–only lane and initiates a turn into a narrow lane bordered by roadside barriers, with many pedestrians nearby. Our model executes the left turn at a cautious speed, avoiding collisions with both the barriers and pedestrians, and its predicted trajectory closely matches the ground-truth path.}
    \label{fig:case2}
\end{figure*}

\begin{figure*}[t]
    \centering
    \includegraphics[width=\textwidth]{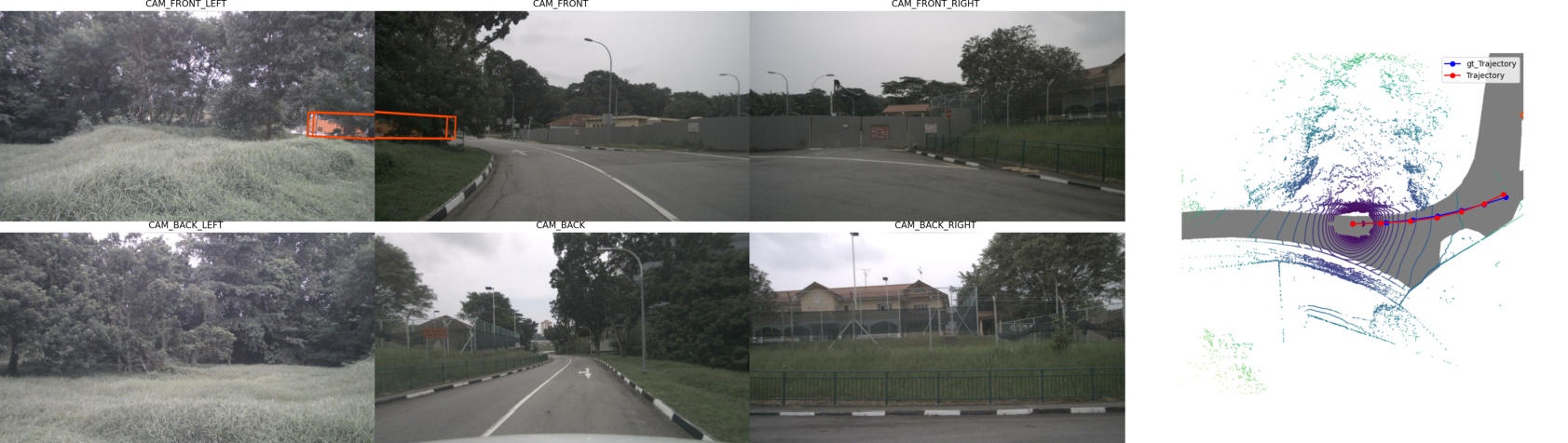}
    \caption{This case locates on a suburban two-lane road. The ego vehicle prepares to make a left turn on a narrow bidirectional roadway. In this scenario, our model successfully executes the left-turn maneuver, producing a predicted trajectory that closely matches the ground-truth path.}
    \label{fig:case3}
\end{figure*}

\begin{figure*}[t]
    \centering
    \includegraphics[width=\textwidth]{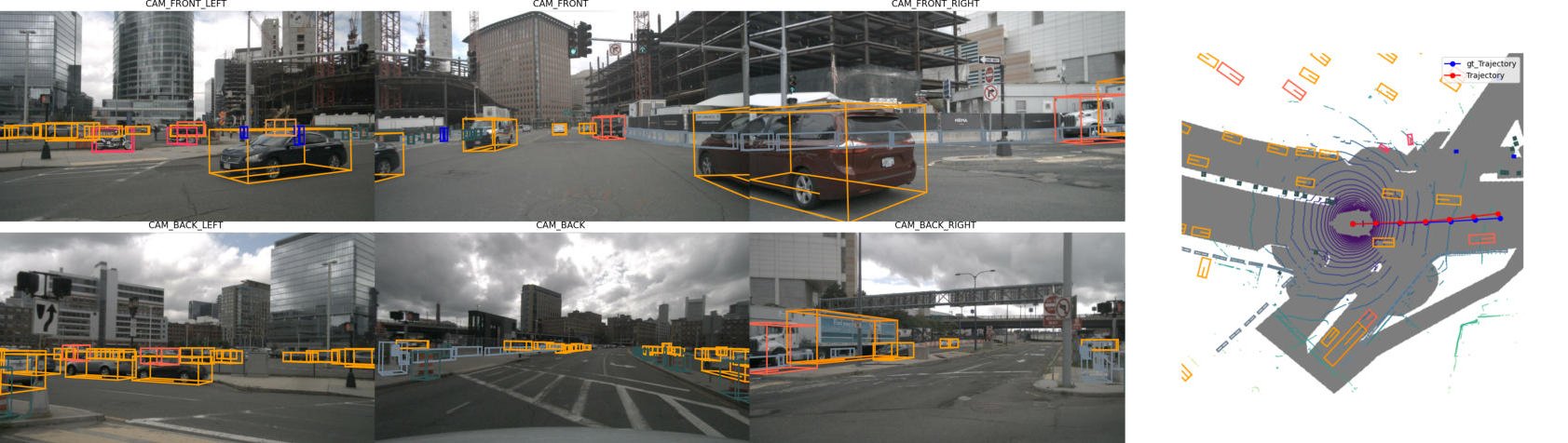}
    \caption{This case locates at a complex urban intersection. With a green traffic signal, the ego vehicle proceeds straight through the intersection while vehicles from multiple directions are either moving or waiting. In this challenging setting, our model successfully completes the straight-driving maneuver without any collisions, and its predicted trajectory closely follows the ground-truth path.}
    \label{fig:case4}
\end{figure*}

\begin{figure*}[t]
    \centering
    \includegraphics[width=\textwidth]{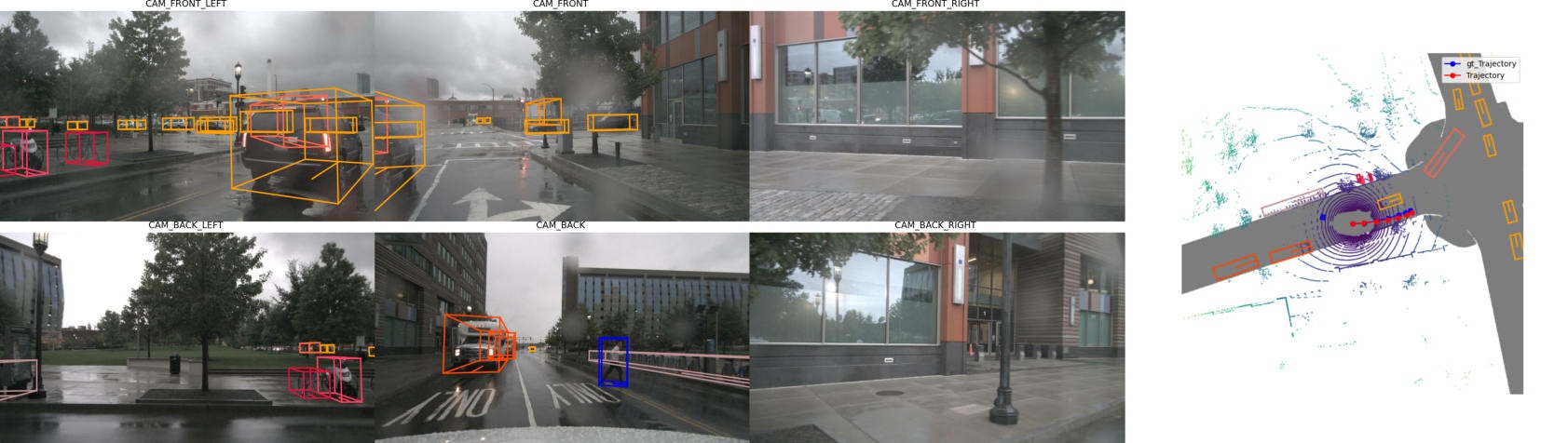}
    \caption{Case study of a lane-change maneuver near an intersection. The ego vehicle approaches a junction on a two-lane roadway, initially traveling in the left lane, which is blocked by a vehicle waiting at a red light. Two large vehicles are present behind in the adjacent right lane. In this scenario, the ego vehicle intends to merge into the right lane. Our model successfully performs the lane change, producing a predicted trajectory that closely matches the ground-truth path.}
    \label{fig:case5}
\end{figure*}

\begin{figure*}[t]
    \centering
    \includegraphics[width=\textwidth]{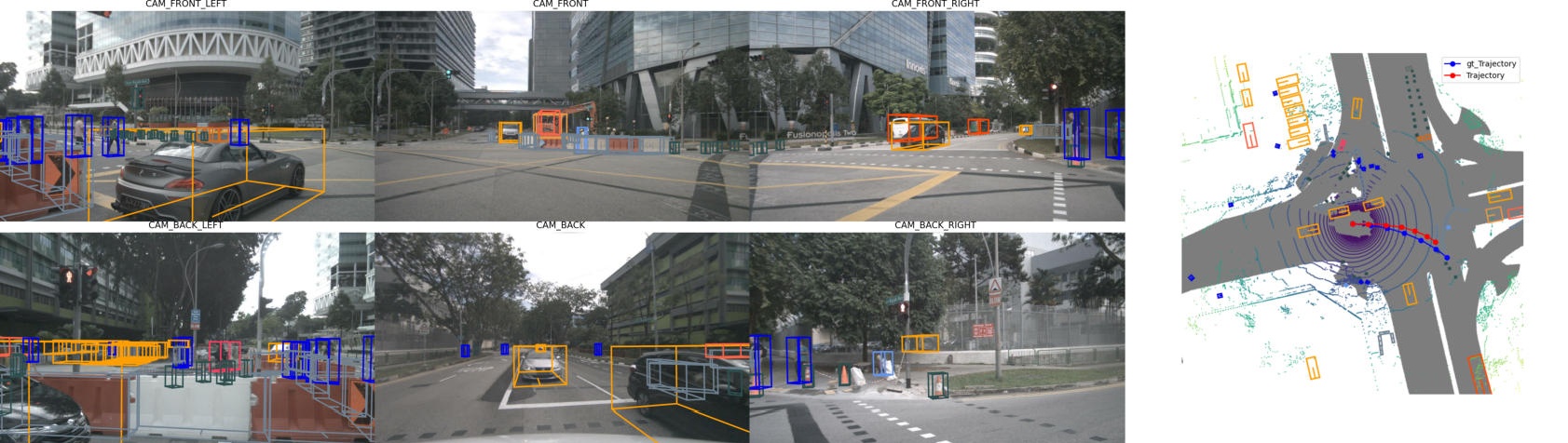}
    \caption{Case study at a large and complex urban intersection. With a green signal, the ego vehicle has passed the stop line and prepares to make a right turn into the adjacent roadway. Two nearby vehicles are present in the left lane, and additional traffic is approaching the intersection from other directions. In this scenario, our model generates a right-turn trajectory that closely follows the ground-truth path.}
    \label{fig:case6}
\end{figure*}

\begin{figure*}[t]
    \centering
    \includegraphics[width=\textwidth]{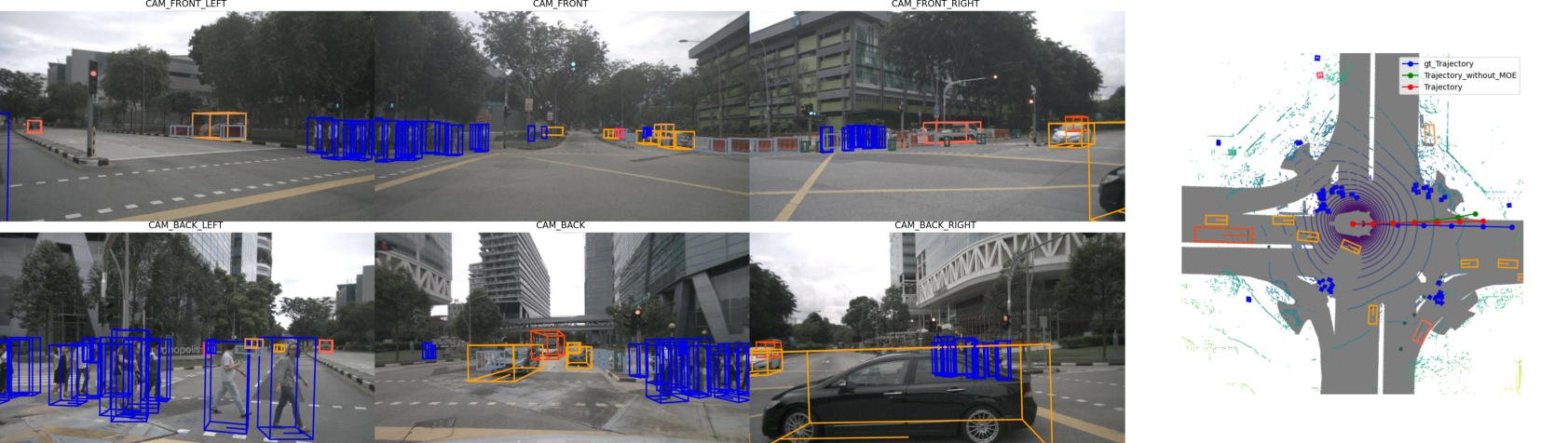}
    \caption{Case study at an urban intersection with a green signal. The ego vehicle proceeds straight through the intersection, where traffic is complex due to the presence of multiple vehicles and pedestrians, and the incoming road narrows to a single lane. We compare the trajectories generated by the MoE-enabled model and the non-MoE model. The MoE-enabled model produces a trajectory that closely matches the ground-truth path and correctly enters the target lane, whereas the non-MoE model deviates significantly and leads the vehicle into an invalid area.}
    \label{fig:case7}
\end{figure*}

\begin{figure*}[t]
    \centering
    \includegraphics[width=\textwidth]{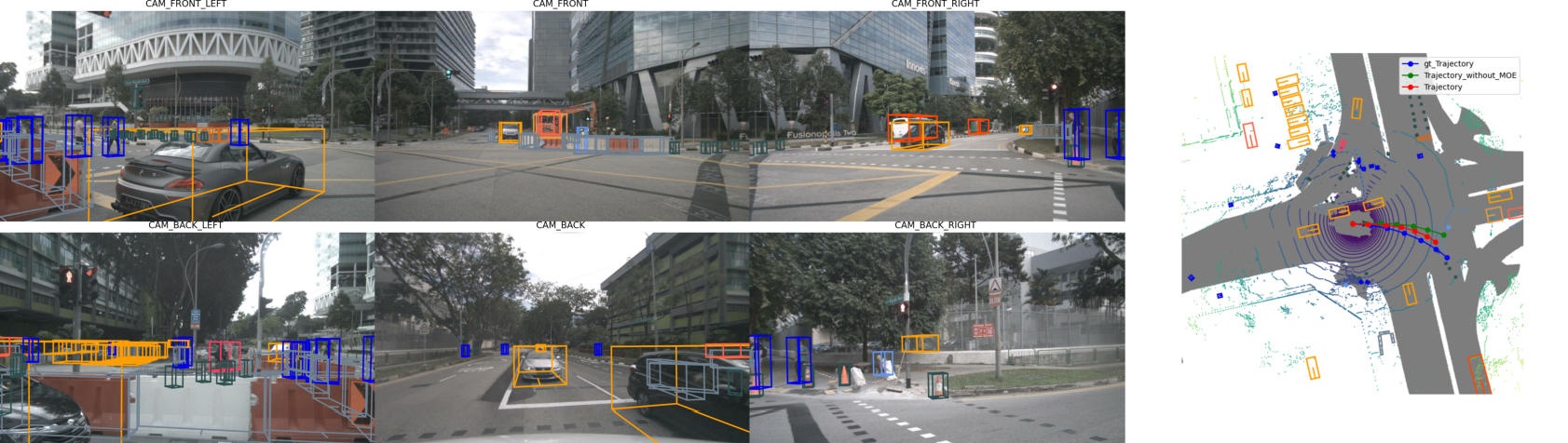}
    \caption{Case study at a complex urban intersection with many vehicles and pedestrians. The ego vehicle prepares to make a right turn, with two nearby vehicles in close proximity. In this scenario, the MoE-enabled model produces a trajectory that closely follows the ground-truth path, while the non-MoE model exhibits a larger prediction error that increases over time.}
    \label{fig:case9}
\end{figure*}

\begin{table*}[t]

\centering
\caption{Notation table for all symbols used in the SAMoE-VLA framework.}
\label{tab:mainsys}
\renewcommand{\arraystretch}{1.2}
\begin{tabular}{p{4cm} p{9.5cm}}
\hline
\textbf{Symbol} & \textbf{Description} \\
\hline

$\theta$ & Model parameter vector in flow matching. \\

$\mathbf{A} \in \{0,1\}^{B \times L \times L}$ &
Binary attention mask enforcing CMCA.\\
$\mathcal{C}$ & Index set of conditioning tokens. \\

$\mathbf{S}_{b,h,i,j}$ & Original similarity score for head $h$, sequence $b$, query $i$, key $j$. \\

$\mathbf{S}'_{b,h,i,j}$ & Masked similarity score after CMCA eager-attention masking. \\

$\mathbf{V}$ & Concatenated value states from two experts. \\

$\mathbf{O}$ & Attention output after softmax$(\mathbf{S}')\mathbf{V}$. \\

$B$ & Batch size. \\
$L$ & Total token length (conditioning tokens + action tokens). \\
$L_p$ & Token length for the planning expert (state + action). \\

\hline
\multicolumn{2}{l}{\textbf{Deformable Scene Encoder (DSE)}} \\
\hline

$\mathbf{F}_{\text{BEV}} \in \mathbb{R}^{B \times H \times W \times C}$ &
BEV feature map extracted by the BEV encoder. \\

$H, W, C$ & Spatial height, width, and feature channels of BEV map. \\

$\mathbf{M}_{\text{near}} \in [0,1]^{H \times W}$ &
Normalized distance map used to emphasize near-field regions. \\

$(c_x,c_y)$ & Ego vehicle center coordinates in BEV. \\

$\mathrm{dist}(\cdot,\cdot)$ & Euclidean distance function. \\

$K$ & Kernel size for deformable convolution. \\

$\mathbf{\Delta} \in \mathbb{R}^{B \times 2K^2 \times H_c \times W_c}$ &
Learned spatial offsets for deformable convolution sampling grid. \\

$\mathcal{P}(\cdot)$ & Convolutional predictor generating $\Delta$. \\

$\mathrm{DeformConv}(\cdot)$ & Deformable convolution operator. \\

$\mathbf{S}_{\text{BEV}}$ &
normalized scene embedding after deformable encoding. \\

$N$ & Number of flattened BEV tokens after deformable encoding. \\

$\mathbf{Q} \in \mathbb{R}^{1\times T \times C}$ &
Learnable query vectors for scene-aware routing attention. \\

$T$ & Number of routing queries. \\

$\mathbf{H}_{\text{BEV}} \in \mathbb{R}^{B \times T \times C}$ &
BEV-guided hidden representation for expert routing. \\

\hline
\multicolumn{2}{l}{\textbf{Scene-Adaptive MoE}} \\
\hline

$E$ & Number of planning experts in the MoE. \\

$\mathbf{r} \in \mathbb{R}^{B \times E}$ &
Routing logits computed from $\mathbf{H}_{\text{BEV}}$. \\

$\boldsymbol{\pi} \in \mathbb{R}^{B \times E}$ &
Softmax-normalized expert weights. \\

$\pi_e$ & Weight assigned to expert $e$ for dynamic parameter fusion. \\

$\mathbf{W}_{1}^{(e)}, \mathbf{W}_{2}^{(e)}, \mathbf{W}_{3}^{(e)}$ &
Feed-forward parameters of expert $e$. \\

$\tilde{\mathbf{W}}_i$ &
Fused FFN weights: $\tilde{\mathbf{W}}_i=\sum_{e=1}^E \pi_e \mathbf{W}_i^{(e)}$, $i \in \{1,2,3\}$. \\

$\mathbf{O}{[:L_p]}$ &
Planning-token portion of the CMCA attention output. \\

$\mathbf{R}_{\text{plan}}^*$ &
Precomputed Residual connection for planning expert. \\

$\mathbf{H}_{\mathrm{plan}}$ &
Hidden states processed by the MoE FFN. \\

$\mathbf{H}_{\mathrm{plan}}'$ &
Output hidden states after fused FFN with SiLU activation. \\

$\sigma(\cdot)$ & Activation function (SiLU). \\

$\odot$ & Element-wise multiplication. \\

\hline
\end{tabular}
\end{table*}

\begin{table*}[t]
\centering
\caption{Summary of Notation Used in Flow-Matching Implementation.}
\label{tab:flowmatchingimplementation}
\renewcommand{\arraystretch}{1.2}
\begin{tabular}{p{4cm} p{14.5cm}}
\hline
\textbf{Symbol} & \textbf{Description} \\
\hline

$\mathbf{B}\in\mathbb{R}^{H\times W\times C_b}$ 
& BEV feature map extracted from multi-view images. \\

$\mathbf{DSE}:\mathbb{R}^{C_b}\to\mathbb{R}^{D}$ 
& Deformable Scene Encoder mapping BEV to $D$-dimensional. \\

$\mathbf{H}_{\mathrm{BEV}}\in\mathbb{R}^{(HW)\times D}$ 
& Flattened BEV tokens after DSE encoding. \\

$\mathbf{h}$, $\mathbf{c}$, $\mathbf{l}$ 
& Ego-history, CAN bus readings, and low-level control features. \\

$\psi_{\mathrm{his}},\psi_{\mathrm{bus}},\psi_{\mathrm{ego}}$ 
& Linear projections for auxiliary context embeddings. \\

$\mathbf{S}\in\mathbb{R}^{L_s\times D}$ 
& Concatenated state/context tokens. \\

$\mathbf{z}$ 
& BEV latent tokens used as part of conditioning. \\

$\mathbf{L}$ 
& Language instruction tokens. \\

$\mathbf{W}$ 
& World tokens processed by world-language expert. \\

$\mathcal{C}=\{\mathbf{z},\mathbf{H}_{\mathrm{bev}},\mathbf{S},\mathbf{L},\mathbf{W}\}$ 
& Overall conditioning context for the planning expert. \\

\midrule

$\mathbf{a}\in\mathbb{R}^{K\times d_a}$ 
& Ground-truth future actions (with $d_a=2$). \\

$K$ 
& Planning horizon (6 in nuScenes). \\

$\boldsymbol{\epsilon}\sim\mathcal{N}(\mathbf{0},\mathbf{I})$ 
& Gaussian noise sampled for flow-matching interpolation. \\

$t\sim\mathrm{Beta}(1.5,1.0)$ 
& Random time used to define convex interpolation. \\

$\mathbf{x}_t=t\boldsymbol{\epsilon}+(1-t)\mathbf{a}$ 
& Noisy intermediate action at time $t$. \\

$\mathbf{u}_t=\boldsymbol{\epsilon}-\mathbf{a}$ 
& Target instantaneous velocity in flow matching. \\

$\gamma(t)\in\mathbb{R}^{D}$ 
& Sinusoidal time embedding with period range $[4\times10^{-3},4]$. \\

$\psi_{\mathrm{act}}$ 
& Linear projection transforming actions to $D$-dimensional tokens. \\

$\mathbf{E}_{\mathrm{act}}\in\mathbb{R}^{K\times D}$ 
& Action tokens derived from $\mathbf{x}_t$. \\

$\mathbf{E}_{\mathrm{time}}\in\mathbb{R}^{K\times D}$ 
& Repeated time embedding for each of the $K$ steps. \\

$\mathbf{E}_{\mathrm{suf}}\in\mathbb{R}^{K\times D}$ 
& Suffix/action–time tokens after fusion through an MLP. \\

\midrule

$\mathbf{X}\in\mathbb{R}^{(L_s+K)\times D}$ 
& Input sequence to planning expert $f_\theta$. \\

$f_\theta$ 
& Transformer-based planning expert producing hidden states. \\

$\psi_{\mathrm{out}}:\mathbb{R}^{D}\to\mathbb{R}^{d_a}$ 
& Projection head mapping hidden states to predicted velocities. \\

$v_\theta(\mathbf{x}_t,t,\mathcal{C})\in\mathbb{R}^{K\times d_a}$ 
& Model-predicted time-conditioned velocity field. \\

$\mathcal{L}_{\mathrm{FM}}$ 
& Flow-matching loss between $v_\theta$ and $\mathbf{u}_t$. \\

\midrule

$\mathbf{x}_1\sim\mathcal{N}(\mathbf{0},\mathbf{I})$ 
& Noise initialization during inference. \\

$\Delta t=-1/N$ 
& Reverse-time step for Euler ODE solver. \\

$\mathbf{x}_{t+\Delta t}=\mathbf{x}_t+\Delta t\cdot v_\theta(\mathbf{x}_t,t,\mathcal{C})$ 
& Euler integration update rule. \\

$\hat{\mathbf{a}}=\mathbf{x}_0$ 
& Final predicted trajectory returned by the ODE solver. \\

\bottomrule
\end{tabular}
\end{table*}

\section{Additional Computational Efficiency and Deployment Analysis}
\label{sec:appendix_efficiency}

To complement the planning accuracy and closed-loop evaluations presented in the main paper, we further analyze the computational efficiency and memory characteristics of the proposed Scene-Adaptive MoE (SA-MoE) under a realistic deployment setting. Considering the practical constraints of autonomous driving systems, all measurements are conducted on a \emph{single GPU} with mixed-precision inference (FP16), which reflects a single-card on-vehicle or edge deployment scenario rather than multi-GPU data-center configurations.

\subsection{Experimental Setup}

We benchmark three modules under identical configurations:
(i) a conventional token-level sparse MoE block,
(ii) the proposed scene-adaptive soft-weighted SA-MoE layer,
and (iii) a single expert block extracted from the MoE architecture for reference.
All results report average inference latency, token throughput, estimated FLOPs, and GPU memory statistics.

\subsection{Latency and Throughput Comparison}

Under identical parameter budgets ($\sim$276M parameters), the conventional sparse MoE block achieves an average inference time of $46.17$ ms with a throughput of $44{,}356$ tokens/s. In contrast, SA-MoE reduces latency to $43.89$ ms and increases throughput to $46{,}667$ tokens/s, yielding a relative speedup of $1.05\times$.

Although the parameter counts are nearly identical (276.84M vs.\ 276.83M), SA-MoE achieves substantially lower computational cost per token. The estimated FLOPs per token decrease from $6.92\times10^{7}$ (sparse MoE) to $3.51\times10^{7}$ (SA-MoE), corresponding to a $1.97\times$ reduction. This reduction arises because SA-MoE performs expert weight fusion once per layer and executes a single merged FFN, whereas token-level sparse routing activates multiple experts per token and incurs additional routing overhead.

\subsection{Memory Behavior}

A notable difference appears in memory allocation patterns. The sparse MoE block exhibits a memory increase of $47.8$ MB during inference, while SA-MoE shows only $0.2$ MB additional steady-state memory usage. The relative steady-state memory delta is thus reduced by over two orders of magnitude.

Although SA-MoE records a higher transient CUDA peak memory (due to temporary expert weight merging buffers), the post-execution memory footprint returns to the same level as the sparse MoE implementation. This behavior indicates that SA-MoE does not increase persistent deployment memory requirements, which is critical for embedded automotive hardware.



\subsection{Computational Analysis of the Deformable Scene Encoder (Router Module)}

We further isolate and benchmark the \emph{Deformable Scene Encoder (DSE)}, which serves as the scene-adaptive routing module in SA-MoE. This component is responsible for extracting BEV-conditioned routing representations and generating expert fusion weights. Since routing is computed once per forward pass and reused across transformer layers, its computational overhead is critical for real-time deployment.

Under the same single-GPU mixed-precision (FP16) setting, the DSE module contains $23.07$M parameters. The average inference latency is measured at $2.883$ ms, corresponding to a throughput of $710{,}323$ tokens per second.

The estimated total FLOPs of the DSE forward pass are $4.73\times10^{10}$, yielding approximately $2.31\times10^{7}$ FLOPs per token. Although the absolute FLOP count appears non-negligible, it is important to emphasize that DSE is executed only once per planning step and its output is reused across all SA-MoE layers. Consequently, its amortized per-layer computational contribution remains small.

In terms of memory behavior, no measurable persistent memory increase is observed (0.0 MB delta between pre- and post-execution states). The CUDA peak memory reaches $1168.3$ MB during execution and returns to $1124.3$ MB afterward, indicating that temporary buffers are released properly and that DSE introduces no additional steady-state memory burden.

These results confirm that the Deformable Scene Encoder introduces minimal latency and negligible persistent memory overhead, validating the design choice of scene-level routing. The router remains computationally lightweight relative to the full planning expert, making SA-MoE suitable for single-card autonomous driving deployment scenarios.







\subsection{Hardware-Aware Efficiency Analysis under Autonomous Driving Deployment for SA-MoE}

While Mixture-of-Experts (MoE) architectures have become a standard scaling strategy in large language models (LLMs), their design is typically optimized for large-scale distributed training and inference environments. In most LLM deployments, MoE layers are executed across multiple GPUs or nodes, where different experts are distributed across devices and token routing is implemented through collective communication primitives such as \texttt{all-to-all}. This distributed setting allows sparse expert activation to reduce the effective compute per token while maintaining large model capacity.

However, the deployment constraints of autonomous driving systems differ fundamentally from those of large-scale cloud inference. Production autonomous vehicles typically operate under strict hardware budgets and latency constraints, where perception, planning, and control modules must share limited computational resources. In practice, most autonomous driving stacks are deployed on a single high-performance GPU (e.g., automotive-grade NVIDIA platforms), rather than multi-GPU clusters. Under this hardware regime, the efficiency characteristics of conventional MoE architectures change significantly.

In token-level sparse MoE designs, such as those commonly used in LLMs, routing decisions are made independently for each token and each layer. Although only a subset of experts is activated per token, the system must still compute routing scores and dynamically gather expert outputs at the token granularity. When executed on a single GPU, this leads to several inefficiencies. First, token-wise dynamic routing introduces irregular memory access patterns and kernel fragmentation, which reduce GPU utilization. Second, repeated routing computation across layers increases control overhead. Third, the fine-grained dispatch mechanism prevents efficient batching of expert computations, limiting the throughput benefits that sparse activation aims to achieve.

Soft MoE methods partially alleviate routing instability by replacing discrete expert selection with soft weighted aggregation. Nevertheless, Soft MoE still operates at the token level and requires computing outputs for multiple experts simultaneously before aggregation. As a result, the computational cost remains close to dense MoE in practice, and the hardware efficiency improvements are limited under single-device deployment.

In contrast, the proposed \textbf{Scene-Adaptive Mixture-of-Experts (SA-MoE)} is explicitly designed with the hardware characteristics of autonomous driving platforms in mind. Instead of performing token-level routing, SA-MoE introduces a scene-level routing mechanism implemented by the Deformable Scene Encoder (DSE). The router processes the BEV scene representation once that are shared across tokens within the planning module. This design leads to several practical advantages.

First, \textbf{routing overhead is amortized across the entire sequence}. Since the routing weights are computed per scene rather than per token, the total routing computation becomes negligible relative to the expert networks themselves. As demonstrated in the empirical analysis, the DSE module introduces only $2.883$ ms of latency with $23.07$M parameters, representing a small fraction of the total MoE computation.

Second, \textbf{expert computation becomes structurally regular}. Because all tokens share the same scene-level expert mixture, the forward pass can be implemented as a deterministic weighted combination of experts rather than dynamic token dispatch. This eliminates the need for token-wise gather/scatter operations and improves GPU kernel efficiency, particularly on single-device inference.

Third, \textbf{memory behavior becomes more predictable}. Token-level sparse MoE often requires temporary buffers for token regrouping and expert batching, which increases peak memory consumption and fragmentation. In contrast, SA-MoE maintains stable memory usage since expert outputs are computed in-place and fused using precomputed scene-level weights.

These hardware-aware design choices align naturally with the computational constraints of autonomous driving systems. While traditional LLM MoE architectures assume multi-device distributed environments, SA-MoE targets the realistic deployment scenario where Mixture of Experts must run on a single GPU with strict real-time requirements.
To further understand the practical efficiency difference between conventional sparse MoE and the proposed SA-MoE under autonomous driving deployment, we analyze the computational behavior from the perspective of a typical automotive GPU platform. Modern autonomous driving systems are commonly deployed on embedded GPU accelerators such as the NVIDIA Orin series, which provides approximately $275$ TOPS of AI compute with $32$ GB LPDDR5 memory bandwidth of roughly $204$ GB/s. In contrast to large-scale LLM serving infrastructure, such platforms operate under a \textbf{single-device constraint}, meaning all experts and routing logic must reside on the same GPU and execute sequentially or partially concurrently.

In conventional sparse MoE architectures, the feed-forward layer is replaced with $N$ experts, and each token activates the top-$k$ experts through a routing function. Formally, the output of a sparse MoE layer can be written as
\[
y_i = \sum_{j \in \text{TopK}(g(x_i))} g_j(x_i) \, E_j(x_i),
\]
where $x_i$ denotes the $i$-th token representation, $E_j(\cdot)$ is the $j$-th expert network, and $g_j(x_i)$ is the router weight. Although only $k$ experts contribute to the final output for each token, the system must still maintain all $N$ experts in GPU memory and dynamically dispatch tokens to the selected experts.

Under single-GPU execution, this routing mechanism leads to a fundamental hardware inefficiency. Since tokens routed to different experts are typically interleaved within the sequence, the GPU must either (1) perform repeated kernel launches for each expert or (2) reorganize tokens through gather/scatter operations before batched computation. In both cases, all experts must be resident in memory and participate in the scheduling process, which introduces additional overhead in both memory bandwidth and kernel execution.

Let the hidden dimension be $d$ and the intermediate FFN dimension be $m$. The computational cost of a single expert can be approximated as
\[
\text{FLOPs}_{\text{expert}} \approx 2dm .
\]
For a sparse MoE layer with $N$ experts and top-$k$ routing, the theoretical compute per token is
\[
\text{FLOPs}_{\text{sparse}} \approx k \cdot 2dm .
\]
However, in a single-GPU environment, the effective runtime cost is closer to
\[
\text{FLOPs}_{\text{runtime}} \approx k \cdot 2dm + \mathcal{O}(N \cdot d),
\]
where the second term represents routing, token dispatch, and memory movement overhead proportional to the total number of experts. As $N$ increases, this overhead becomes increasingly dominant because each expert must remain resident in GPU memory and participate in the routing pipeline.

In contrast, the proposed SA-MoE replaces token-level routing with \textbf{scene-level expert fusion}. The routing weights are computed once using the Deformable Scene Encoder (DSE), producing a scene-conditioned expert mixture
\[
y_i = \sum_{j=1}^{N} \alpha_j(s) \, E_j(x_i),
\]
where $s$ denotes the scene representation and $\alpha_j(s)$ are normalized fusion weights. Since the mixture weights are shared across tokens, expert outputs can be computed sequentially or fused efficiently without dynamic token dispatch.

Under this formulation, the routing complexity becomes independent of the token sequence length:
\[
\text{FLOPs}_{\text{router}} \approx \mathcal{O}(d_s d),
\]
where $d_s$ is the BEV scene feature dimension. In our implementation, the router module (DSE) contains only $23.07$M parameters and introduces an average latency of $2.883$ ms, which is negligible relative to the full MoE computation.

More importantly, the removal of token-level routing eliminates the gather/scatter overhead and irregular memory access patterns that degrade GPU utilization. On single-device inference hardware, this leads to more efficient kernel execution and improved throughput. Empirically, as shown in the performance analysis, SA-MoE achieves approximately $1.05\times$ lower latency and nearly $2\times$ reduction in FLOPs per token compared to the baseline sparse MoE block, while maintaining nearly identical parameter capacity.

From a system perspective, this design better aligns with the hardware constraints of embedded autonomous driving platforms. While traditional sparse MoE architectures are optimized for distributed multi-GPU inference, SA-MoE leverages scene-level routing to reduce dynamic scheduling overhead, enabling more efficient execution on a single GPU accelerator.


\section{Notation Table}
As shown in Table~\ref{tab:mainsys}, Table~\ref{tab:flowmatchingimplementation} and Table~\ref{tab:notation_worldmodeling}, we further provide notation tables to increase the readability of the paper and standardize the representation of symbols in the paper.

\begin{table*}[t]
\centering
\caption{Notation used in implementation of World Modeling.}
\label{tab:notation_worldmodeling}
\renewcommand{\arraystretch}{1.2}
\begin{tabular}{p{4cm} p{14.5cm}}
\hline
\textbf{Symbol} & \textbf{Description} \\
\hline
$\mathbf{F}_{BEV}$ &  BEV feature map. \\
$\mathrm{DownSampleCross}$ & Lightweight BEV downsampling module. \\
$\mathbf{E}^{\mathrm{ae}}_t$ & Reconstruction target for autoencoding regularization. \\[3pt]

$N_q$ & Number of world queries. \\
$\mathbf{Q}$ & Pooled BEV queries for LLM input. \\
$\mathbf{f}_{\mathrm{frame}}(k)$ & Learnable soft prompt for the $k$-th future frame. \\
$\mathbf{W}^{(k)}$ & Future-aware world tokens per frame. \\
$\mathbf{E}^{\mathrm{llm}}_t$ & LLM-refined current BEV. \\
$\mathbf{Q}^{(k)}$ & Future latent queries returned by LLM. \\
$\ell_{\mathrm{chat}}$ & LLM text output. \\
$\widetilde{\mathbf{E}}^{\downarrow}_{t+k}$ & Coarse synthesized BEV for future frame $t+k$. \\
$\mathbf{E}^{\downarrow}_{t+k}$ & Ego-attention refined future BEV tokens. \\[3pt]

$\mathbf{V}_{t+k}$ & Lifted 3D voxel features before convolution. \\
$\mathbf{U}_{t+k}$ & Unified 3D voxel grid for SDF rendering. \\
$Z$ & Vertical voxel bins. \\
$\Delta=(\Delta_x,\Delta_y,\Delta_z)$ & Voxel size. \\
$\mathrm{pc\_range}$ & 3D coverage range of the voxel grid. \\[3pt]

$P_{t+k}$ & Ground-truth point cloud for frame $t+k$. \\
$\mathcal{R}_{t+k}$ & Set of LiDAR rays for frame $t+k$. \\
$\mathbf{o}_n,\mathbf{d}_n,r_n$ & Ray origin, direction, and ground-truth range. \\[3pt]

$\mathbf{p}_i$ & Sampled 3D point along a ray. \\
$\mathbf{f}_i$ & Interpolated voxel feature at $\mathbf{p}_i$. \\
$s_i$ & Predicted SDF value at sample $i$. \\
$\phi_{\mathrm{SDF}}$ & MLP predicting SDF. \\
$\sigma_t(\cdot)$ & Learnable sigmoid modulation in SDF rendering. \\
$\alpha_i$ & Opacity between adjacent samples. \\
$T_i$ & Ray transmittance. \\
$w_i$ & Accumulated rendering weight. \\
$\widehat{r}_n^{(k)}$ & Rendered depth for frame $t+k$. \\
$\widehat{\mathbf{x}}_n^{(k)}$ & Reconstructed 3D point. \\
$s$ & Global scale factor aligning SDF metric to LiDAR. \\[3pt]

$\mathcal{X}^{(k)},\widehat{\mathcal{X}}^{(k)}$ & Ground-truth and predicted point sets. \\
$\mathcal{L}_{\mathrm{depth}}$ & Per-ray depth regression loss. \\
$\mathcal{L}_{\mathrm{chamfer}}$ & Set-level Chamfer distance. \\
$\mathcal{L}_{\mathrm{render}}$ & Auxiliary rendering losses. \\
$\mathcal{L}_{\mathrm{chat}}$ & LLM conversational supervision. \\
$\lambda_{\mathrm{*}}$ & Loss weights. \\
\bottomrule
\end{tabular}
\end{table*}

\end{document}